\DeclareMathOperator*{\argmax}{arg\,max}
\definecolor{cBlue}{HTML}{1852CC}
\definecolor{cBlue2}{HTML}{3fb8b8}
\definecolor{cBlue3}{HTML}{02ffff}
\definecolor{cBlue4}{HTML}{00b5ff}
\definecolor{cRed}{HTML}{D62728}
\definecolor{cRed2}{HTML}{ED521F}
\definecolor{cRed3}{HTML}{F69C40}
\definecolor{cRed4}{HTML}{fa4d3a}
\definecolor{cGreen}{HTML}{2CA02C}
\definecolor{cGreen2}{HTML}{3fdf3f}
\definecolor{cPink}{HTML}{ED1FD2}
\definecolor{cWhite}{HTML}{ffffff}
\definecolor{Violet}{HTML}{b05cff}
\definecolor{Gray}{gray}{0.9}
\newcommand{\cmark}{\ding{51}}
\newcommand{\xmark}{\ding{55}}
\newcommand{\sftype}[1]{{\textsf{\small #1}}}
\theoremstyle{plain}
\newtheorem{theorem}{Theorem}[section]
\theoremstyle{definition}
\theoremstyle{remark}
\icmltitlerunning{Active Label Correction for Semantic Segmentation with Foundation Models}
\begin{document}

\twocolumn[
\icmltitle{
Active Label Correction for Semantic Segmentation with Foundation Models
}




\icmlsetsymbol{equal}{*}

\begin{icmlauthorlist}
\icmlauthor{Hoyoung Kim}{gsai}
\icmlauthor{Sehyun Hwang}{cse}
\icmlauthor{Suha Kwak}{gsai,cse}
\icmlauthor{Jungseul Ok}{gsai,cse}
\end{icmlauthorlist}

\icmlaffiliation{gsai}{Graduate School of AI, POSTECH, Pohang, Republic of Korea}
\icmlaffiliation{cse}{Department of CSE, POSTECH, Pohang, Republic of Korea}

\icmlcorrespondingauthor{Jungseul Ok}{jungseul@postech.ac.kr}

\icmlkeywords{Machine Learning, ICML}

\vskip 0.3in
]



\printAffiliationsAndNotice{}  

\begin{abstract}
Training and validating models for semantic segmentation require datasets with pixel-wise annotations, which are notoriously labor-intensive. Although useful priors such as foundation models or crowdsourced datasets are available, they are error-prone. We hence propose an effective framework of $\text{{\it active label correction}}$ (ALC) based on a design of correction query to rectify pseudo labels of pixels, which in turn is more annotator-friendly than the standard one inquiring to classify a pixel directly according to our theoretical analysis and user study. Specifically, leveraging foundation models providing useful zero-shot predictions on pseudo labels and superpixels, our method comprises two key techniques: (i) an annotator-friendly design of correction query with the pseudo labels, and (ii) an acquisition function looking ahead label expansions based on the superpixels. Experimental results on PASCAL, Cityscapes, and Kvasir-SEG datasets demonstrate the effectiveness of our ALC framework, outperforming prior methods for active semantic segmentation and label correction. Notably, utilizing our method, we obtained a revised dataset of PASCAL by rectifying errors in 2.6 million pixels in PASCAL dataset\footnotemark.

\end{abstract}

\section{Introduction}
Semantic segmentation has seen remarkable advancements powered by deep neural networks capable of learning from huge datasets with 
dense annotations for all pixels.
However, such pixel-wise annotations 
are labor-intensive and error-prone. 
To address or bypass these challenges, 
various approaches have been studied,
including crowdsourcing systems to collect large-scale human annotations~\cite{crowston2012amazon}, 
weakly supervised learning methods to train models with image-wise annotations~\cite{Ru_2023_CVPR},
and foundation models capable of useful 
zero-shot prediction on 
superpixels~\cite{Kirillov_2023_ICCV} or even semantic segmentation~\cite{liu2023grounding}.
However, those are unreliable to train and more importantly validate models for exquisite or domain-specific
prediction.
For instance, 
despite recent advances,
the zero-shot prediction with foundation models 
\cite{Kirillov_2023_ICCV, liu2023grounding}
is considerably erroneous
as demonstrated in Table~\ref{tab:grounded-threshold}.
This can be more problematic when the semantic segmentation 
requiring expertise such as medical knowledge~\cite{ma2024segment}.



Hence, we consider the problem of active label correction (ALC) to construct a reliable pixel-wise dataset from an unreliable or unlabeled dataset with a minimum cost of user intervention.
To this end, we propose an ALC framework 
which leverages foundation models
and correction queries.
Our correction query is designed to rectify the pseudo labels of pixels, only if these pseudo labels are incorrect.
Unlike the standard classification query that directly requests a specific class~\cite{cai2021revisiting,Kim_2023_ICCV},
our correction query allows annotators to skip labeling if the pseudo labels are correct, making it more annotator-friendly.
Borrowing the information-theoretic annotation cost~\cite{hu2020one}, we prove that our correction query is less costly than the classification query.
Moreover, our user study in Section \ref{sec:user-study} reveals that the correction query is faster to complete than the classification query in practice.

\begin{figure*}[!t]
    \centering
    \begin{subfigure}[h!]{.245\linewidth}
        \centering
        \includegraphics[scale=0.185]{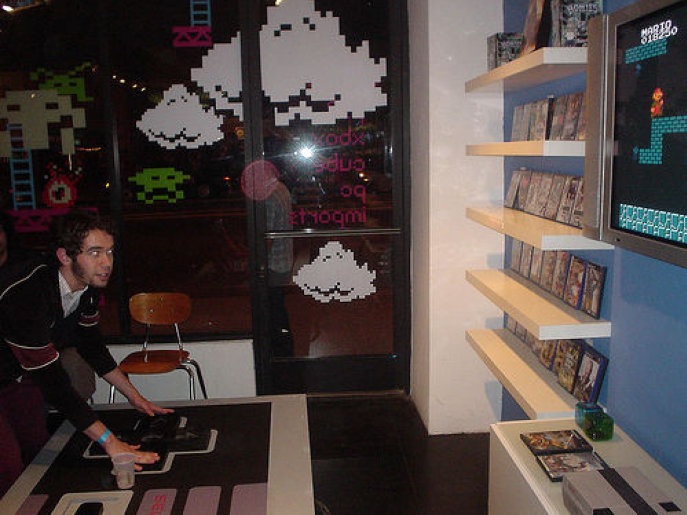}
    \end{subfigure}
    \begin{subfigure}[h!]{.245\linewidth}
        \centering
        \includegraphics[scale=0.402]{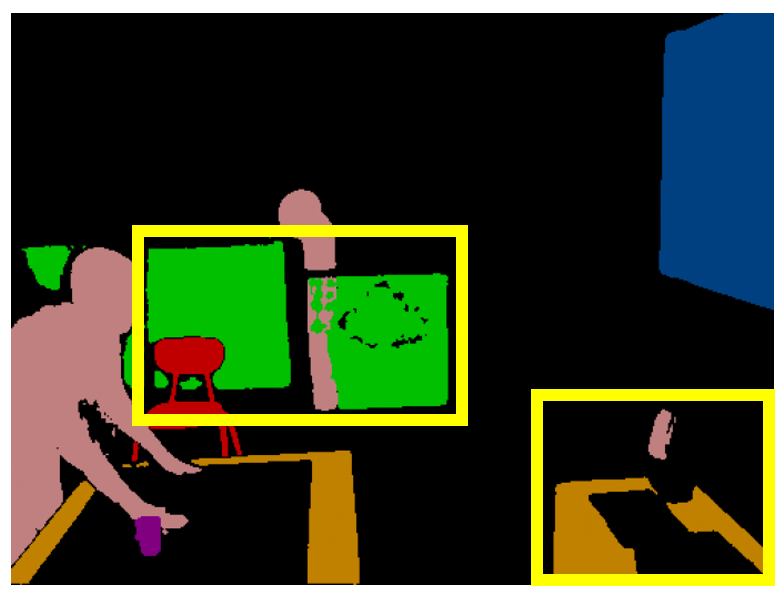}
    \end{subfigure}
    \begin{subfigure}[h!]{.245\linewidth}
        \centering
        \includegraphics[scale=0.402]{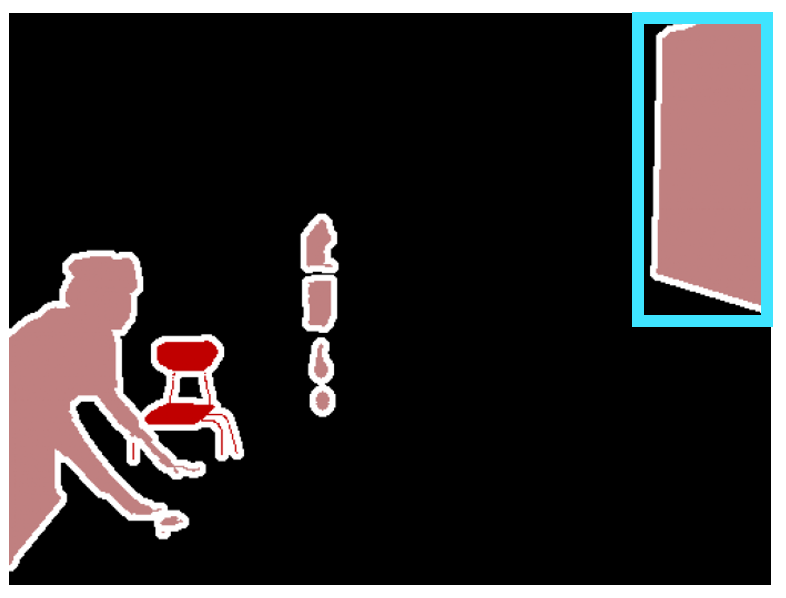}
    \end{subfigure}
    \begin{subfigure}[h!]{.245\linewidth}
        \centering
        \includegraphics[scale=0.402]{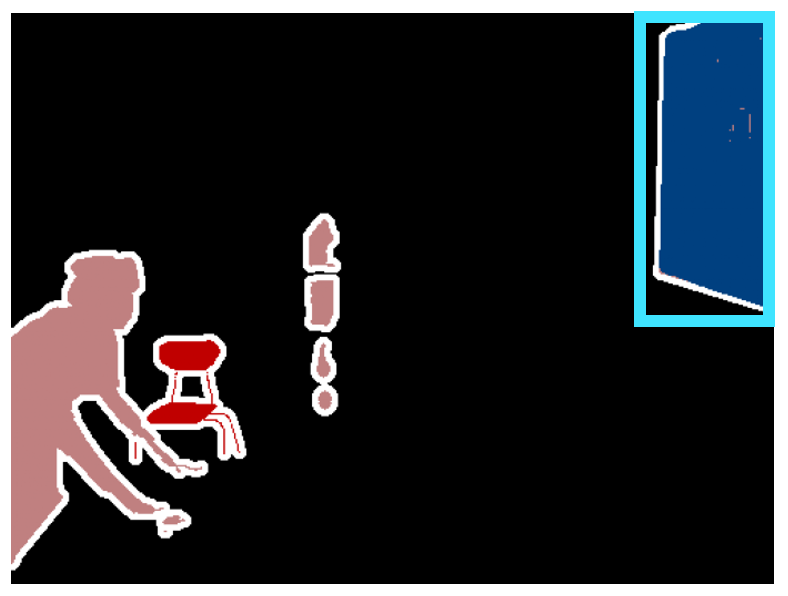}
    \end{subfigure}
    
    \begin{subfigure}[h!]{.245\linewidth}
        \centering
        \includegraphics[scale=0.231]{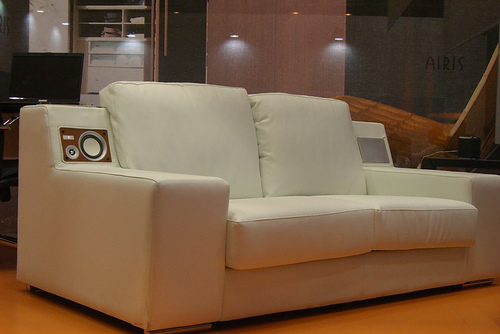}
        \caption{Unlabeled image}
        \label{fig:org-images}
    \end{subfigure}
    \begin{subfigure}[h!]{.245\linewidth}
        \centering
        \includegraphics[scale=0.4067]{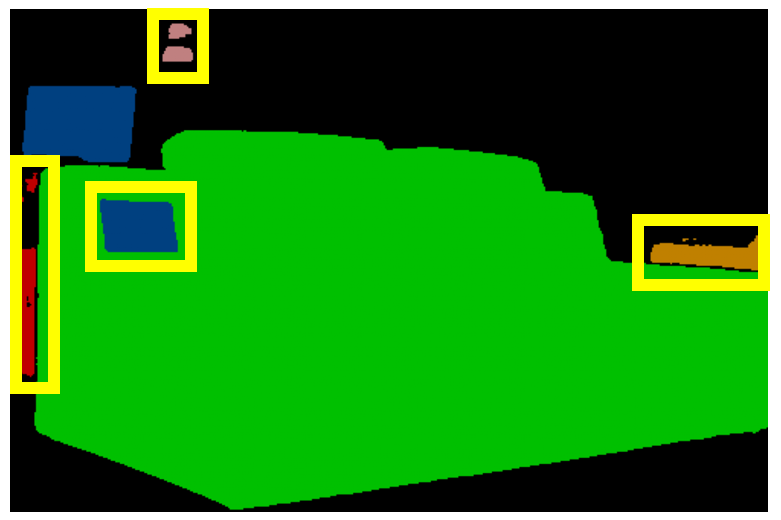}
        \caption{Grounded-SAM}
        \label{fig:Grounded SAM-images}
    \end{subfigure}
    \begin{subfigure}[h!]{.245\linewidth}
        \centering
        \includegraphics[scale=0.4035]{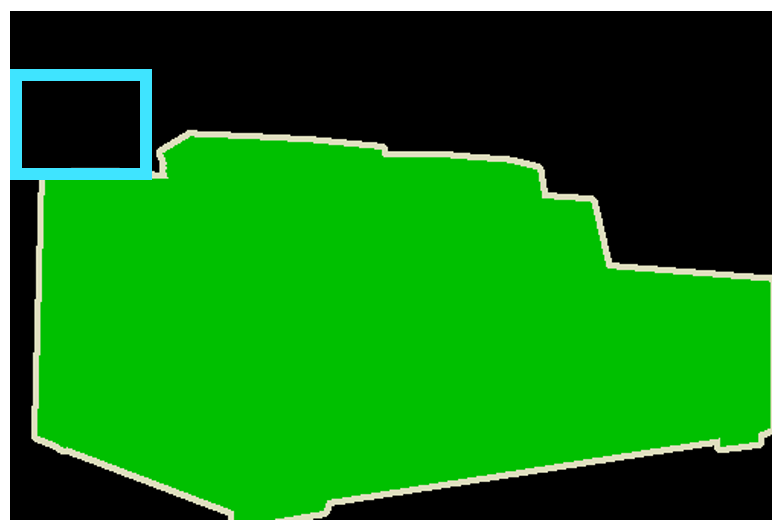}
        \caption{PASCAL}
        \label{fig:org-labels}
    \end{subfigure}
    \begin{subfigure}[h!]{.245\linewidth}
        \centering
        \includegraphics[scale=0.403]{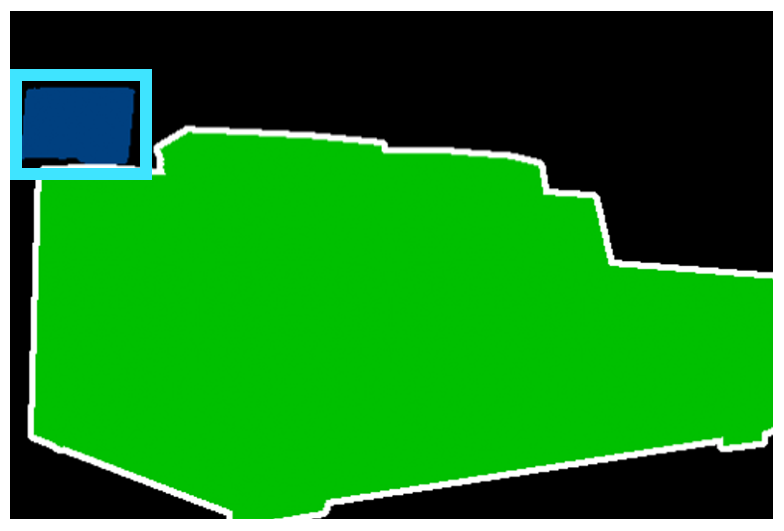}
        \caption{PASCAL+ (ours)}
        \label{fig:pascal+}
    \end{subfigure}
    \caption{{\em Examples of noisy and corrected labels in PASCAL.} (a, b) Initial pseudo labels are generated by applying Grounded-SAM (G-SAM) to unlabeled images. As depicted by the yellow boxes, noisy pseudo labels result in a decline in performance, as shown in Table~\ref{tab:grounded-threshold}. (c) PASCAL also contains noisy labels in cyan boxes. (d) By employing the superpixels from G-SAM, we construct a corrected version of PASCAL, called PASCAL+. For instance, in the first row, we correct the object labeled as person to tvmonitor, and in the second row, the object labeled as background to tvmonitor. Here, the colors black, blue, red, green, and pink represent the background, tvmonitor, chair, sofa, and person classes, respectively.}
    \label{fig:pascal-pascal-plus}
\end{figure*}

Specifically, we leverage useful zero-shot predictions on pseudo labels and superpixels from foundation models.
These pseudo labels are employed in our correction query to designate pixel labels.
They also allow us to warm-start, avoiding the typical cold-start problem that comes from the absence of a reliable way to evaluate data at the beginning of active learning~\cite{mahmood2021low,chen2023making}.
Furthermore, we fully enjoy the decent superpixels to solve the challenges of pixel-wise queries.
Although pixel-wise queries can generate a flawless dataset, they require substantial time and memory to examine each pixel and lead to redundancy in the pixels chosen~\cite{shin2021all}.

\footnotetext{\href{https://github.com/ml-postech/active-label-correction}{https://github.com/ml-postech/active-label-correction}}

To address the problems, we devise superpixel-aware strategies across our entire framework.
Initially, we build a diversified pixel pool consisting of partial key pixels representing each image.
As superpixels cluster pixels with 
similar features~\cite{van2012seeds}, we choose one representative pixel per superpixel and add it to our pixel pool.
To solve the inefficiency of correcting each pixel individually per query, we extend the corrections from individual pixels to the entire superpixels they belong to.
Accordingly, we propose a look-ahead acquisition function, which anticipates the benefits of label expansion beforehand.

The proposed framework is notably cost-efficient in constructing clean segmentation datasets.
We evaluate it by constructing new segmentation datasets from the initial pseudo labels given by foundation models in different fields, including the medical domain.
Our ALC framework outperforms prior methods for active semantic segmentation and label correction over a range of budgets.
In particular, we highlight its practical application by enhancing the popular PASCAL dataset~\cite{pascal-voc-2012}.
We call our corrected dataset PASCSAL+, which can be widely used in the literature of semantic segmentation.

Our main contributions are summarized as follows:
\vspace{-3mm}
\begin{itemize}
    \item We provide theoretical and empirical justifications 
    on the efficacy of the correction query, compared to the
    classification query (Section~\ref{sec:correction-query} and~\ref{sec:user-study}). 
    \item We propose an active label correction framework, 
     leveraging the correction query and foundation models,
     where the look-ahead acquisition function 
     enables selecting informative and diverse pixels to be corrected (Section~\ref{sec:diversified-pixel-pool} and \ref{sec:look-ahead}).
    \item To achieve comparable performance with SOTA active semantic segmentation methods, we only use 33\% to 50\% of budgets on various datasets (Section~\ref{sec:exp-main}). 
    \item Using the proposed framework, 
    we correct 2.6 million pixel labels in PASCAL
    and provide a revised version, called PASCAL+ (Section~\ref{sec:exp-pp}).
\end{itemize}

\section{Related Work}

\noindent\textbf{Active Learning for Segmentation.}
Active Learning (AL)~\cite{kim2023saal,saran2023streaming,yang2023towards} aims at increasing labeling efficiency by selectively annotating informative subsets of data.
In semantic segmentation, previous work focuses on two aspects: the design of labeling units and acquisition functions.
In terms of labeling unit design, classical approaches explore image-based~\cite{yang2017suggestive, sinha2019variational} and patch-based~\cite{mackowiak2018cereals, casanova2019reinforced} selection.
Recently, superpixel-based approaches~\cite{siddiqui2020viewal, cai2021revisiting, hwang2023active, Kim_2023_ICCV}, are gaining attention as they only require one click for labeling each region.
In terms of acquisition functions, they generally focus on selecting uncertain regions, measured with entropy~\cite{mackowiak2018cereals, kasarla2019region}, the gap between the top-1 and the top-2 predictions~\cite{joshi2009multi, wang2016cost, cai2021revisiting,  hwang2023active, Kim_2023_ICCV}.
While conventional AL methods collect labels from scratch, the proposed method starts from the initial pseudo labels from foundation models, correcting erroneous labels.

\noindent\textbf{Noisy Label Detection.}
The studies in noisy label detection (NLD) aim to identify incorrect labels efficiently by selecting error-like samples.
In computer vision, methods for robust training toward label noise often include NLD components~\cite{natarajan2013learning, xiao2015learning, patrini2017making, han2018co, ren2018learning, song2022learning}, and recently, there is an increase in studies focusing solely on NLD~\cite{muller2019identifying, northcutt2021labelerrors}.
NLD methods for semantic segmentation aggregate pixel-wise error scores into labeling units, like an image or superpixel.
\citet{lad2023segmentation} aggregate per-pixel error scores from Confident Learning~\cite{northcutt2021confident} into per-image scores, 
while \citet{rottmann2023automated} average error scores from locally connected components sharing the same pseudo label.
Recently, the Active Label Correction (ALC)~\cite{bernhardt2022active, kim2022active} methods identify noisy labels and correct them in a classification task.
Our work is the first ALC method for semantic segmentation, correcting pixel labels and expanding them to their corresponding superpixels.


\noindent\textbf{Efficient Query Design.}
Designing a practical and cost-effective annotation query is crucial, as it directly impacts annotation budgets. 
In semantic segmentation, various approaches have been explored, including classification queries asking for a specific class~\cite{cai2021revisiting,Kim_2023_ICCV}, one-bit queries requesting yes or no responses~\cite{hu2020one}, and multi-class queries obtaining all classes in a superpixel~\cite{hwang2023active}.
Recently, there have been studies on efficiently constructing datasets using foundation models. 
For instance,~\citet{wang2023samrs} leverages these models for automated labeling in remote sensing imagery, and~\citet{qu2023abdomenatlas} focuses on building large medical datasets with them.
However, its query form is stagnant in previous query types.
By employing the initial pseudo labels from foundation models, we suggest correction queries that only request the correct label when the given pseudo label is incorrect.

\section{Active Label Correction Framework}
Given an initial noisy dataset $\mathcal{D}_0$, we consider an active label correction (ALC) scenario operating with pixel-wise labeling.
Each query to an oracle annotator requests the accurate label $y \in \mathcal{C} := \{1,2,..., C\}$ for an associated pixel $x$. 
In contrast to active learning (AL), which commences with an unlabeled image set, ALC focuses on progressively refining a labeled dataset $\mathcal{D}_0$ which may include noisy labels.
For each round $t$, we issue a batch $\mathcal{B}_t$ of $B$ queries from a pixel pool $\mathcal{X}_t$ and train a model $\theta_t$ with the corrected annotations obtained so far.

In the following, we first prepare an initial dataset for correction (Section~\ref{sec:initial-dataset-preparation}).
After that, we present a correction query that requests for rectifying pseudo labels of pixels (Section~\ref{sec:correction-query}). 
To fully enjoy the corrections, we introduce a look-ahead acquisition function, which selects from a diversified pixel pool (Section~\ref{sec:diversified-pixel-pool}), considering the effect of label expansion (Section~\ref{sec:look-ahead}).
The overall procedure is summarized in Algorithm~\ref{algorithm1}.

\subsection{Initial Dataset Preparation}
\label{sec:initial-dataset-preparation}
For ALC, an initial segmentation dataset is essential, and we can start with well-known datasets like Cityscapes~\cite{Cordts2016Cityscapes} or PASCAL VOC (PASCAL)~\cite{pascal-voc-2012}. 
However, the presence of labeled datasets may be impractical in many domains.
Employing AL is one method for preparing labeled datasets.
However, AL typically builds datasets through random pixel~\cite{shin2021all} or superpixel labeling~\cite{cai2021revisiting} leading to lots of budgets and rounds, as it starts from unlabeled images, commonly known as the cold-start problem~\cite{mahmood2021low}.
Away from conventional AL methods, we utilize recent foundation models to construct segmentation datasets.

Recently, foundation models for zero-shot segmentation have been emerged.
For example, Grounded-SAM, a fusion of Grounding DINO~\cite{liu2023grounding}
and Segment Anything Model~\cite{Kirillov_2023_ICCV}
is capable of detecting and segmenting objects based on text prompts.
Each class is identified with its own text prompt, and we can obtain the initial pseudo labels by using a series of $|\mathcal{C}|$ text prompts, one for each class.
We solve the problem of multi-classes in object detection by giving each object the most likely class as a pseudo-label.
Figures~\ref{fig:org-images} and \ref{fig:Grounded SAM-images} display examples of the unlabeled images in PASCAL and corresponding initial pseudo labels generated by Grounded-SAM.

\begin{algorithm}[t!]
\caption{Proposed Framework}
\begin{algorithmic}[1]
\REQUIRE 
Batch size $B$, and final round~$T$. 
\STATE Prepare initial dataset $\mathcal{D}_0$ requiring label correction
\STATE Obtain model $\theta_0$ training with $\mathcal{D}_0$ via~\eqref{eq:ce}
\FOR{$t = 1, 2, \dots, T$}
    \STATE Construct diversified pixel pool $\mathcal{X}^d_t$ via~\eqref{eq:cos-sim-pixel}
    \STATE Correct labels of selected $B$ pixels $\mathcal{B}_t \subset \mathcal{X}^d_t$ via~\eqref{eq:sim}
    \STATE Expand corrected labels to corresponding superpixels
    \STATE Obtain model $\theta_{t}$ training with corrected $\mathcal{D}_{t}$ via~\eqref{eq:ce2}
\ENDFOR
\STATE \textbf{return} $\mathcal{D}_T$ and $\theta_T$
\end{algorithmic}
\label{algorithm1}
\end{algorithm}

\noindent\textbf{Warm-start.}
\label{sec:warm-start}
In contrast to the cold-start problem in AL, our ALC benefits from warm-start thanks to the initial labels provided by foundation models.
In Appendix~\ref{app:text-prompts}, detailed descriptions of text prompts for warm-start are provided.
To obtain $\theta_0$, we initialize $\theta$ to a model pre-trained on ImageNet~\cite{deng2009imagenet}.
We then train it to reduce the following cross-entropy (CE) loss:
\begin{equation}
\hat{\mathbb{E}}_{(x, y) \sim {\mathcal{D}}_0} [ \text{CE}(y, f_\theta (x))] \;,
\label{eq:ce}
\end{equation}
where $f_\theta( x ) \in \mathbb{R}^{|\mathcal{C}|}$ represents the estimated class probability for pixel $x$ by the model $\theta$.
Here, the difference lies in $\mathcal{D}_0$: AL uses only partial $y$, while ALC can access all $y$ for each pixel $x$.
However, compared to ground-truth in Figure~\ref{fig:org-labels}, the initial pseudo-labels in Figure~\ref{fig:Grounded SAM-images} contain noisy labels.
This results in negative impacts on the model's performance, as shown in Table~\ref{tab:grounded-threshold}.
Therefore, active label correction is essential for rectifying these noisy labels. 


\begin{figure}[t!]
    \centering
    \includegraphics[width=0.99\linewidth]{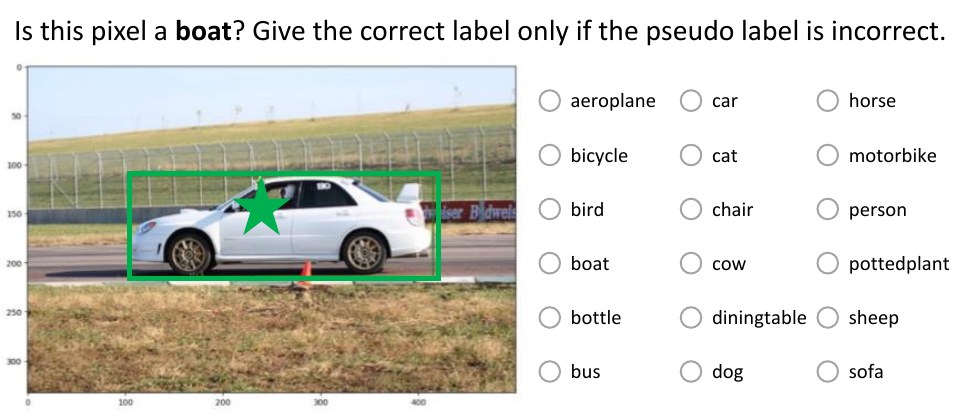}
    \caption{\textit{An example of correction query}. 
    Correction query presents an instruction requesting a label for a representative pixel (green star), an image displaying an object within a bounding box (green rectangle), and possible class options.
    }
    \label{fig:cor-query}
\end{figure}

\subsection{Correction Query}
\label{sec:correction-query}

Once we prepare the initial dataset for correction, we use our correction query to rectify the pseudo labels of pixels.
As the number of classes increases, the classification query asking for the precise label of a pixel can become more time-consuming~\cite{zhang2022one}.
In contrast, our correction query lowers the overall cost by reducing the number of classification queries needed, allowing annotators to bypass labeling when the pseudo label is already correct.
Specifically, we use the instruction with a pseudo label on a pixel, written as follows:
\begin{center}
\it Give the correct label only if the pseudo label is incorrect.
\end{center}

Figure~\ref{fig:cor-query} and Appendix~\ref{app:user-study} provide detailed descriptions of our correction query.
In the following, we information-theoretically compare 
the expected costs of classification and correction queries, denoted by 
$C_{\textnormal{cls}}$ and $C_{\textnormal{cor}}$, respectively.
\begin{theorem}
\label{the:queries}
Assume the information-theoretic annotation cost \citep{hu2020one}
of selecting one out of $L$ possible options to be $\log_2 L$. Let $L \ge 2$ be the number of classes, and $p$ be the probability that
the pseudo label is correct. Then, 
$C_{\textnormal{cls}}(L) = \log_2 L$ and $C_{\textnormal{cor}} (L, p)= p+ (1 - p) \log_2 L$.
Thus, for any $p \in [0,1]$ and $L \ge 2$, 
\begin{align} \label{eq:ratio}
1 - \frac{C_{\textnormal{cor}}(L,p)}{C_{\textnormal{cls}}(L)} = \left( 1- \frac{1}{\log_2 L} \right) p \ge 0\;.
\end{align}
\end{theorem}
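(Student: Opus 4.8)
The plan is to verify the two cost formulas directly from the information-theoretic costing assumption and then reduce the claimed identity to elementary algebra.

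First I would establish $C_{\textnormal{cls}}(L) = \log_2 L$. A classification query asks the annotator to select one out of $L$ classes with no prior information used, so by the stated costing convention its cost is exactly $\log_2 L$. Next I would compute $C_{\textnormal{cor}}(L,p)$ by conditioning on whether the pseudo label is correct. With probability $p$ the pseudo label is correct: the annotator only needs to confirm this, i.e.\ select one out of the two options ``correct / incorrect'', but more precisely the correction query in the paper lets the annotator \emph{skip} when correct, which under the one-bit accounting of \citet{hu2020one} corresponds to a cost of $\log_2 2 = 1$ bit for the binary ``is it correct?'' decision. With probability $1-p$ the pseudo label is wrong and the annotator must additionally supply the true class; supplying one of the remaining options costs $\log_2 L$ (the paper charges the full $\log_2 L$ rather than $\log_2(L-1)$, which I would keep consistent with the statement). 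Taking the expectation gives $C_{\textnormal{cor}}(L,p) = p\cdot 1 + (1-p)\log_2 L = p + (1-p)\log_2 L$, matching the theorem.

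Then the identity in \eqref{eq:ratio} is a one-line computation:
\begin{align*}
1 - \frac{C_{\textnormal{cor}}(L,p)}{C_{\textnormal{cls}}(L)}
&= 1 - \frac{p + (1-p)\log_2 L}{\log_2 L}
= \frac{\log_2 L - p - (1-p)\log_2 L}{\log_2 L}\\
&= \frac{p\log_2 L - p}{\log_2 L}
= \left(1 - \frac{1}{\log_2 L}\right) p\;.
\end{align*}
Finally, since $L \ge 2$ we have $\log_2 L \ge 1$, hence $1 - \tfrac{1}{\log_2 L} \ge 0$, and since $p \in [0,1]$ the product is nonnegative, giving the claimed inequality (with equality exactly when $L = 2$ or $p = 0$).

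The main obstacle here is not the algebra but pinning down the correct accounting for the ``skip'' branch so that the binary confirmation genuinely costs $1$ bit and the corrective branch costs $\log_2 L$ (not $1 + \log_2 L$ or $\log_2(L-1)$); I would justify this by appealing precisely to the information-theoretic cost model of \citet{hu2020one} as invoked in the hypothesis, where the amortized cost of a yes/no answer is one bit and the cost of naming one of $L$ alternatives is $\log_2 L$, and note that the correction query is exactly the composition ``one yes/no answer, followed on the `no' branch by naming the class.'' Everything else is a direct substitution.
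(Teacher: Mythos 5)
Your proof is correct and takes essentially the same route as the paper: the paper likewise interprets the correction query as a binary ($\log_2 2 = 1$ bit) question when the pseudo label is correct and an $L$-ary ($\log_2 L$) question otherwise, yielding $C_{\textnormal{cor}}(L,p) = p\log_2 2 + (1-p)\log_2 L$, with the identity \eqref{eq:ratio} left as the same elementary algebra you carried out explicitly. Your accounting of the ``skip'' branch as one bit and the corrective branch as $\log_2 L$ (not $1+\log_2 L$ or $\log_2(L-1)$) matches the paper's convention exactly.
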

\begin{proof}
The correction query can be interpreted as a binary question if the pseudo label is correct, and a $L$-ary one otherwise.
Recalling the definition of $p$ and $C_{\textnormal{cls}}(L) = \log_2 L$, we have $C_{\textnormal{cor}} (L, p)= p \log_2 2 + (1 - p) \log_2 L$.
\end{proof}
The costs of both correction and classification queries are the same if $L=2$. Indeed, those are logically identical when $L=2$.
In~\eqref{eq:ratio}, 
the cost-saving rate using 
the correction query instead on the classification one
is computed as $\left( 1- \frac{1}{\log_2 L} \right) p$,
which is increasing in $p$ and $L$.
Hence,
using the correction query is particularly beneficial 
when the number of classes is large
or the pseudo labels can be obtained accurately.
In addition, a user study on correction queries experimentally confirms their practical effectiveness in Section~\ref{sec:user-study}.

\subsection{Diversified Pixel Pool}
\label{sec:diversified-pixel-pool}
Employing pixel-wise queries is instrumental in constructing error-free segmentation datasets.
However, examining each pixel with an acquisition function requires substantial time and memory.
Furthermore, as adjacent pixels often share similar acquisition values, there exists a risk of lacking diversity in the selected pixels, i.e., pixels in a certain area of the image with high acquisition values may be picked simultaneously.
To tackle these challenges at once, we propose a diversified pixel pool $\mathcal{X}^d$, which is a subset of the total pixel set $\mathcal{X}$, as follows:
\begin{equation}
\mathcal{X}^d := \{x_1, x_2, \dots, x_{|\mathcal{S}|} \}\;,
\label{eq:div-pixel-pool}
\end{equation}
where each $x_i$ represents a key pixel from the superpixel $s_i$ within the set of superpixels $\mathcal{S}$.

Specifically, starting with a model $\theta_{t-1}$ trained on the dataset $\mathcal{D}_{t-1}$ from the previous round, we construct a diversified pixel pool $\mathcal{X}_t^d := \{ x_{t1},x_{t2}, \dots, x_{t|\mathcal{S}|} \}$ for the current round $t$.
For ease of explanation, we refer to $\theta_{t-1}$ simply as $\theta$, $x_{ti}$ as $x_i$ and $\mathcal{X}^d_t$ as $\mathcal{X}^d$.
We select a representative pixel $x_i$ from each superpixel $s_i$ based on the highest cosine similarity as:
\begin{equation}
x_i := \argmax_{x \in s_i} \frac{f_\theta(x) \cdot f_\theta(s'_i)}{\|f_\theta(x)\| \| f_\theta(s'_i) \| }\;, 
\label{eq:cos-sim-pixel}
\end{equation}
where $f_\theta(s) := \frac{\sum_{x \in s} f_\theta(x)}{|\{x : x\in s|\}}$ represents the averaged class prediction for superpixel $s$. 
To address the flaws in superpixels and ensure more uniformity of pixel labels within them, we employ a subset $s'$ rather than the complete set $s$.
We start by defining the pseudo dominant label $\text{D}_\theta(s)$, which serves as the representative label for superpixel $s$ according to model $\theta$, as follows:
\begin{equation}
\text{D}_\theta(s) :=\argmax_{c \in \mathcal{C}} | \{x \in s : y_\theta(x) = c \} |\;,
\end{equation}
where $y_{\theta}(x) := \argmax_{c \in \mathcal{C}} f_\theta(c;x)$ is the estimated label for pixel $x$ using model $\theta$.
Subsequently, we form the subset $s'$, consisting of pixels that align with the pseudo dominant label $\text{D}_\theta(s)$, as follows:
\begin{equation}
s' := \{ x \in s : y_{\theta}(x) = \text{D}_\theta(s) \} \;.
\end{equation}
After that, we select the pixel that best represents $s'$ for each superpixel based on~\eqref{eq:cos-sim-pixel}, 
contributing to the formation of a diverse pixel pool in \eqref{eq:div-pixel-pool}.
We highlight that the proposed diversified pixel pool reduces time and memory usage and lessens the redundancy issue in the chosen pixels.

\noindent\textbf{Remarks.}
While various superpixel generation algorithms~\cite{achanta2012slic,van2012seeds} can be used for $\mathcal{S}$ in~\eqref{eq:div-pixel-pool}, these standard algorithms typically group neighboring pixels based on similar inherent properties like color and maintain nearly uniform sizes.
Recent research indicates that semantically considered superpixels from a model are effective for AL in segmentation~\cite{Kim_2023_ICCV}.
Therefore, we opt to organize superpixels based on the objects identified by Grounded-SAM. 

\subsection{Look-Ahead Acquisition Function}
\label{sec:look-ahead}
Once the set of pixels $\mathcal{X}^d_t$ for examination through an acquisition function is established, we select a pixel batch $\mathcal{B}_t \subset \mathcal{X}^d_t$ of size $B$ to be corrected.
In each round $t$, we iteratively select the most informative pixel, guided by the acquisition $a(x;\theta_{t-1})$:
\begin{equation}
x^* := \argmax_{x \in \mathcal{X}^d_t} a(x;\theta_{t-1})\;.
\end{equation}
For simplicity, we refer to $\theta_{t-1}$ as $\theta$.
Recently, \citet{lad2023segmentation} propose a confidence in label (CIL), which evaluates the confidence of a given label $y$ for a pixel $x$, using the predictions of the model $\theta$ as follows:
\begin{equation}
a_{\text{CIL}}(x;\theta) := 1 - f_\theta(y;x)\;.
\label{eq:cil}
\end{equation}
The underlying assumption is that a pixel is likely mislabeled if the model demonstrates insufficient learning about that pixel's label.
However, correcting only a single pixel with each query is not only inefficient but also has minimal impact on the learning process.
To enhance the efficiency of pixel-wise query, we introduce a label expansion technique, which involves extending the corrected label of a pixel $x$ into pixels in the same superpixel $s$.

Accordingly, we suggest a look-ahead acquisition function that not only assesses the unreliability of a pixel $x$ as described in~\eqref{eq:cil}, but also takes into account the effect of label expansion into the superpixel $s$.
Here, we rename $x$ to $x_r$ as it serves as a representative pixel for $s$.
For a representative pixel $x_r$ of $s$, our acquisition function is defined as follows:
\begin{equation}
a_{\text{SIM}}(x_r; s, \theta):= \sum_{x \in s} \frac{f_\theta(x_r) \cdot f_\theta(x)}{\|f_\theta(x_r)\| \| f_\theta(x) \| } a_\text{CIL}(x;\theta) \;,
\label{eq:sim}
\end{equation}
where the cosine similarity between two feature vectors is related to the likelihood of correctly expanding the correct label of pixel $x_r$ to another pixel $x$.


We note that previous acquisitions including CIL in~\eqref{eq:cil} can be transformed easily to its look-ahead counterparts.
For instance, the look-ahead CIL (LCIL) acquisition can be defined by adjusting the weight of each pixel from the cosine similarity to the inverse of the superpixel size as:
\begin{equation}
a_{\text{LCIL}}(x_r; s, \theta):= \sum_{x \in s} \frac{1}{|s|} a_\text{CIL}(x;\theta) \;.
\label{eq:lcil}
\end{equation}
Finally, in round $t$, we select the $B$ most informative pixels from the diversified pixel pool $\mathcal{X}^d_t$ in order of SIM acquisition to form query batch $\mathcal{B}_t$.

After obtaining the clean labels of selected $B$ pixels, we expand them to the associated superpixels.
We finally construct the dataset $\mathcal{D}_t$ for round $t$ by combining the previous dataset $\mathcal{D}_{t-1}$ with the updated annotations.
Analogously to the warm-start, we initialize $\theta_t$ to a model pre-trained on ImageNet, minimizing the following CE loss:
\begin{equation}
\hat{\mathbb{E}}_{(x, y) \sim {\mathcal{D}}_t} [ \text{CE}(y, f_\theta (x))] \;.
\label{eq:ce2}
\end{equation}

\section{Experiments}
\label{sec:exp}
\subsection{Experimental Setup}

\noindent\textbf{Datasets.}
We use three semantic segmentation datasets: Cityscapes~\cite{Cordts2016Cityscapes}, PASCAL VOC 2012 (PASCAL)~\cite{pascal-voc-2012}, and Kvasir-SEG~\cite{jha2020kvasir}.
Cityscapes comprises 2,975 training and 500 validation images with 19 classes, while PASCAL consists of 1,464 training and 1,449 validation images with 20 classes.
Kvasir-SEG is a medical dataset for polyp segmentation consists of 880 training and 120 validation images with 2 classes.

\noindent\textbf{Implementation Details.}
We adopt DeepLab-v3+ architecture~\cite{chen2018encoder} with Resnet101 pre-trained on ImageNet~\cite{deng2009imagenet} as our segmentation model.
During training, we use the SGD optimizer with a momentum of 0.9 and set a base learning rate of 0.1.
We decay the learning rate by polynomial decay with a power of 0.9.
For Cityscapes, we resize training images to 768 $\times$ 768 and train a model for 30K iterations with a mini-batch size 16.
For PASCAL, we resize training images to 513 $\times$ 513 and train a model for 30K iterations with a mini-batch size 16.
For Kvasir-SEG, we resize training images to 352 $\times$ 352 and train a model for 6.3K iterations with a mini-batch size 32.
For the initial dataset generated with Grounded-SAM, we use the box threshold of 0.2 for Cityscapes and PASCAL, and 0.05 for Kvasir-SEG.


\subsection{Main Experiments}
\label{sec:exp-main}

\noindent\textbf{Baselines.}
Our Active Label Correction (\sftype{ALC}) method is compared 
with the state-of-the-art (SOTA) superpixel-based active learning (AL) methods: 
\sftype{Spx}~\cite{cai2021revisiting}, \sftype{MerSpx}~\cite{Kim_2023_ICCV}, and \sftype{MulSpx}~\cite{hwang2023active}. 
They are chosen for two reasons: (1) Their measure of labeling cost is the same as ours, i.e., the number of label clicks. (2) They are SOTA methods in AL for segmentation.
Following conventional AL methods~\cite{cai2021revisiting}, we highlight the amount of annotation used to achieve 95\% performance of the fully supervised baseline, where \textit{95\%.} denotes performance.

\noindent\textbf{Evaluation Protocol.}
Given a limited budget, we identify and fix noisy pixel labels, and expand them to the related superpixels to construct the corrected dataset.
Then, we develop a model using the dataset and evaluate its effectiveness with mean Intersection over Union (mIoU).
In all experiments, we report the average results from three trials, with graph shading indicating the standard deviation.
We access the model not only on the test dataset but also on the training dataset to calculate the quality of the dataset itself.

\begin{figure}[t!]
    \captionsetup[subfigure]{font=footnotesize,labelfont=footnotesize,aboveskip=0.05cm,belowskip=-0.15cm}
    \centering
    \hspace{-3mm}
    \begin{subfigure}{.47\linewidth}
        \centering
        \begin{tikzpicture}
            \begin{axis}[
                legend style={
                    nodes={scale=0.6}, 
                    at={(2.23, 1.28)}, 
                    /tikz/every even column/.append style={column sep=2mm},
                    legend columns=3,
                },
                xlabel={The number of clicks},
                ylabel={\% of fully supervised mIoU (\%)},
                width=1.2\linewidth,
                height=1.25\linewidth,
                ymin=76,
                ymax=97,
                xtick={3, 6, 10, 15, 20, 25},
                ytick={80, 85, 90, 95},
                xlabel style={yshift=0.15cm},
                ylabel style={yshift=-0.6cm},
                xmin=0,
                xmax=27,
                label style={font=\scriptsize},
                tick label style={font=\scriptsize},
                xticklabel={$\pgfmathprintnumber{\tick}$K}
            ]
            \addplot[cPink, very thick, mark size=2pt, mark options={solid}, dashed] coordinates {(0,-1)};
            \draw [cPink, very thick, dashed] (axis cs:-2,95) -- (axis cs:30,95);
            \addplot[cRed, very thick, mark=triangle*, mark size=2pt, mark options={solid}] table[col sep=comma, x=x, y=Spx]{Data/alc_pascal.csv};
            \addplot[orange, very thick, mark=diamond*, mark size=2pt, mark options={solid}] table[col sep=comma, x=x, y=MerSpx]{Data/alc_pascal.csv};
            \addplot[cGreen, very thick, mark=square*, mark size=2pt, mark options={solid}] table[col sep=comma, x=x, y=MulSpx]{Data/alc_pascal.csv};
            \addplot[cBlue, very thick, mark=pentagon*, mark size=2pt, mark options={solid}] table[col sep=comma, x=x, y=ALC]{Data/alc_pascal.csv};
            \addplot[cBlue2, very thick, dashed, mark=pentagon*, mark size=2pt, mark options={solid}] table[col sep=comma, x=x, y=BALC]{Data/alc_pascal.csv};

            \addplot[name path=Spx-U, draw=none, fill=none] table[col sep=comma, x=x, y=Spx-U]{Data/alc_pascal.csv};
            \addplot[name path=Spx-D, draw=none, fill=none] table[col sep=comma, x=x, y=Spx-D]{Data/alc_pascal.csv};
            \addplot[cRed, fill opacity=0.3] fill between[of=Spx-U and Spx-D];
            \addplot[name path=MerSpx-U, draw=none, fill=none] table[col sep=comma, x=x, y=MerSpx-U]{Data/alc_pascal.csv};
            \addplot[name path=MerSpx-D, draw=none, fill=none] table[col sep=comma, x=x, y=MerSpx-D]{Data/alc_pascal.csv};
            \addplot[orange, fill opacity=0.3] fill between[of=MerSpx-U and MerSpx-D];
            \addplot[name path=MulSpx-U, draw=none, fill=none] table[col sep=comma, x=x, y=MulSpx-U]{Data/alc_pascal.csv};
            \addplot[name path=MulSpx-D, draw=none, fill=none] table[col sep=comma, x=x, y=MulSpx-D]{Data/alc_pascal.csv};
            \addplot[cGreen, fill opacity=0.3] fill between[of=MulSpx-U and MulSpx-D];
            \addplot[name path=ALC-U, draw=none, fill=none] table[col sep=comma, x=x, y=ALC-U]{Data/alc_pascal.csv};
            \addplot[name path=ALC-D, draw=none, fill=none] table[col sep=comma, x=x, y=ALC-D]{Data/alc_pascal.csv};
            \addplot[cBlue, fill opacity=0.3] fill between[of=ALC-U and ALC-D];
            \addplot[name path=BALC-U, draw=none, fill=none] table[col sep=comma, x=x, y=BALC-U]{Data/alc_pascal.csv};
            \addplot[name path=BALC-D, draw=none, fill=none] table[col sep=comma, x=x, y=BALC-D]{Data/alc_pascal.csv};
            \addplot[cBlue2, fill opacity=0.3] fill between[of=BALC-U and BALC-D];
            
            \legend{95\%., Spx, MerSpx, MulSpx, ALC (ours), ALC (ours; normalized)}
            \end{axis}
        \end{tikzpicture}
        \caption{PASCAL}
    \label{fig:alc-vs-al-pascal}
    \end{subfigure}
    \hspace{1mm}    
    \begin{subfigure}{.47\linewidth}
        \centering
        \begin{tikzpicture}
            \begin{axis}[
                xlabel={The number of clicks},
                ylabel={\% of fully supervised mIoU (\%)},
                width=1.2\linewidth,
                height=1.25\linewidth,
                ymin=82,
                ymax=96.4,
                xtick={50, 100, 150, 200, 250},
                ytick={80, 85, 90, 95},
                xlabel style={yshift=0.15cm},
                ylabel style={yshift=-0.6cm},
                xmin=18,
                xmax=260,
                label style={font=\scriptsize},
                tick label style={font=\scriptsize},
                xticklabel={$\pgfmathprintnumber{\tick}$K}
            ]
            \draw [cPink, very thick, dashed] (axis cs:-2,95) -- (axis cs:300,95);
            \addplot[cRed, very thick, mark=triangle*, mark size=2pt, mark options={solid}] table[col sep=comma, x=x, y=Spx]{Data/alc_city.csv};
            \addplot[orange, very thick, mark=diamond*, mark size=2pt, mark options={solid}] table[col sep=comma, x=x, y=MerSpx]{Data/alc_city.csv};
            \addplot[cGreen, very thick, mark=square*, mark size=2pt, mark options={solid}] table[col sep=comma, x=x, y=MulSpx]{Data/alc_city.csv};
            \addplot[cBlue, very thick, mark=pentagon*, mark size=2pt, mark options={solid}] table[col sep=comma, x=x, y=ALC]{Data/alc_city.csv};
            \addplot[cBlue2, very thick, dashed, mark=pentagon*, mark size=2pt, mark options={solid}] table[col sep=comma, x=x, y=BALC]{Data/alc_city.csv};

            \addplot[name path=Spx-U, draw=none, fill=none] table[col sep=comma, x=x, y=Spx-U]{Data/alc_city.csv};
            \addplot[name path=Spx-D, draw=none, fill=none] table[col sep=comma, x=x, y=Spx-D]{Data/alc_city.csv};
            \addplot[cRed, fill opacity=0.3] fill between[of=Spx-U and Spx-D];
            \addplot[name path=MerSpx-U, draw=none, fill=none] table[col sep=comma, x=x, y=MerSpx-U]{Data/alc_city.csv};
            \addplot[name path=MerSpx-D, draw=none, fill=none] table[col sep=comma, x=x, y=MerSpx-D]{Data/alc_city.csv};
            \addplot[orange, fill opacity=0.3] fill between[of=MerSpx-U and MerSpx-D];
            \addplot[name path=MulSpx-U, draw=none, fill=none] table[col sep=comma, x=x, y=MulSpx-U]{Data/alc_city.csv};
            \addplot[name path=MulSpx-D, draw=none, fill=none] table[col sep=comma, x=x, y=MulSpx-D]{Data/alc_city.csv};
            \addplot[cGreen, fill opacity=0.3] fill between[of=MulSpx-U and MulSpx-D];
            \addplot[name path=ALC-U, draw=none, fill=none] table[col sep=comma, x=x, y=ALC-U]{Data/alc_city.csv};
            \addplot[name path=ALC-D, draw=none, fill=none] table[col sep=comma, x=x, y=ALC-D]{Data/alc_city.csv};
            \addplot[cBlue, fill opacity=0.3] fill between[of=ALC-U and ALC-D];
            \addplot[name path=BALC-U, draw=none, fill=none] table[col sep=comma, x=x, y=BALC-U]{Data/alc_city.csv};
            \addplot[name path=BALC-D, draw=none, fill=none] table[col sep=comma, x=x, y=BALC-D]{Data/alc_city.csv};
            \addplot[cBlue2, fill opacity=0.3] fill between[of=BALC-U and BALC-D];

            \end{axis}
        \end{tikzpicture}
        \caption{Cityscapes}
    \end{subfigure}
    \caption{{\em Effect of active label correction.} \sftype{ALC} shows comparable results on both datasets with much fewer clicks.
    \sftype{ALC (normalized)} reflects the reduced budget of correction queries with normalization by Theorem~\ref{the:queries}.} 
    \label{fig:alc-vs-al}
    \vspace{1mm}
\end{figure}
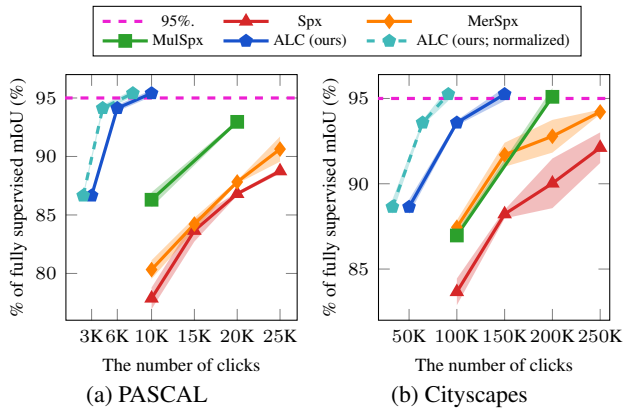

\begin{table}[t!]
\caption{{\em User study for different queries.} Our correction query $C_\textnormal{cor}$ proves to be more cost-effective compared to classification query $C_\textnormal{cls}$.}
\vspace{1mm}
\begin{center}
\begin{small}
\setlength\tabcolsep{3pt}
\centering
\begin{tabular}{l|ccc}
\toprule
Query & Total time (s) & Time per query (s) & Accuracy (\%) \\ \midrule
$C_\textnormal{cls}$ & 126.1$_{\pm 19.8}$ & 6.31$_{\pm 0.99}$ & 95.0$_{\pm 3.3}$ \\
$C_\textnormal{cor}$ & \textbf{95.1$_{\pm 9.0}$} & \textbf{4.76$_{\pm 0.45}$} & 95.0$_{\pm 4.0}$ \\
\bottomrule
\end{tabular}
\end{small}
\end{center}
\label{tab:user-study-results}
\end{table}

\noindent\textbf{Active Label Correction vs. Active Learning.}
In Figure~\ref{fig:alc-vs-al}, we show the effectiveness of our framework, named \sftype{ALC}, compared with current AL methods over various budget levels, represented by the number of clicks, for both PASCAL and Cityscapes datasets.
Due to variations in models and hyperparameters used in previous methods, we ensure a fair comparison by evaluating the percentage of fully supervised mIoU, where additional comparisons with absolute mIoU is reported in Appendix~\ref{sec:abs-alc-al}.
The results illustrate that our \sftype{ALC} substantially reduces the necessary budgets to achieve 95\% target performance.
Specifically, \sftype{ALC} achieves 95\% of the fully supervised baseline performance with just 6K clicks for PASCAL and 150K clicks for Cityscapes.
This is only 30\% and 75\% of the budget required by the previous SOTA methods, respectively.
Even when considering the efficient labeling cost of correction queries in Theorem~\ref{the:queries}, the cost of our proposed method reduces to 68\% of its original version, where $p$ in (\ref{eq:ratio}) is 0.27 and 0.5 in PASCAL and Cityscapes, respectively.
This result is denoted as \sftype{ALC (normalized)} in Figure~\ref{fig:alc-vs-al}.

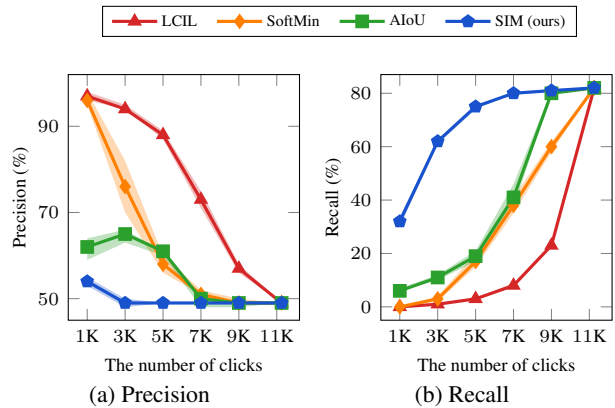
\begin{figure}[t!]
    \captionsetup[subfigure]{font=footnotesize,labelfont=footnotesize,aboveskip=0.05cm,belowskip=-0.15cm}
    \centering
    \hspace{-3mm}
    \begin{subfigure}{.47\linewidth}
        \centering
        \begin{tikzpicture}
            \begin{axis}[
                legend style={
                    nodes={scale=0.6}, 
                    at={(2.2, 1.28)}, 
                    /tikz/every even column/.append style={column sep=2mm},
                    legend columns=-1,
                },
                label style={font=\scriptsize},
                tick label style={font=\scriptsize},
                width=1.2\linewidth,
                height=1.25\linewidth,
                xlabel=The number of clicks,
                ylabel=Precision (\%),
                xmin=0, xmax=12.2,
                ymin=45, ymax=102,
                xtick={1, 3, 5, 7, 9, 11},
                xticklabel={$\pgfmathprintnumber{\tick}$K},
                ytick={50, 70, 90},
                xlabel style={yshift=0.15cm},
                ylabel style={yshift=-0.6cm},
            ]
            \addplot[cRed, very thick, mark=triangle*, mark size=2pt, mark options={solid}] table[col sep=comma, x=x, y=CIL]{Data/precision_pascal.csv};
            \addplot[orange, very thick, mark=diamond*, mark size=2pt, mark options={solid}] table[col sep=comma, x=x, y=SoftMin]{Data/precision_pascal.csv};
            \addplot[cGreen, very thick, mark=square*, mark size=2pt, mark options={solid}] table[col sep=comma, x=x, y=AIoU]{Data/precision_pascal.csv};
            \addplot[cBlue, very thick, mark=pentagon*, mark size=2pt, mark options={solid}] table[col sep=comma, x=x, y=SIM]{Data/precision_pascal.csv};

            \addplot[name path=CIL-U, draw=none, fill=none] table[col sep=comma, x=x, y=CIL-U]{Data/precision_pascal.csv};
            \addplot[name path=CIL-D, draw=none, fill=none] table[col sep=comma, x=x, y=CIL-D]{Data/precision_pascal.csv};
            \addplot[cRed, fill opacity=0.3] fill between[of=CIL-U and CIL-D];
            \addplot[name path=SoftMin-U, draw=none, fill=none] table[col sep=comma, x=x, y=SoftMin-U]{Data/precision_pascal.csv};
            \addplot[name path=SoftMin-D, draw=none, fill=none] table[col sep=comma, x=x, y=SoftMin-D]{Data/precision_pascal.csv};
            \addplot[orange, fill opacity=0.3] fill between[of=SoftMin-U and SoftMin-D];
            \addplot[name path=AIoU-U, draw=none, fill=none] table[col sep=comma, x=x, y=AIoU-U]{Data/precision_pascal.csv};
            \addplot[name path=AIoU-D, draw=none, fill=none] table[col sep=comma, x=x, y=AIoU-D]{Data/precision_pascal.csv};
            \addplot[cGreen, fill opacity=0.3] fill between[of=AIoU-U and AIoU-D];
            \addplot[name path=SIM-U, draw=none, fill=none] table[col sep=comma, x=x, y=SIM-U]{Data/precision_pascal.csv};
            \addplot[name path=SIM-D, draw=none, fill=none] table[col sep=comma, x=x, y=SIM-D]{Data/precision_pascal.csv};
            \addplot[cBlue, fill opacity=0.3] fill between[of=SIM-U and SIM-D];

            \legend{LCIL, SoftMin, AIoU, SIM (ours)}
            \end{axis}
        \end{tikzpicture}
        \caption{Precision}
        \label{fig:pascal-precision}
    \end{subfigure}
    \hspace{1mm}
    \begin{subfigure}{.47\linewidth}
        \centering
        \begin{tikzpicture}
            \begin{axis}[
                label style={font=\scriptsize},
                tick label style={font=\scriptsize},
                width=1.2\linewidth,
                height=1.25\linewidth,
                xlabel=The number of clicks,
                ylabel=Recall (\%),
                xmin=0, xmax=12.2,
                ymin=-5, ymax=87,
                xtick={1, 3, 5, 7, 9, 11},
                xticklabel={$\pgfmathprintnumber{\tick}$K},
                ytick={0, 20, 40, 60, 80},
                xlabel style={yshift=0.15cm},
                ylabel style={yshift=-0.6cm},
            ]
            \addplot[cRed, very thick, mark=triangle*, mark size=2pt, mark options={solid}] table[col sep=comma, x=x, y=CIL]{Data/recall_pascal.csv};
            \addplot[orange, very thick, mark=diamond*, mark size=2pt, mark options={solid}] table[col sep=comma, x=x, y=SoftMin]{Data/recall_pascal.csv};
            \addplot[cGreen, very thick, mark=square*, mark size=2pt, mark options={solid}] table[col sep=comma, x=x, y=AIoU]{Data/recall_pascal.csv};
            \addplot[cBlue, very thick, mark=pentagon*, mark size=2pt, mark options={solid}] table[col sep=comma, x=x, y=SIM]{Data/recall_pascal.csv};

            \addplot[name path=CIL-U, draw=none, fill=none] table[col sep=comma, x=x, y=CIL-U]{Data/recall_pascal.csv};
            \addplot[name path=CIL-D, draw=none, fill=none] table[col sep=comma, x=x, y=CIL-D]{Data/recall_pascal.csv};
            \addplot[cRed, fill opacity=0.3] fill between[of=CIL-U and CIL-D];
            \addplot[name path=SoftMin-U, draw=none, fill=none] table[col sep=comma, x=x, y=SoftMin-U]{Data/recall_pascal.csv};
            \addplot[name path=SoftMin-D, draw=none, fill=none] table[col sep=comma, x=x, y=SoftMin-D]{Data/recall_pascal.csv};
            \addplot[orange, fill opacity=0.3] fill between[of=SoftMin-U and SoftMin-D];
            \addplot[name path=AIoU-U, draw=none, fill=none] table[col sep=comma, x=x, y=AIoU-U]{Data/recall_pascal.csv};
            \addplot[name path=AIoU-D, draw=none, fill=none] table[col sep=comma, x=x, y=AIoU-D]{Data/recall_pascal.csv};
            \addplot[cGreen, fill opacity=0.3] fill between[of=AIoU-U and AIoU-D];
            \addplot[name path=SIM-U, draw=none, fill=none] table[col sep=comma, x=x, y=SIM-U]{Data/recall_pascal.csv};
            \addplot[name path=SIM-D, draw=none, fill=none] table[col sep=comma, x=x, y=SIM-D]{Data/recall_pascal.csv};
            \addplot[cBlue, fill opacity=0.3] fill between[of=SIM-U and SIM-D];
            
            \end{axis}
        \end{tikzpicture}
        \caption{Recall}
        \label{fig:pascal-recall}
    \end{subfigure}
    \caption{{\em Precision and recall comparisons.} Our \sftype{SIM} acquisition shows a high recall, indicating it corrects many noisy pixels with limited budgets.}
    \label{fig:pascal-precision-recall}
\end{figure}

\begin{table}[t!]
\caption{{\em Quality of corrected datasets.} The labels of 5K pixels from the initial datasets are corrected using different acquisition functions in the ALC framework.}
\begin{center}
\begin{small}
\setlength\tabcolsep{6pt}
\centering
\begin{tabular}{c|cc}
\toprule
Acquisition function & Data mIoU (\%) & Model mIoU (\%) \\ \midrule
LCIL & 56.59$_{\pm 0.07}$ & 56.82$_{\pm 0.05}$ \\
SoftMin & 59.28$_{\pm 0.59}$ & 58.66$_{\pm 0.89}$ \\ 
AIoU & 59.95$_{\pm 0.57}$ & 59.04$_{\pm 0.27}$ \\
SIM (ours) & \textbf{83.04$_{\pm 0.62}$} & \textbf{68.72$_{\pm 0.10}$} \\
\bottomrule
\end{tabular}
\end{small}
\end{center}
\label{tab:acq-pascal}
\end{table}

\noindent\textbf{Verification of Labeling Costs with User Study.}
\label{sec:user-study}
In Theorem~\ref{the:queries}, we prove that the labeling cost of the correction query $C_{\textnormal{cor}}$ is lower than the classification query $C_{\textnormal{cls}}$.
In Table~\ref{tab:user-study-results}, we empirically show its effectiveness with a user study conducted by 20 annotators, where they are given 20 queries with $p = 0.5$ scenarios.
Theoretically, as $L = 20$ in PASCAL, the cost ratio between the two queries is about 0.62.
In Table~\ref{tab:user-study-results}, we observe that $C_{\textnormal{cor}}$ requires 0.75 times the cost of $C_{\textnormal{cls}}$, in practice.
More details about user study are in the Appendix~\ref{app:user-study}.

\subsection{Effectiveness of Proposed Acquisition Function}
\noindent\textbf{Baselines.}
In our ALC framework, we compare our \sftype{SIM} acquisition with previous ones for detecting noisy labels in segmentation datasets, such as 
\sftype{LCIL}, \sftype{SoftMin}~\cite{lad2023segmentation}, and \sftype{AIoU}~\cite{rottmann2023automated}.
For a fair comparison, we keep all other methodologies constant, including a diversified pixel pool, lookahead strategy, and label expansion, varying only the acquisition function.



\noindent\textbf{Evaluation Protocol.}
Given a limited budget, we select unreliable pixels, correct their labels, and expand them to the corresponding superpixels.
We first evaluate the efficiency of the acquisition functions in terms of precision and recall at the pixel level.
Specifically, precision refers to the proportion of pixels correctly identified as mislabeled out of the selected pixels, while recall represents the fraction of pixels chosen correctly from the total number of mislabeled pixels.
Then, we access models trained with each corrected dataset from different acquisitions with mIoU.
The ablation experiments on acquisition are conducted in PASCAL.

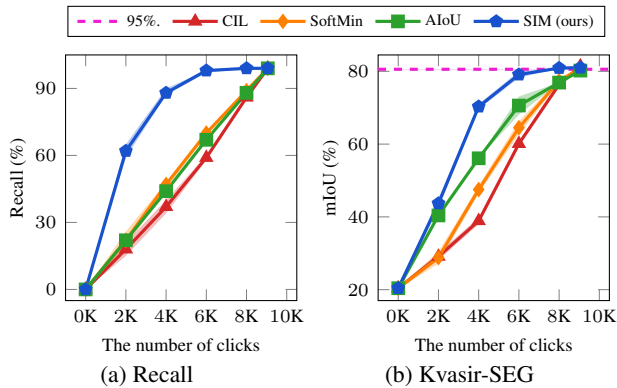
\begin{figure}[t!]
    \captionsetup[subfigure]{font=footnotesize,labelfont=footnotesize,aboveskip=0.05cm,belowskip=-0.15cm}
    \centering
    \hspace{-3mm}
    \begin{subfigure}{.47\linewidth}
        \centering
        \begin{tikzpicture}
            \begin{axis}[
                legend style={
                    nodes={scale=0.6}, 
                    at={(2.334, 1.2)}, 
                    /tikz/every even column/.append style={column sep=1mm},
                    legend columns=-1,
                },
                label style={font=\scriptsize},
                tick label style={font=\scriptsize},
                width=1.2\linewidth,
                height=1.25\linewidth,
                xlabel=The number of clicks,
                ylabel=Recall (\%),
                xmin=-1, xmax=10.5,
                ymin=-5, ymax=105,
                xtick={0, 2, 4, 6, 8, 10},
                xticklabel={$\pgfmathprintnumber{\tick}$K},
                ytick={0, 30, 60, 90},
                xlabel style={yshift=0.15cm},
                ylabel style={yshift=-0.6cm},
            ]
            \addplot[cPink, very thick, mark size=2pt, mark options={solid}, dashed] coordinates {(0,-1)};
            \addplot[cRed, very thick, mark=triangle*, mark size=2pt, mark options={solid}] table[col sep=comma, x=x, y=CIL]{Data/recall_kvasir.csv};
            \addplot[orange, very thick, mark=diamond*, mark size=2pt, mark options={solid}] table[col sep=comma, x=x, y=SoftMin]{Data/recall_kvasir.csv};
            \addplot[cGreen, very thick, mark=square*, mark size=2pt, mark options={solid}] table[col sep=comma, x=x, y=AIoU]{Data/recall_kvasir.csv};
            \addplot[cBlue, very thick, mark=pentagon*, mark size=2pt, mark options={solid}] table[col sep=comma, x=x, y=SIM]{Data/recall_kvasir.csv};

            \addplot[name path=CIL-U, draw=none, fill=none] table[col sep=comma, x=x, y=CIL-U]{Data/recall_kvasir.csv};
            \addplot[name path=CIL-D, draw=none, fill=none] table[col sep=comma, x=x, y=CIL-D]{Data/recall_kvasir.csv};
            \addplot[cRed, fill opacity=0.3] fill between[of=CIL-U and CIL-D];
            \addplot[name path=SoftMin-U, draw=none, fill=none] table[col sep=comma, x=x, y=SoftMin-U]{Data/recall_kvasir.csv};
            \addplot[name path=SoftMin-D, draw=none, fill=none] table[col sep=comma, x=x, y=SoftMin-D]{Data/recall_kvasir.csv};
            \addplot[orange, fill opacity=0.3] fill between[of=SoftMin-U and SoftMin-D];
            \addplot[name path=AIoU-U, draw=none, fill=none] table[col sep=comma, x=x, y=AIoU-U]{Data/recall_kvasir.csv};
            \addplot[name path=AIoU-D, draw=none, fill=none] table[col sep=comma, x=x, y=AIoU-D]{Data/recall_kvasir.csv};
            \addplot[cGreen, fill opacity=0.3] fill between[of=AIoU-U and AIoU-D];
            \addplot[name path=SIM-U, draw=none, fill=none] table[col sep=comma, x=x, y=SIM-U]{Data/recall_kvasir.csv};
            \addplot[name path=SIM-D, draw=none, fill=none] table[col sep=comma, x=x, y=SIM-D]{Data/recall_kvasir.csv};
            \addplot[cBlue, fill opacity=0.3] fill between[of=SIM-U and SIM-D];
            
            \legend{95\%., CIL, SoftMin, AIoU, SIM (ours)}
            \end{axis} 
        \end{tikzpicture}
        \caption{Recall}
        \label{fig:kvasir-(a)}
    \end{subfigure}
    \hspace{1mm}
    \begin{subfigure}{.47\linewidth}
        \centering
        \begin{tikzpicture}
            \begin{axis}[
                xlabel={The number of clicks},
                ylabel={mIoU (\%)},
                width=1.2\linewidth,
                height=1.25\linewidth,
                ymin=17,
                ymax=84.5,
                xtick={0, 2, 4, 6, 8, 10},
                ytick={20, 40, 60, 80},
                xlabel style={yshift=0.15cm},
                ylabel style={yshift=-0.6cm},
                xmin=-1,
                xmax=10.5,
                label style={font=\scriptsize},
                tick label style={font=\scriptsize},
                xticklabel={$\pgfmathprintnumber{\tick}$K}
            ]
            \draw [cPink, very thick, dashed] (axis cs:-1,80.56) -- (axis cs:11,80.56);
            \addplot[cRed, very thick, mark=triangle*, mark size=2pt, mark options={solid}] table[col sep=comma, x=x, y=CIL]{Data/acq_kvasir.csv};
            \addplot[orange, very thick, mark=diamond*, mark size=2pt, mark options={solid}] table[col sep=comma, x=x, y=SoftMin]{Data/acq_kvasir.csv};
            \addplot[cGreen, very thick, mark=square*, mark size=2pt, mark options={solid}] table[col sep=comma, x=x, y=AIoU]{Data/acq_kvasir.csv};
            \addplot[cBlue, very thick, mark=pentagon*, mark size=2pt, mark options={solid}] table[col sep=comma, x=x, y=SIM]{Data/acq_kvasir.csv};

            \addplot[name path=Full-U, draw=none, fill=none] table[col sep=comma, x=x, y=Full-U]{Data/acq_kvasir.csv};
            \addplot[name path=Full-D, draw=none, fill=none] table[col sep=comma, x=x, y=Full-D]{Data/acq_kvasir.csv};
            \addplot[cPink, fill opacity=0.3] fill between[of=Full-U and Full-D];
            \addplot[name path=CIL-U, draw=none, fill=none] table[col sep=comma, x=x, y=CIL-U]{Data/acq_kvasir.csv};
            \addplot[name path=CIL-D, draw=none, fill=none] table[col sep=comma, x=x, y=CIL-D]{Data/acq_kvasir.csv};
            \addplot[cRed, fill opacity=0.3] fill between[of=CIL-U and CIL-D];
            \addplot[name path=SoftMin-U, draw=none, fill=none] table[col sep=comma, x=x, y=SoftMin-U]{Data/acq_kvasir.csv};
            \addplot[name path=SoftMin-D, draw=none, fill=none] table[col sep=comma, x=x, y=SoftMin-D]{Data/acq_kvasir.csv};
            \addplot[orange, fill opacity=0.3] fill between[of=SoftMin-U and SoftMin-D];
            \addplot[name path=AIoU-U, draw=none, fill=none] table[col sep=comma, x=x, y=AIoU-U]{Data/acq_kvasir.csv};
            \addplot[name path=AIoU-D, draw=none, fill=none] table[col sep=comma, x=x, y=AIoU-D]{Data/acq_kvasir.csv};
            \addplot[cGreen, fill opacity=0.3] fill between[of=AIoU-U and AIoU-D];
            \addplot[name path=SIM-U, draw=none, fill=none] table[col sep=comma, x=x, y=SIM-U]{Data/acq_kvasir.csv};
            \addplot[name path=SIM-D, draw=none, fill=none] table[col sep=comma, x=x, y=SIM-D]{Data/acq_kvasir.csv};
            \addplot[cBlue, fill opacity=0.3] fill between[of=SIM-U and SIM-D];

            \end{axis}
        \end{tikzpicture}
        \caption{Kvasir-SEG}
        \label{fig:kvasir-(b)}
    \end{subfigure}
    \caption{{\em Kvasir-SEG experiments.} The proposed \sftype{SIM} acquisition operate robustly on medical dataset across different budgets.}
    \label{fig:kvasir}
    \vspace{-3mm}
\end{figure}

\noindent\textbf{Precision and Recall of Acquisition Functions.}
In Figure~\ref{fig:pascal-precision-recall}, we compare \sftype{SIM} to baseline acquisitions by calculating the precision and recall for detecting incorrect pixels.
\sftype{SIM} outperforms the baseline acquisition functions in terms of recall while showing a comparably low precision rate.
This is attributed to \sftype{SIM} considering the effect of label expansion as in (\ref{eq:sim}), which favors large superpixels.
We consider this design choice to be effective for ALC for two reasons.
First, as demonstrated in Theorem~\ref{the:queries}, the labeling cost of reconfirming false positives, i.e., correct pseudo labels, is significantly low.
Second, correcting the labels of as many pixels as possible, which is related to high recall, leads to greater improvements in data and model performance as shown in Table~\ref{tab:acq-pascal}.

\noindent\textbf{Quality of Corrected Datasets.}
In Table~\ref{tab:acq-pascal}, we compare \sftype{SIM} to baseline acquisition functions in terms of the quality of the corrected dataset when using 5K clicks.
The quality of the corrected dataset is evaluated by the accuracy of corrected labels (Data mIoU), and the performance of a model trained with them (Model mIoU).
For both metrics, the dataset corrected by \sftype{SIM} shows the best quality.
This shows that the performance of the model is more correlated to the recall in Figure~\ref{fig:pascal-recall}, as high recall indicates fewer incorrect labels in the dataset.
%

\subsection{Further Analyses}
\noindent\textbf{Applicability to Medical Dataset.}
In Figure~\ref{fig:kvasir}, we apply the ALC framework to the Kvasir-SEG dataset to verify the generalization ability of our framework to challenging medical domain.
Here, the initial dataset shows 20\% mIoU, as shown in 0K of Figure~\ref{fig:kvasir-(b)}.
Even under such challenging initial conditions, the ALC combined with \sftype{SIM} reaches 93\% performance of the fully supervised model only using 6K clicks.
This performance can be attributed to \sftype{SIM} acquisition function, which consistently achieves the highest recall among baselines over various numbers of clicks, as shown in Figure~\ref{fig:kvasir-(a)}.
%
%
%
We note that our approach introduces superpixel-wise sampling to Kvasir-SEG for the first time, diverging from the traditional image-wise sampling methods~\cite{smailagic2018medal,wu2021hal}.
%

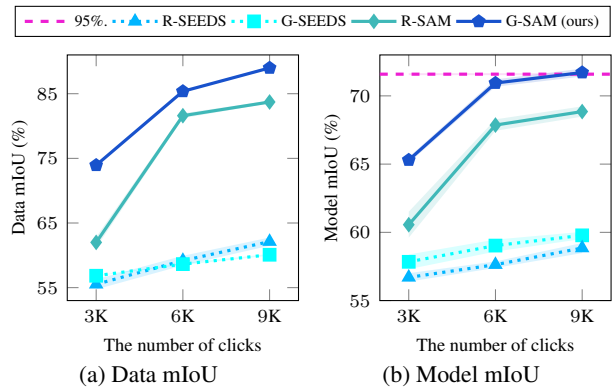
\begin{figure}[t!]
    \captionsetup[subfigure]{font=footnotesize,labelfont=footnotesize,aboveskip=0.05cm,belowskip=-0.15cm}
    \centering
    \hspace{-3mm}
    \begin{subfigure}{.47\linewidth}
        \centering
        \begin{tikzpicture}
            \begin{axis}[
                legend style={nodes={scale=0.6}, at={(2.35, 1.2)}},
                legend columns=-1,
                label style={font=\scriptsize},
                tick label style={font=\scriptsize},
                width=1.2\linewidth,
                height=1.25\linewidth,
                xlabel=The number of clicks,
                ylabel=Data mIoU (\%),
                xmin=2, xmax=10,
                ymin=53, ymax=91,
                xtick={3, 6, 9},
                xticklabel={$\pgfmathprintnumber{\tick}$K},
                ytick={55, 65, 75, 85},
                xlabel style={yshift=0.15cm},
                ylabel style={yshift=-0.6cm},
            ]
            \addplot[cPink, very thick, mark size=2pt, dashed, mark options={solid}] coordinates {(0,-1)};
            \addplot[cBlue4, very thick, dotted, mark=triangle*, mark size=2pt, mark options={solid}] table[col sep=comma, x=x, y=RSEEDS]{Data/foundation_data_mIoU.csv};
            \addplot[cBlue3, very thick, dotted, mark=square*, mark size=2pt, mark options={solid}] table[col sep=comma, x=x, y=GSEEDS]{Data/foundation_data_mIoU.csv};
            \addplot[cBlue2, very thick, mark=diamond*, mark size=2pt, mark options={solid}] table[col sep=comma, x=x, y=RSAM]{Data/foundation_data_mIoU.csv};
            \addplot[cBlue, very thick, mark=pentagon*, mark size=2pt, mark options={solid}] table[col sep=comma, x=x, y=GSAM]{Data/foundation_data_mIoU.csv};
            
            \addplot[name path=GSAM-U, draw=none, fill=none] table[col sep=comma, x=x, y=GSAM-U]{Data/foundation_data_mIoU.csv};
            \addplot[name path=GSAM-D, draw=none, fill=none] table[col sep=comma, x=x, y=GSAM-D]{Data/foundation_data_mIoU.csv};
            \addplot[cBlue, fill opacity=0.15] fill between[of=GSAM-U and GSAM-D];
            \addplot[name path=RSAM-U, draw=none, fill=none] table[col sep=comma, x=x, y=RSAM-U]{Data/foundation_data_mIoU.csv};
            \addplot[name path=RSAM-D, draw=none, fill=none] table[col sep=comma, x=x, y=RSAM-D]{Data/foundation_data_mIoU.csv};
            \addplot[cBlue2, fill opacity=0.15] fill between[of=RSAM-U and RSAM-D];
            \addplot[name path=GSEEDS-U, draw=none, fill=none] table[col sep=comma, x=x, y=GSEEDS-U]{Data/foundation_data_mIoU.csv};
            \addplot[name path=GSEEDS-D, draw=none, fill=none] table[col sep=comma, x=x, y=GSEEDS-D]{Data/foundation_data_mIoU.csv};
            \addplot[cBlue3, fill opacity=0.15] fill between[of=GSEEDS-U and GSEEDS-D];
            \addplot[name path=RSEEDS-U, draw=none, fill=none] table[col sep=comma, x=x, y=RSEEDS-U]{Data/foundation_data_mIoU.csv};
            \addplot[name path=RSEEDS-D, draw=none, fill=none] table[col sep=comma, x=x, y=RSEEDS-D]{Data/foundation_data_mIoU.csv};
            \addplot[cBlue4, fill opacity=0.15] fill between[of=RSEEDS-U and RSEEDS-D];

            \legend{95\%., R-SEEDS, G-SEEDS, R-SAM, G-SAM (ours)}
            \end{axis}
        \end{tikzpicture}
        \caption{Data mIoU}
    \end{subfigure}
    \hspace{1mm}
    \begin{subfigure}{.47\linewidth}
        \centering
        \begin{tikzpicture}
            \begin{axis}[
                legend style={nodes={scale=0.6}, at={(2.05, 1.2)}},
                legend columns=4,
                xlabel={The number of clicks},
                ylabel={Model mIoU (\%)},
                width=1.2\linewidth,
                height=1.25\linewidth,
                ymin=55,
                ymax=73,
                xtick={3, 6, 9},
                ytick={55, 60, 65, 70},
                xlabel style={yshift=0.15cm},
                ylabel style={yshift=-0.6cm},
                xmin=2,
                xmax=10,
                label style={font=\scriptsize},
                tick label style={font=\scriptsize},
                xticklabel={$\pgfmathprintnumber{\tick}$K}
            ]
            \draw [cPink, very thick, dashed] (axis cs:2,71.59) -- (axis cs:10,71.59);
            \addplot[cBlue, very thick, mark=pentagon*, mark size=2pt, mark options={solid}] table[col sep=comma, x=x, y=GSAM]{Data/foundation_model_mIoU.csv};
            \addplot[cBlue2, very thick, mark=diamond*, mark size=2pt, mark options={solid}] table[col sep=comma, x=x, y=RSAM]{Data/foundation_model_mIoU.csv};
            \addplot[cBlue3, very thick, dotted, mark=square*, mark size=2pt, mark options={solid}] table[col sep=comma, x=x, y=GSEEDS]{Data/foundation_model_mIoU.csv};
            \addplot[cBlue4, very thick, dotted, mark=triangle*, mark size=2pt, mark options={solid}] table[col sep=comma, x=x, y=RSEEDS]{Data/foundation_model_mIoU.csv};

            \addplot[name path=Full-U, draw=none, dashed, fill=none] table[col sep=comma, x=x, y=Full-U]{Data/foundation_model_mIoU.csv};
            \addplot[name path=Full-D, draw=none, fill=none] table[col sep=comma, x=x, y=Full-D]{Data/foundation_model_mIoU.csv};
            \addplot[cPink, fill opacity=0.15] fill between[of=Full-U and Full-D];
            \addplot[name path=GSAM-U, draw=none, fill=none] table[col sep=comma, x=x, y=GSAM-U]{Data/foundation_model_mIoU.csv};
            \addplot[name path=GSAM-D, draw=none, fill=none] table[col sep=comma, x=x, y=GSAM-D]{Data/foundation_model_mIoU.csv};
            \addplot[cBlue, fill opacity=0.15] fill between[of=GSAM-U and GSAM-D];
            \addplot[name path=RSAM-U, draw=none, fill=none] table[col sep=comma, x=x, y=RSAM-U]{Data/foundation_model_mIoU.csv};
            \addplot[name path=RSAM-D, draw=none, fill=none] table[col sep=comma, x=x, y=RSAM-D]{Data/foundation_model_mIoU.csv};
            \addplot[cBlue2, fill opacity=0.15] fill between[of=RSAM-U and RSAM-D];
            \addplot[name path=GSEEDS-U, draw=none, fill=none] table[col sep=comma, x=x, y=GSEEDS-U]{Data/foundation_model_mIoU.csv};
            \addplot[name path=GSEEDS-D, draw=none, fill=none] table[col sep=comma, x=x, y=GSEEDS-D]{Data/foundation_model_mIoU.csv};
            \addplot[cBlue3, fill opacity=0.15] fill between[of=GSEEDS-U and GSEEDS-D];
            \addplot[name path=RSEEDS-U, draw=none, fill=none] table[col sep=comma, x=x, y=RSEEDS-U]{Data/foundation_model_mIoU.csv};
            \addplot[name path=RSEEDS-D, draw=none, fill=none] table[col sep=comma, x=x, y=RSEEDS-D]{Data/foundation_model_mIoU.csv};
            \addplot[cBlue4, fill opacity=0.15] fill between[of=RSEEDS-U and RSEEDS-D];
            
            \end{axis}
        \end{tikzpicture}
        \caption{Model mIoU}
    \end{subfigure}
    \caption{{\em Advantages of foundation models.} Our \sftype{ALC} is called \sftype{G-SAM}, as it depends on Grounded-SAM. The effect of superpixels is larger than that of initial pseudo-labels.}
    \label{fig:ablation-foundation}
    \vspace{-3mm}
\end{figure}

\noindent\textbf{Decomposing the Advantages of Foundation Models.}
In Figure~\ref{fig:ablation-foundation}, we analyze the effect of the foundation model for \sftype{ALC} in two aspects: initial pseudo-labels and superpixels, in PASCAL.
We denote the proposed method using both aspects as \sftype{G-SAM}.
For the baseline, we initially train a model with a 3K budget through random sampling and then employ this model as the pseudo-label generator in subsequent rounds, which is denoted as \sftype{R-SAM}.
We note that the distinction between \sftype{G-SAM} and \sftype{R-SAM} lies in the method of obtaining initial pseudo-labels, rather than in the acquisition itself.
In subsequent rounds, namely for budgets of 6K and 9K, both \sftype{R-SAM} and \sftype{G-SAM} adhere to the same experimental settings, including the same \sftype{SIM} acquisition function.
Another baseline is to use superpixels from SEEDS~\cite{van2012seeds} instead of the ones from SAM, which is denoted as \sftype{G-SEEDS}.
We denote \sftype{R-SEEDS} as a baseline combining both random sampling in the initial round and superpixels from SEEDS.
As shown in Figure~\ref{fig:ablation-foundation}, both aspects improve both Data mIoU and Model mIoU.
In particular, utilizing the superpixels from SAM shows significant performance improvement.

\noindent\textbf{Synergy of Proposed Components.}
Table~\ref{tab:each-component} quantifies the contribution of each component in our method: (1) the diversified pixel pool (Diversity) in Section~\ref{sec:diversified-pixel-pool}, (2) the look-ahead acquisition (Look-ahead), and (3) the label expansion technique (Expansion) in Section.~\ref{sec:look-ahead}.
The ablation study is conducted by correcting the initial dataset using 5K budgets in PASCAL, and evaluated with both the accuracy of corrected labels (Data mIoU) and the performance of a model trained with them (Model mIoU).
The results show that all components improve both Data mIoU and Model mIoU.
In particular, the synergy of proposed components is pronounced.
Since correcting numerous pixels across various regions simultaneously is significant, omitting even one component results in significant performance degradation.
%
%
%

\begin{table}[t!]
\caption{{\em Synergy of proposed components.}
We conduct an ablation study, when correcting the initial dataset using 5K budgets in PASCAL.}
\label{tab:each-component}
\begin{center}
\begin{small}
\setlength\tabcolsep{3pt}
\centering
\begin{tabular}{ccc|cc}
\toprule
\multicolumn{2}{c}{Acquisition} & \multirow{2}{*}{Expansion} & \multirow{2}{*}{Data mIoU} & \multirow{2}{*}{Model mIoU} \\
\multicolumn{1}{c}{Diversity} & Look-ahead & & & \\ \midrule
\xmark & \xmark & \xmark & 55.03$_{\pm 0.25}$ & 56.30$_{\pm 0.56}$ \\ 
\xmark & \cmark & \cmark & 55.38$_{\pm 0.08}$ & 56.01$_{\pm 0.58}$ \\ 
\cmark & \xmark & \cmark & 56.59$_{\pm 0.07}$ & 56.82$_{\pm 0.05}$ \\
\cmark & \cmark & \xmark & 55.61$_{\pm 0.00}$ & 56.69$_{\pm 0.35}$ \\ 
\cmark & \cmark & \cmark & \textbf{83.04$_{\pm 0.62}$} & \textbf{68.72$_{\pm 0.10}$} \\ 
\bottomrule
\end{tabular}
\end{small}
\end{center}
\vspace{-5mm}
\end{table}

\noindent\paragraph{Fair Comparison with Baselines.}
We provide additional experiments and discussions to clarify the advantages of our method called \sftype{ALC}, compared to adopting Grounded-SAM (G-SAM) to \sftype{Spx} baseline. 
In turn, only our method fully leverages G-SAM mainly thanks to our acquisition function, SIM.
Table~\ref{tab:fair-spx} presents an ablation study on the advantages of G-SAM, which are two-fold: warm-start with initial pseudo-labels and SAM superpixels. 
The gap between the first and second rows quantifies the advantage of warm-start with G-SAM when using \sftype{Spx}. 
This is not substantial since the pseudo labels from G-SAM contain considerable noises, as shown in Figure~\ref{fig:Grounded SAM-images}, i.e., Data mIoU 55.32\% in PASCAL. 
In addition, comparing the second and third rows, the advantage of using SAM superpixels for \sftype{Spx} is negligible. 
The gain of our method in the fourth row is clear. 
This is mainly thanks to the proposed acquisition function, SIM, with the look-ahead ability. 
We note that \sftype{MerSpx}~\cite{Kim_2023_ICCV} based on ClassBal of \sftype{Spx} has no such look-ahead.
\sftype{MulSpx}~\cite{hwang2023active} proposes a multi-class query, which requests labeling all classes within a superpixel, making it difficult to conduct a fair comparison. 

\begin{figure*}[t!]
\captionsetup[subfigure]{font=footnotesize,labelfont=footnotesize,aboveskip=0.05cm,belowskip=-0.15cm}
\centering
\begin{subfigure}{0.49\textwidth}
\begin{tikzpicture}
    \begin{axis}[
        label style={font=\scriptsize},
        tick label style={font=\scriptsize},
        xticklabel style={anchor=center, yshift=-4.5mm, font=\scriptsize},
        symbolic x coords={
            \rotatebox{60}{pottedplant},
            \rotatebox{60}{sofa},
            \rotatebox{60}{chair},
            \rotatebox{60}{diningtable},
            \rotatebox{60}{sheep},
            \rotatebox{60}{bicycle},
            \rotatebox{60}{bird},
            \rotatebox{60}{car},
            \rotatebox{60}{horse},
            \rotatebox{60}{tvmonitor},
            \rotatebox{60}{background},
            \rotatebox{60}{boat},
            \rotatebox{60}{cat},
            \rotatebox{60}{bus},
            \rotatebox{60}{bottle},
            \rotatebox{60}{motorbike},
            \rotatebox{60}{dog},
            \rotatebox{60}{aeroplane},
            \rotatebox{60}{train},
            \rotatebox{60}{person},
            \rotatebox{60}{cow},
        },
        axis y line*=left,
        axis x line=bottom,
        width=1.05\textwidth,
        height=0.5\textwidth,
        major x tick style = transparent,
        ybar=3*\pgflinewidth,
        bar width=4pt,
        ymajorgrids=true,
        ylabel={IoU gain (\%)},
        xtick=data,
        scaled y ticks=false,
        enlarge x limits=0.03,
        axis line style={-},
        ymin=-1,
        ymax=4.0,
        xlabel style={yshift=0.4cm},
        ylabel style={yshift=-0.6cm},
    ]
        \addplot[style={cBlue,fill=cBlue,mark=none}] coordinates {
            (\rotatebox{60}{pottedplant}, 3.72306)
            (\rotatebox{60}{sofa}, 2.36594)
            (\rotatebox{60}{chair}, 1.53988)
            (\rotatebox{60}{diningtable}, 0.85645)
            (\rotatebox{60}{sheep}, 0.770895)
            (\rotatebox{60}{bicycle}, 0.733825)
            (\rotatebox{60}{bird}, 0.486811)
            (\rotatebox{60}{car}, 0.459622)
            (\rotatebox{60}{horse}, 0.43175)
            (\rotatebox{60}{tvmonitor}, 0.391015)
            (\rotatebox{60}{background}, 0.042401)
            (\rotatebox{60}{boat}, 0.017323)
            (\rotatebox{60}{cat}, -0.01506)
            (\rotatebox{60}{bus}, -0.0704)
            (\rotatebox{60}{bottle}, -0.13237)
            (\rotatebox{60}{motorbike}, -0.24271)
            (\rotatebox{60}{dog}, -0.24545)
            (\rotatebox{60}{aeroplane}, -0.29385)
            (\rotatebox{60}{train}, -0.31687)
            (\rotatebox{60}{person}, -0.38381)
            (\rotatebox{60}{cow}, -0.79704)
        };
        \draw [orange, dashed, very thick] (rel axis cs:0,0.288) -- (rel axis cs:1,0.288);
    \end{axis}
\end{tikzpicture}
\caption{Class-wise IoU gain}
\label{fig:stats-(a)}
\end{subfigure}
\begin{subfigure}{0.49\textwidth}
\begin{tikzpicture}
    \begin{axis}[
        label style={font=\scriptsize},
        tick label style={font=\scriptsize},
        xticklabel style={anchor=center, yshift=-4.5mm, font=\scriptsize},
        symbolic x coords={
            \rotatebox{60}{background},
            \rotatebox{60}{person},
            \rotatebox{60}{sofa},
            \rotatebox{60}{chair},
            \rotatebox{60}{pottedplant},
            \rotatebox{60}{diningtable},
            \rotatebox{60}{tvmonitor},
            \rotatebox{60}{car},
            \rotatebox{60}{bottle},
            \rotatebox{60}{sheep},
            \rotatebox{60}{train},
            \rotatebox{60}{motorbike},
            \rotatebox{60}{horse},
            \rotatebox{60}{aeroplane},
            \rotatebox{60}{cow},
            \rotatebox{60}{bird},
            \rotatebox{60}{bus},
            \rotatebox{60}{boat},
            \rotatebox{60}{cat},
            \rotatebox{60}{bicycle},
            \rotatebox{60}{dog},
        },
        axis y line*=left,
        axis x line=bottom,
        width=1.05\textwidth,
        height=0.5\textwidth,
        major x tick style = transparent,
        ybar=3*\pgflinewidth,
        bar width=4pt,
        ymajorgrids=true,
        ylabel={\# of relabeled class},
        xtick=data,
        scaled y ticks=false,
        enlarge x limits=0.03,
        axis line style={-},
        ymin=-1,
        ymax=220,
        xlabel style={yshift=0.4cm},
        ylabel style={yshift=-0.5cm},
    ]
        \addplot[style={cBlue,fill=cBlue,mark=none}] coordinates {
            (\rotatebox{60}{background}, 214)
            (\rotatebox{60}{person}, 131)
            (\rotatebox{60}{sofa}, 75)
            (\rotatebox{60}{chair}, 62)
            (\rotatebox{60}{pottedplant}, 57)
            (\rotatebox{60}{diningtable}, 42)
            (\rotatebox{60}{tvmonitor}, 39)
            (\rotatebox{60}{car}, 38)
            (\rotatebox{60}{bottle}, 26)
            (\rotatebox{60}{sheep}, 20)
            (\rotatebox{60}{train}, 19)
            (\rotatebox{60}{motorbike}, 7)
            (\rotatebox{60}{horse}, 6)
            (\rotatebox{60}{aeroplane}, 5)
            (\rotatebox{60}{cow}, 5)
            (\rotatebox{60}{bird}, 2)
            (\rotatebox{60}{bus}, 2)
            (\rotatebox{60}{boat}, 1)
            (\rotatebox{60}{cat}, 1)
            (\rotatebox{60}{bicycle}, 0)
            (\rotatebox{60}{dog}, 0)
        };
    \end{axis}
\end{tikzpicture}
\caption{Distribution of corrected class}
\label{fig:stats-(b)}
\end{subfigure}
\caption{{\em PASCAL+ statistics.} (a) The IoU gain is calculated by averaging the improvements in the train and valid datasets in Table~\ref{tab:effect-of-pp}. The orange line ({\textcolor{orange}{-\;\!-\;\!-}}) denotes the average gain. (b) 
Certain classes are corrected a lot. Notably, the pottedplant, sofa, chair, and diningtable classes get many corrections, leading to a noticeable increase in IoU gain.}
\label{fig:class-iou}
\end{figure*}
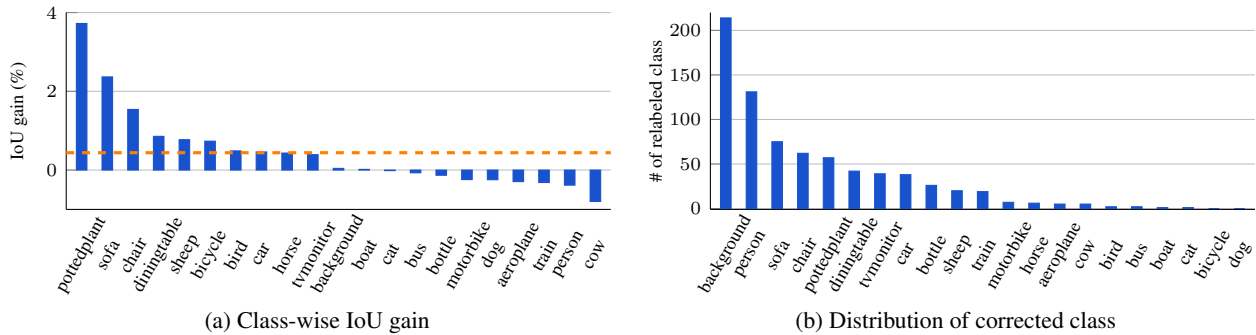

\begin{table}
\caption{{\em Fair comparison between Spx and ALC.} For a fair comparison, we integrate two advantages of foundation models into Spx. We refine the initial dataset using 3K budgets in PASCAL.}
\vspace{2.5mm}
\label{tab:fair-spx}
\centering
\begin{center}
\begin{small}
\setlength\tabcolsep{6pt}
\begin{tabular}{c|ccc}
\toprule
Methods & Initial stage & Superpixels & Model mIoU (\%) \\ \midrule
Spx & Cold-start & SEEDS & 52.34$_{\pm 0.85}$ \\ 
Spx & Warm-start & SEEDS & 57.77$_{\pm 0.70}$ \\ 
Spx & Warm-start & SAM & 57.79$_{\pm 0.66}$ \\ 
ALC & Warm-start & SAM & \textbf{65.30$_{\pm 0.21}$} \\ 
\bottomrule
\end{tabular}
\end{small}
\end{center}
\vspace{-5mm}
\end{table}

\section{PASCAL+ corrected from PASCAL}
To demonstrate the practicality of the proposed framework, we apply corrections to the widely-used PASCAL dataset~\cite{pascal-voc-2012}, resulting in an enhanced version named PASCAL+ dataset (Section~\ref{sec:construct}).
Figures~\ref{fig:org-labels} and \ref{fig:pascal+} illustrate the change in labels between PASCAL and PASCAL+ datasets, respectively.
We demonstrate the enhanced model performance when using PASCAL+ compared to PASCAL and verify the cost-effectiveness of our \sftype{SIM} acquisition function (Section~\ref{sec:exp-pp}).


\subsection{Construction Process}
\label{sec:construct}
We apply our active label correction to construct the refined version of the PASCAL dataset.
We first generate 81K superpixels using Grounded-SAM, where we use 0.1 as the box threshold.
Considering that PASCAL has 1,464 images for training and 1,449 for validation, the average number of superpixels per image is around 29.
Then we correct the pseudo label of each superpixel by annotating the true label to the corresponding representative pixel and expanding the label to the superpixel.
The relabeling tasks are conducted by two annotators, each spending around 60 hours over two weeks. 
When labels from two annotators are different, the final annotation is determined by discussion.
The qualitative result of PASCAL+ compared to PASCAL is illustrated in Figures~\ref{fig:pascal-pascal-plus} and~\ref{fig:extra_qual}.
Additionally, in Figure~\ref{fig:error_qual}, we report few failure cases for label correction due to the imperfection of superpixels, budget constraints, and human error.


\begin{table}[t!]
\caption{{\em Effect of PASCAL+.} P denotes PASCAL, while P+ denotes PASCAL+.
The refined train set increases model performance on both the original and refined validation sets.}
\begin{center}
\begin{small}
\setlength\tabcolsep{4pt}
\centering
\begin{tabular}{cc|cc}
\toprule
Train & Valid & Data mIoU (\%) & Model mIoU (\%) \\ \midrule
P & P & 99.1 & 75.36$_{\pm 0.07}$ \\ 
P+ & P & 100.0 & 75.78$_{\pm 0.12}$\\ 
P & P+ & 99.1 & 76.18$_{\pm 0.08}$ \\ 
P+ & P+ & 100.0 & \textbf{76.42$_{\pm 0.03}$} \\
\bottomrule
\end{tabular}
\end{small}
\end{center}
\label{tab:effect-of-pp}
\vspace{-3mm}
\end{table}

\subsection{Analysis of PASCAL+}
\label{sec:exp-pp}
\noindent\textbf{Effect of PASCAL+.}
In PASCAL+, we make 743 superpixel label corrections in total, with 375 in the training set and 368 in the validation set.
Approximately 0.5\% of the pixel labels, equivalent to 2.6 million pixels, are altered, resulting in a 0.9\% improvement in the mean Intersection over Union (mIoU) for the training set, as shown in Table~\ref{tab:effect-of-pp}.
Regardless of whether the valid set is PASCAL or PASCAL+, corrections to the training data enhance the mIoU by around 0.3\%.
In particular, Figure~\ref{fig:stats-(a)} represents that IoU scores for the pottedplant and sofa classes are increased by more than 2\%.
This trend is related to the distribution of the corrected classes in Figure~\ref{fig:stats-(b)}.
Excepting the background and person classes, which already achieve high IoU scores with PASCAL in Figure~\ref{fig:pascal-iou}, the IoU scores tend to improve in line with the number of corrections applied to classes that initially have more errors.
PASCAL+ not only enhances the reliability of segmentation model evaluations but also has the potential to reduce both false negatives and false positives in the literature of detecting noisy labels for segmentation tasks, thereby contributing to more reliable and precise outcomes in this field.

\noindent\textbf{Various Acquisitions for PASCAL+.}
Since it is possible to access both the noisy PASCAL and clean PASCAL+ datasets at the same time, we analyze which acquisition function is effective in real-world. Table~\ref{tab:acq-pascal-plus} indicates that our \sftype{SIM} acquisition achieves nearly 100\% Data mIoU, i.e., almost similar to PASCAL+, with selecting 10K pixels for correction.
As the training dataset's quality improves, there is a corresponding slight increase in model performance.

\section{Conclusion}
In this work, we propose a framework for active label correction in semantic segmentation operating with foundation models.
Our framework includes cost-efficient correction queries, which are verified theoretically and empirically, that ask for a pixel label to be corrected if needed.
We fully enjoy the benefits of foundation models, namely initial pseudo-labels and decent superpixels, resulting in significant budget reduction across various datasets in different domains.
In addition, we demonstrate the practicality of our framework by constructing PASCAL+, a corrected version of the PASCAL dataset.

\noindent\textbf{Limitations.}
Our framework depends on foundation models, particularly Grounded-SAM~\cite{liu2023grounding}, and shares the same inherent limitations as these models, like generating incomplete superpixels for minor domains.
However, we demonstrate the effectiveness of our framework in the medical field, and we expect these issues to be resolved as foundation models continue to improve over time.

\begin{table}[t!]
\caption{{\em Performance of corrected dataset.} With 10K budgets, we correct PASCAL to PASCAL+ with different acquisitions.}
\begin{center}
\begin{small}
\setlength\tabcolsep{6pt}
\centering
\begin{tabular}{c|cc}
\toprule
Acquisition function & Data mIoU (\%) & Model mIoU (\%) \\ \midrule
LCIL & 99.16$_{\pm 0.00}$ & 75.68$_{\pm 0.25}$ \\
SoftMin & 99.38$_{\pm 0.01}$ & 75.76$_{\pm 0.23}$ \\ 
AIoU & 99.28$_{\pm 0.02}$ & 75.61$_{\pm 0.22}$ \\
SIM (ours) & \textbf{99.78$_{\pm 0.11}$} & \textbf{75.87$_{\pm 0.22}$} \\ 
\bottomrule
\end{tabular}
\end{small}
\end{center}
\label{tab:acq-pascal-plus}
\vspace{-3mm}
\end{table}


\smallskip\noindent\textbf{Acknowledgements.}
This work was partly supported by the IITP grants and the NRF grants funded by Ministry of Science and ICT, Korea 
(No.RS-2019-II191906, Artificial Intelligence Graduate School Program (POSTECH);
No.RS-2021-II212068, Artificial Intelligence Innovation Hub;
No.RS-2023-00217286;
No.RS-2022-II220926).





\section*{Impact Statement}
This paper presents work whose goal is to advance the field of Machine Learning. There are many potential societal consequences of our work, none which we feel must be specifically highlighted here.

\bibliography{icml2024}

\begin{thebibliography}{50}
\providecommand{\natexlab}[1]{#1}
\providecommand{\url}[1]{\texttt{#1}}
\expandafter\ifx\csname urlstyle\endcsname\relax
  \providecommand{\doi}[1]{doi: #1}\else
  \providecommand{\doi}{doi: \begingroup \urlstyle{rm}\Url}\fi

\bibitem[Achanta et~al.(2012)Achanta, Shaji, Smith, Lucchi, Fua, and S{\"u}sstrunk]{achanta2012slic}
Achanta, R., Shaji, A., Smith, K., Lucchi, A., Fua, P., and S{\"u}sstrunk, S.
\newblock Slic superpixels compared to state-of-the-art superpixel methods.
\newblock \emph{IEEE transactions on pattern analysis and machine intelligence}, 34\penalty0 (11):\penalty0 2274--2282, 2012.

\bibitem[Bernhardt et~al.(2022)Bernhardt, Castro, Tanno, Schwaighofer, Tezcan, Monteiro, Bannur, Lungren, Nori, Glocker, et~al.]{bernhardt2022active}
Bernhardt, M., Castro, D.~C., Tanno, R., Schwaighofer, A., Tezcan, K.~C., Monteiro, M., Bannur, S., Lungren, M.~P., Nori, A., Glocker, B., et~al.
\newblock Active label cleaning for improved dataset quality under resource constraints.
\newblock \emph{Nature communications}, 13\penalty0 (1):\penalty0 1161, 2022.

\bibitem[Cai et~al.(2021)Cai, Xu, Liew, and Foo]{cai2021revisiting}
Cai, L., Xu, X., Liew, J.~H., and Foo, C.~S.
\newblock Revisiting superpixels for active learning in semantic segmentation with realistic annotation costs.
\newblock In \emph{Proceedings of the IEEE/CVF Conference on Computer Vision and Pattern Recognition}, pp.\  10988--10997, 2021.

\bibitem[Casanova et~al.(2019)Casanova, Pinheiro, Rostamzadeh, and Pal]{casanova2019reinforced}
Casanova, A., Pinheiro, P.~O., Rostamzadeh, N., and Pal, C.~J.
\newblock Reinforced active learning for image segmentation.
\newblock In \emph{International Conference on Learning Representations}, 2019.

\bibitem[Chen et~al.(2023)Chen, Bai, Huang, Lu, Wen, Yuille, and Zhou]{chen2023making}
Chen, L., Bai, Y., Huang, S., Lu, Y., Wen, B., Yuille, A., and Zhou, Z.
\newblock Making your first choice: to address cold start problem in medical active learning.
\newblock \emph{Proceedings of Machine Learning Research--nnn}, 1:\penalty0 30, 2023.

\bibitem[Chen et~al.(2018)Chen, Zhu, Papandreou, Schroff, and Adam]{chen2018encoder}
Chen, L.-C., Zhu, Y., Papandreou, G., Schroff, F., and Adam, H.
\newblock Encoder-decoder with atrous separable convolution for semantic image segmentation.
\newblock In \emph{Proceedings of the European conference on computer vision (ECCV)}, pp.\  801--818, 2018.

\bibitem[Chollet(2017)]{chollet2017xception}
Chollet, F.
\newblock Xception: Deep learning with depthwise separable convolutions.
\newblock In \emph{Proceedings of the IEEE conference on computer vision and pattern recognition}, pp.\  1251--1258, 2017.

\bibitem[Cordts et~al.(2016)Cordts, Omran, Ramos, Rehfeld, Enzweiler, Benenson, Franke, Roth, and Schiele]{Cordts2016Cityscapes}
Cordts, M., Omran, M., Ramos, S., Rehfeld, T., Enzweiler, M., Benenson, R., Franke, U., Roth, S., and Schiele, B.
\newblock The cityscapes dataset for semantic urban scene understanding.
\newblock In \emph{Proc. of the IEEE Conference on Computer Vision and Pattern Recognition (CVPR)}, 2016.

\bibitem[Crowston(2012)]{crowston2012amazon}
Crowston, K.
\newblock Amazon mechanical turk: A research tool for organizations and information systems scholars.
\newblock In \emph{Shaping the Future of ICT Research. Methods and Approaches: IFIP WG 8.2, Working Conference, Tampa, FL, USA, December 13-14, 2012. Proceedings}, pp.\  210--221. Springer, 2012.

\bibitem[Deng et~al.(2009)Deng, Dong, Socher, Li, Li, and Fei-Fei]{deng2009imagenet}
Deng, J., Dong, W., Socher, R., Li, L.-J., Li, K., and Fei-Fei, L.
\newblock Imagenet: A large-scale hierarchical image database.
\newblock In \emph{2009 IEEE conference on computer vision and pattern recognition}, pp.\  248--255. Ieee, 2009.

\bibitem[Everingham et~al.(2012)Everingham, Van~Gool, Williams, Winn, and Zisserman]{pascal-voc-2012}
Everingham, M., Van~Gool, L., Williams, C. K.~I., Winn, J., and Zisserman, A.
\newblock The {PASCAL} {V}isual {O}bject {C}lasses {C}hallenge 2012 {(VOC2012)} {R}esults, 2012.

\bibitem[Han et~al.(2018)Han, Yao, Yu, Niu, Xu, Hu, Tsang, and Sugiyama]{han2018co}
Han, B., Yao, Q., Yu, X., Niu, G., Xu, M., Hu, W., Tsang, I., and Sugiyama, M.
\newblock Co-teaching: Robust training of deep neural networks with extremely noisy labels.
\newblock \emph{Advances in neural information processing systems}, 31, 2018.

\bibitem[He et~al.(2019)He, Zhang, Zhang, Zhang, Xie, and Li]{he2019bag}
He, T., Zhang, Z., Zhang, H., Zhang, Z., Xie, J., and Li, M.
\newblock Bag of tricks for image classification with convolutional neural networks.
\newblock In \emph{Proceedings of the IEEE/CVF conference on computer vision and pattern recognition}, pp.\  558--567, 2019.

\bibitem[Hu et~al.(2020)Hu, Xie, Du, Hong, and Tian]{hu2020one}
Hu, H., Xie, L., Du, Z., Hong, R., and Tian, Q.
\newblock One-bit supervision for image classification.
\newblock \emph{Advances in Neural Information Processing Systems}, 33:\penalty0 501--511, 2020.

\bibitem[Hwang et~al.(2023)Hwang, Lee, Kim, Oh, Ok, and Kwak]{hwang2023active}
Hwang, S., Lee, S., Kim, H., Oh, M., Ok, J., and Kwak, S.
\newblock Active learning for semantic segmentation with multi-class label query.
\newblock In \emph{Thirty-seventh Conference on Neural Information Processing Systems}, 2023.

\bibitem[Jha et~al.(2020)Jha, Smedsrud, Riegler, Halvorsen, de~Lange, Johansen, and Johansen]{jha2020kvasir}
Jha, D., Smedsrud, P.~H., Riegler, M.~A., Halvorsen, P., de~Lange, T., Johansen, D., and Johansen, H.~D.
\newblock Kvasir-seg: A segmented polyp dataset.
\newblock In \emph{MultiMedia Modeling: 26th International Conference, MMM 2020, Daejeon, South Korea, January 5--8, 2020, Proceedings, Part II 26}, pp.\  451--462. Springer, 2020.

\bibitem[Joshi et~al.(2009)Joshi, Porikli, and Papanikolopoulos]{joshi2009multi}
Joshi, A.~J., Porikli, F., and Papanikolopoulos, N.
\newblock Multi-class active learning for image classification.
\newblock In \emph{Proc. IEEE/CVF Conference on Computer Vision and Pattern Recognition (CVPR)}, 2009.

\bibitem[Kasarla et~al.(2019)Kasarla, Nagendar, Hegde, Balasubramanian, and Jawahar]{kasarla2019region}
Kasarla, T., Nagendar, G., Hegde, G.~M., Balasubramanian, V., and Jawahar, C.
\newblock Region-based active learning for efficient labeling in semantic segmentation.
\newblock In \emph{2019 IEEE Winter Conference on Applications of Computer Vision (WACV)}, pp.\  1109--1117. IEEE, 2019.

\bibitem[Kim et~al.(2023{\natexlab{a}})Kim, Oh, Hwang, Kwak, and Ok]{Kim_2023_ICCV}
Kim, H., Oh, M., Hwang, S., Kwak, S., and Ok, J.
\newblock Adaptive superpixel for active learning in semantic segmentation.
\newblock In \emph{Proceedings of the IEEE/CVF International Conference on Computer Vision (ICCV)}, pp.\  943--953, October 2023{\natexlab{a}}.

\bibitem[Kim(2022)]{kim2022active}
Kim, K.~I.
\newblock Active label correction using robust parameter update and entropy propagation.
\newblock In \emph{European Conference on Computer Vision}, pp.\  1--16. Springer, 2022.

\bibitem[Kim et~al.(2023{\natexlab{b}})Kim, Cho, Jang, Na, Kim, Song, Kang, and Moon]{kim2023saal}
Kim, Y.-Y., Cho, Y., Jang, J., Na, B., Kim, Y., Song, K., Kang, W., and Moon, I.-C.
\newblock Saal: sharpness-aware active learning.
\newblock In \emph{Proc. International Conference on Machine Learning (ICML)}, pp.\  16424--16440. PMLR, 2023{\natexlab{b}}.

\bibitem[Kirillov et~al.(2023)Kirillov, Mintun, Ravi, Mao, Rolland, Gustafson, Xiao, Whitehead, Berg, Lo, Dollar, and Girshick]{Kirillov_2023_ICCV}
Kirillov, A., Mintun, E., Ravi, N., Mao, H., Rolland, C., Gustafson, L., Xiao, T., Whitehead, S., Berg, A.~C., Lo, W.-Y., Dollar, P., and Girshick, R.
\newblock Segment anything.
\newblock In \emph{Proceedings of the IEEE/CVF International Conference on Computer Vision (ICCV)}, pp.\  4015--4026, October 2023.

\bibitem[Lad \& Mueller(2023)Lad and Mueller]{lad2023segmentation}
Lad, V. and Mueller, J.
\newblock Estimating label quality and errors in semantic segmentation data via any model.
\newblock In \emph{ICML Workshop on Data-centric Machine Learning Research}, 2023.

\bibitem[Liu et~al.(2023)Liu, Zeng, Ren, Li, Zhang, Yang, Li, Yang, Su, Zhu, et~al.]{liu2023grounding}
Liu, S., Zeng, Z., Ren, T., Li, F., Zhang, H., Yang, J., Li, C., Yang, J., Su, H., Zhu, J., et~al.
\newblock Grounding dino: Marrying dino with grounded pre-training for open-set object detection.
\newblock \emph{arXiv preprint arXiv:2303.05499}, 2023.

\bibitem[Ma et~al.(2024)Ma, He, Li, Han, You, and Wang]{ma2024segment}
Ma, J., He, Y., Li, F., Han, L., You, C., and Wang, B.
\newblock Segment anything in medical images.
\newblock \emph{Nature Communications}, 15\penalty0 (1):\penalty0 654, 2024.

\bibitem[Mackowiak et~al.(2018)Mackowiak, Lenz, Ghori, Diego, Lange, and Rother]{mackowiak2018cereals}
Mackowiak, R., Lenz, P., Ghori, O., Diego, F., Lange, O., and Rother, C.
\newblock Cereals-cost-effective region-based active learning for semantic segmentation.
\newblock In \emph{BMVC}, 2018.

\bibitem[Mahmood et~al.(2021)Mahmood, Fidler, and Law]{mahmood2021low}
Mahmood, R., Fidler, S., and Law, M.~T.
\newblock Low-budget active learning via wasserstein distance: An integer programming approach.
\newblock In \emph{International Conference on Learning Representations}, 2021.

\bibitem[M{\"u}ller \& Markert(2019)M{\"u}ller and Markert]{muller2019identifying}
M{\"u}ller, N.~M. and Markert, K.
\newblock Identifying mislabeled instances in classification datasets.
\newblock In \emph{2019 International Joint Conference on Neural Networks (IJCNN)}, pp.\  1--8. IEEE, 2019.

\bibitem[Natarajan et~al.(2013)Natarajan, Dhillon, Ravikumar, and Tewari]{natarajan2013learning}
Natarajan, N., Dhillon, I.~S., Ravikumar, P.~K., and Tewari, A.
\newblock Learning with noisy labels.
\newblock \emph{Advances in neural information processing systems}, 26, 2013.

\bibitem[Northcutt et~al.(2021{\natexlab{a}})Northcutt, Jiang, and Chuang]{northcutt2021confident}
Northcutt, C., Jiang, L., and Chuang, I.
\newblock Confident learning: Estimating uncertainty in dataset labels.
\newblock \emph{Journal of Artificial Intelligence Research}, 70:\penalty0 1373--1411, 2021{\natexlab{a}}.

\bibitem[Northcutt et~al.(2021{\natexlab{b}})Northcutt, Athalye, and Mueller]{northcutt2021labelerrors}
Northcutt, C.~G., Athalye, A., and Mueller, J.
\newblock Pervasive label errors in test sets destabilize machine learning benchmarks.
\newblock In \emph{Proceedings of the 35th Conference on Neural Information Processing Systems Track on Datasets and Benchmarks}, December 2021{\natexlab{b}}.

\bibitem[Patrini et~al.(2017)Patrini, Rozza, Krishna~Menon, Nock, and Qu]{patrini2017making}
Patrini, G., Rozza, A., Krishna~Menon, A., Nock, R., and Qu, L.
\newblock Making deep neural networks robust to label noise: A loss correction approach.
\newblock In \emph{Proceedings of the IEEE conference on computer vision and pattern recognition}, pp.\  1944--1952, 2017.

\bibitem[Qu et~al.(2023)Qu, Zhang, Qiao, Liu, Tang, Yuille, and Zhou]{qu2023abdomenatlas}
Qu, C., Zhang, T., Qiao, H., Liu, J., Tang, Y., Yuille, A., and Zhou, Z.
\newblock Abdomenatlas-8k: Annotating 8,000 ct volumes for multi-organ segmentation in three weeks.
\newblock In \emph{Thirty-seventh Conference on Neural Information Processing Systems Datasets and Benchmarks Track}, 2023.

\bibitem[Ren et~al.(2018)Ren, Zeng, Yang, and Urtasun]{ren2018learning}
Ren, M., Zeng, W., Yang, B., and Urtasun, R.
\newblock Learning to reweight examples for robust deep learning.
\newblock In \emph{International conference on machine learning}, pp.\  4334--4343. PMLR, 2018.

\bibitem[Rottmann \& Reese(2023)Rottmann and Reese]{rottmann2023automated}
Rottmann, M. and Reese, M.
\newblock Automated detection of label errors in semantic segmentation datasets via deep learning and uncertainty quantification.
\newblock In \emph{Proceedings of the IEEE/CVF Winter Conference on Applications of Computer Vision}, pp.\  3214--3223, 2023.

\bibitem[Ru et~al.(2023)Ru, Zheng, Zhan, and Du]{Ru_2023_CVPR}
Ru, L., Zheng, H., Zhan, Y., and Du, B.
\newblock Token contrast for weakly-supervised semantic segmentation.
\newblock In \emph{Proceedings of the IEEE/CVF Conference on Computer Vision and Pattern Recognition (CVPR)}, pp.\  3093--3102, June 2023.

\bibitem[Saran et~al.(2023)Saran, Yousefi, Krishnamurthy, Langford, and Ash]{saran2023streaming}
Saran, A., Yousefi, S., Krishnamurthy, A., Langford, J., and Ash, J.~T.
\newblock Streaming active learning with deep neural networks.
\newblock In \emph{Proc. International Conference on Machine Learning (ICML)}, pp.\  30005--30021. PMLR, 2023.

\bibitem[Shin et~al.(2021)Shin, Xie, and Albanie]{shin2021all}
Shin, G., Xie, W., and Albanie, S.
\newblock All you need are a few pixels: semantic segmentation with pixelpick.
\newblock In \emph{Proc. IEEE/CVF International Conference on Computer Vision (ICCV)}, 2021.

\bibitem[Siddiqui et~al.(2020)Siddiqui, Valentin, and Nie{\ss}ner]{siddiqui2020viewal}
Siddiqui, Y., Valentin, J., and Nie{\ss}ner, M.
\newblock Viewal: Active learning with viewpoint entropy for semantic segmentation.
\newblock In \emph{Proceedings of the IEEE/CVF conference on computer vision and pattern recognition}, pp.\  9433--9443, 2020.

\bibitem[Sinha et~al.(2019)Sinha, Ebrahimi, and Darrell]{sinha2019variational}
Sinha, S., Ebrahimi, S., and Darrell, T.
\newblock Variational adversarial active learning.
\newblock In \emph{Proceedings of the IEEE/CVF International Conference on Computer Vision}, pp.\  5972--5981, 2019.

\bibitem[Smailagic et~al.(2018)Smailagic, Costa, Noh, Walawalkar, Khandelwal, Galdran, Mirshekari, Fagert, Xu, Zhang, et~al.]{smailagic2018medal}
Smailagic, A., Costa, P., Noh, H.~Y., Walawalkar, D., Khandelwal, K., Galdran, A., Mirshekari, M., Fagert, J., Xu, S., Zhang, P., et~al.
\newblock Medal: Accurate and robust deep active learning for medical image analysis.
\newblock In \emph{2018 17th IEEE international conference on machine learning and applications (ICMLA)}, pp.\  481--488. IEEE, 2018.

\bibitem[Song et~al.(2022)Song, Kim, Park, Shin, and Lee]{song2022learning}
Song, H., Kim, M., Park, D., Shin, Y., and Lee, J.-G.
\newblock Learning from noisy labels with deep neural networks: A survey.
\newblock \emph{IEEE Transactions on Neural Networks and Learning Systems}, 2022.

\bibitem[Van~den Bergh et~al.(2012)Van~den Bergh, Boix, Roig, de~Capitani, and Van~Gool]{van2012seeds}
Van~den Bergh, M., Boix, X., Roig, G., de~Capitani, B., and Van~Gool, L.
\newblock Seeds: Superpixels extracted via energy-driven sampling.
\newblock In \emph{Computer Vision--ECCV 2012: 12th European Conference on Computer Vision, Florence, Italy, October 7-13, 2012, Proceedings, Part VII 12}, pp.\  13--26. Springer, 2012.

\bibitem[Wang et~al.(2023)Wang, Zhang, Du, Xu, Liu, Tao, and Zhang]{wang2023samrs}
Wang, D., Zhang, J., Du, B., Xu, M., Liu, L., Tao, D., and Zhang, L.
\newblock Samrs: Scaling-up remote sensing segmentation dataset with segment anything model.
\newblock In \emph{Thirty-seventh Conference on Neural Information Processing Systems Datasets and Benchmarks Track}, 2023.

\bibitem[Wang et~al.(2016)Wang, Zhang, Li, Zhang, and Lin]{wang2016cost}
Wang, K., Zhang, D., Li, Y., Zhang, R., and Lin, L.
\newblock Cost-effective active learning for deep image classification.
\newblock \emph{IEEE Transactions on Circuits and Systems for Video Technology (TCSVT)}, 2016.

\bibitem[Wu et~al.(2021)Wu, Chen, Zhong, and Wang]{wu2021hal}
Wu, X., Chen, C., Zhong, M., and Wang, J.
\newblock Hal: Hybrid active learning for efficient labeling in medical domain.
\newblock \emph{Neurocomputing}, 456:\penalty0 563--572, 2021.

\bibitem[Xiao et~al.(2015)Xiao, Xia, Yang, Huang, and Wang]{xiao2015learning}
Xiao, T., Xia, T., Yang, Y., Huang, C., and Wang, X.
\newblock Learning from massive noisy labeled data for image classification.
\newblock In \emph{Proceedings of the IEEE conference on computer vision and pattern recognition}, pp.\  2691--2699, 2015.

\bibitem[Yang et~al.(2023)Yang, Wang, Wu, Chen, and Zhao]{yang2023towards}
Yang, J., Wang, H., Wu, S., Chen, G., and Zhao, J.
\newblock Towards controlled data augmentations for active learning.
\newblock In \emph{Proc. International Conference on Machine Learning (ICML)}, pp.\  39524--39542. PMLR, 2023.

\bibitem[Yang et~al.(2017)Yang, Zhang, Chen, Zhang, and Chen]{yang2017suggestive}
Yang, L., Zhang, Y., Chen, J., Zhang, S., and Chen, D.~Z.
\newblock Suggestive annotation: A deep active learning framework for biomedical image segmentation.
\newblock In \emph{International conference on medical image computing and computer-assisted intervention}, pp.\  399--407. Springer, 2017.

\bibitem[Zhang et~al.(2022)Zhang, Zhang, Xie, Li, Qiu, Hu, and Tian]{zhang2022one}
Zhang, Y., Zhang, X., Xie, L., Li, J., Qiu, R.~C., Hu, H., and Tian, Q.
\newblock One-bit active query with contrastive pairs.
\newblock In \emph{Proceedings of the IEEE/CVF Conference on Computer Vision and Pattern Recognition}, pp.\  9697--9705, 2022.

\end{thebibliography}
\bibliographystyle{icml2024}

\clearpage
\appendix

\section{Text Prompts for Warm-start}
\label{app:text-prompts}
In Section~\ref{sec:initial-dataset-preparation}, for the warm-start process, we generate initial pseudo labels through a sequence of text prompts described. For example, we employ “Road. Sidewalk. Building. … Bicycle.” prompts for Cityscapes and “Aeroplane. Bicycle. Bird. … Tvmonitor.” for PASCAL, where each word aligns with the respective target class. However, each prompt, such as “Diningtable”, can be segmented into multiple tokens, such as “Dining” and “table”. Therefore, we assign each token to its corresponding class to derive the initial labels for the warm-start process.

\section{User Study with Different Queries}
\label{app:user-study}
To verify the efficiency of the proposed correction query $C_{\textnormal{cor}}$ in Active Label Correction (ALC) compared to classification query $C_{\textnormal{cls}}$ in conventional Active Learning (AL),
we conduct a user study focusing on actual labeling costs, specifically annotation time.
The example of the correction query questionnaire is illustrated in Figure~\ref{fig:cor-query}, and the results are summarized in Table~\ref{tab:user-study-results}.
Each question presents the user with instructions, an image with an object highlighted, and options for classifying the object.
For the correction query scenario, the instructions include the pseudo label of the foundation model, and users only need to correct if the pseudo label is incorrect.
The detailed instruction for correction query is given as follows:

\begin{center}
\it Is this pixel a \textbf{TV}? \\
\it Give the correct label only if the pseudo label is incorrect.
\end{center}

On the other hand, the example instruction for classification query is given as follows:
\begin{center}
\it Give the correct label of the pixel.
\end{center}


Based on the ground-truth,
we collect 20 images consisting of 10 images with correct pseudo labels,
and 10 images with incorrect pseudo labels counterparts, i.e., $p = 0.5$.
We ask for labels of these images in both correction queries and classification queries.
A total of 20 volunteers participates in the survey.
To prevent the user from memorizing images, we only ask one type of query per user, which means we ask the correction queries to 10 users and the classification queries to the others.
The responses from annotators are evaluated by calculating the accuracy of the classification prediction.
As shown in Table~\ref{tab:user-study-results}, the correction query only requires 75\% labeling time of that of the classification query.
In terms of accuracy, both queries show the same 95\%.

\section{Absolute Performance of ALC vs. AL}
\label{sec:abs-alc-al}
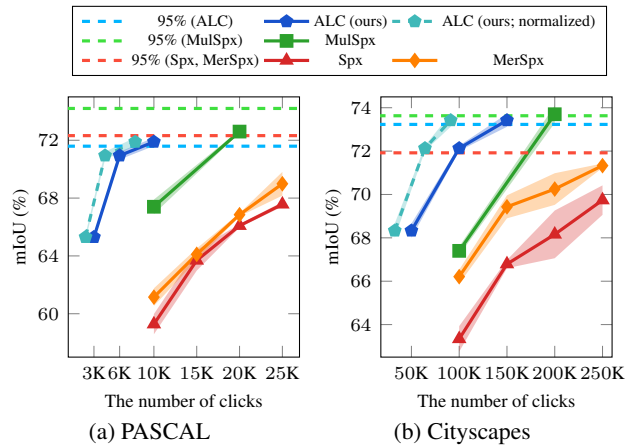
\begin{figure}[h!]
    \captionsetup[subfigure]{font=footnotesize,labelfont=footnotesize,aboveskip=0.05cm,belowskip=-0.15cm}
    \centering
    \hspace{-3mm}
    \begin{subfigure}{.47\linewidth}
        \centering
        \begin{tikzpicture}
            \begin{axis}[
                legend style={nodes={scale=0.6}, at={(2.33, 1.35)}},
                legend columns=3,
                xlabel={The number of clicks},
                ylabel={mIoU (\%)},
                width=1.2\linewidth,
                height=1.3\linewidth,
                ymin=57,
                ymax=75,
                xtick={3, 6, 10, 15, 20, 25},
                ytick={60, 64, 68, 72},
                xlabel style={yshift=0.15cm},
                ylabel style={yshift=-0.6cm},
                xmin=0,
                xmax=27,
                label style={font=\scriptsize},
                tick label style={font=\scriptsize},
                xticklabel={$\pgfmathprintnumber{\tick}$K}
            ]
            \addplot[cBlue4, very thick, mark size=2pt, mark options={solid}, dashed] coordinates {(0,-1)};
            \draw [cBlue4, very thick, dashed] (axis cs:-2,71.59) -- (axis cs:30,71.59);
            \addplot[cBlue, very thick, mark=pentagon*, mark size=2pt, mark options={solid}] table[col sep=comma, x=x, y=ALC]{Data/alc_pascal_abs.csv};
            \addplot[cBlue2, very thick, dashed, mark=pentagon*, mark size=2pt, mark options={solid}] table[col sep=comma, x=x, y=BALC]{Data/alc_pascal_abs.csv};
            \addplot[cGreen2, very thick, mark size=2pt, mark options={solid}, dashed] coordinates {(0,-1)};
            \draw [cGreen2, very thick, dashed] (axis cs:-2,74.2) -- (axis cs:30,74.2);
            \addplot[cGreen, very thick, mark=square*, mark size=2pt, mark options={solid}] table[col sep=comma, x=x, y=MulSpx]{Data/alc_pascal_abs.csv};
            \addplot[cWhite, very thick, mark size=2pt, mark options={solid}, dashed] coordinates {(0,-1)};
            \addplot[cRed4, very thick, mark size=2pt, mark options={solid}, dashed] coordinates {(0,-1)};
            \draw [cRed4, very thick, dashed] (axis cs:-2,72.32) -- (axis cs:30,72.32);
            \addplot[cRed, very thick, mark=triangle*, mark size=2pt, mark options={solid}] table[col sep=comma, x=x, y=Spx]{Data/alc_pascal_abs.csv};
            \addplot[orange, very thick, mark=diamond*, mark size=2pt, mark options={solid}] table[col sep=comma, x=x, y=MerSpx]{Data/alc_pascal_abs.csv};

            \addplot[name path=Spx-U, draw=none, fill=none] table[col sep=comma, x=x, y=Spx-U]{Data/alc_pascal_abs.csv};
            \addplot[name path=Spx-D, draw=none, fill=none] table[col sep=comma, x=x, y=Spx-D]{Data/alc_pascal_abs.csv};
            \addplot[cRed, fill opacity=0.3] fill between[of=Spx-U and Spx-D];
            \addplot[name path=MerSpx-U, draw=none, fill=none] table[col sep=comma, x=x, y=MerSpx-U]{Data/alc_pascal_abs.csv};
            \addplot[name path=MerSpx-D, draw=none, fill=none] table[col sep=comma, x=x, y=MerSpx-D]{Data/alc_pascal_abs.csv};
            \addplot[orange, fill opacity=0.3] fill between[of=MerSpx-U and MerSpx-D];
            \addplot[name path=MulSpx-U, draw=none, fill=none] table[col sep=comma, x=x, y=MulSpx-U]{Data/alc_pascal_abs.csv};
            \addplot[name path=MulSpx-D, draw=none, fill=none] table[col sep=comma, x=x, y=MulSpx-D]{Data/alc_pascal_abs.csv};
            \addplot[cGreen, fill opacity=0.3] fill between[of=MulSpx-U and MulSpx-D];
            \addplot[name path=ALC-U, draw=none, fill=none] table[col sep=comma, x=x, y=ALC-U]{Data/alc_pascal_abs.csv};
            \addplot[name path=ALC-D, draw=none, fill=none] table[col sep=comma, x=x, y=ALC-D]{Data/alc_pascal_abs.csv};
            \addplot[cBlue, fill opacity=0.3] fill between[of=ALC-U and ALC-D];
            \addplot[name path=BALC-U, draw=none, fill=none] table[col sep=comma, x=x, y=BALC-U]{Data/alc_pascal_abs.csv};
            \addplot[name path=BALC-D, draw=none, fill=none] table[col sep=comma, x=x, y=BALC-D]{Data/alc_pascal_abs.csv};
            \addplot[cBlue2, fill opacity=0.3] fill between[of=BALC-U and BALC-D];
            
            \legend{95\% (ALC), ALC (ours), ALC (ours; normalized), 95\% (MulSpx), {MulSpx}, { }, {95\% (Spx, MerSpx)}, Spx, MerSpx}
            \end{axis}
        \end{tikzpicture}
        \caption{PASCAL}
    \label{fig:alc-vs-al-pascal-abs}
    \end{subfigure}
    \hspace{1mm}    
    \begin{subfigure}{.47\linewidth}
        \centering
        \begin{tikzpicture}
            \begin{axis}[
                xlabel={The number of clicks},
                ylabel={mIoU (\%)},
                width=1.2\linewidth,
                height=1.3\linewidth,
                ymin=62.5,
                ymax=74.5,
                xtick={50, 100, 150, 200, 250},
                ytick={62, 64, 66, 68, 70, 72, 74},
                xlabel style={yshift=0.15cm},
                ylabel style={yshift=-0.6cm},
                xmin=18,
                xmax=260,
                label style={font=\scriptsize},
                tick label style={font=\scriptsize},
                xticklabel={$\pgfmathprintnumber{\tick}$K}
            ]
            \draw [cBlue4, very thick, dashed] (axis cs:-2, 73.23) -- (axis cs:300,73.23);
            \addplot[cBlue, very thick, mark=pentagon*, mark size=2pt, mark options={solid}] table[col sep=comma, x=x, y=ALC]{Data/alc_city_abs.csv};
            \addplot[cBlue2, very thick, dashed, mark=pentagon*, mark size=2pt, mark options={solid}] table[col sep=comma, x=x, y=BALC]{Data/alc_city_abs.csv};
            \draw [cGreen2, very thick, dashed] (axis cs:-2,73.63) -- (axis cs:300,73.63);
            \addplot[cGreen, very thick, mark=square*, mark size=2pt, mark options={solid}] table[col sep=comma, x=x, y=MulSpx]{Data/alc_city_abs.csv};
            \draw [cRed4, very thick, dashed] (axis cs:-2,71.92) -- (axis cs:300,71.92);
            \addplot[cRed, very thick, mark=triangle*, mark size=2pt, mark options={solid}] table[col sep=comma, x=x, y=Spx]{Data/alc_city_abs.csv};
            \addplot[orange, very thick, mark=diamond*, mark size=2pt, mark options={solid}] table[col sep=comma, x=x, y=MerSpx]{Data/alc_city_abs.csv};

            \addplot[name path=Spx-U, draw=none, fill=none] table[col sep=comma, x=x, y=Spx-U]{Data/alc_city_abs.csv};
            \addplot[name path=Spx-D, draw=none, fill=none] table[col sep=comma, x=x, y=Spx-D]{Data/alc_city_abs.csv};
            \addplot[cRed, fill opacity=0.3] fill between[of=Spx-U and Spx-D];
            \addplot[name path=MerSpx-U, draw=none, fill=none] table[col sep=comma, x=x, y=MerSpx-U]{Data/alc_city_abs.csv};
            \addplot[name path=MerSpx-D, draw=none, fill=none] table[col sep=comma, x=x, y=MerSpx-D]{Data/alc_city_abs.csv};
            \addplot[orange, fill opacity=0.3] fill between[of=MerSpx-U and MerSpx-D];
            \addplot[name path=MulSpx-U, draw=none, fill=none] table[col sep=comma, x=x, y=MulSpx-U]{Data/alc_city_abs.csv};
            \addplot[name path=MulSpx-D, draw=none, fill=none] table[col sep=comma, x=x, y=MulSpx-D]{Data/alc_city_abs.csv};
            \addplot[cGreen, fill opacity=0.3] fill between[of=MulSpx-U and MulSpx-D];
            \addplot[name path=ALC-U, draw=none, fill=none] table[col sep=comma, x=x, y=ALC-U]{Data/alc_city_abs.csv};
            \addplot[name path=ALC-D, draw=none, fill=none] table[col sep=comma, x=x, y=ALC-D]{Data/alc_city_abs.csv};
            \addplot[cBlue, fill opacity=0.3] fill between[of=ALC-U and ALC-D];
            \addplot[name path=BALC-U, draw=none, fill=none] table[col sep=comma, x=x, y=BALC-U]{Data/alc_city_abs.csv};
            \addplot[name path=BALC-D, draw=none, fill=none] table[col sep=comma, x=x, y=BALC-D]{Data/alc_city_abs.csv};
            \addplot[cBlue2, fill opacity=0.3] fill between[of=BALC-U and BALC-D];

            \end{axis}
        \end{tikzpicture}
        \caption{Cityscapes}
    \end{subfigure}
    \caption{{\em Effect of active label correction.} Our \sftype{ALC} shows comparable results on both datasets with much fewer clicks. Our \sftype{ALC (normalized)} reflects the reduced budget of correction queries in Theorem~\ref{the:queries}.} 
    \label{fig:alc-vs-al-abs}
\end{figure}

While conventional AL for semantic segmentation methods use the same DeepLab-v3+~\cite{chen2018encoder} segmentation decoder combined with backbone pre-trained with the ImageNet~\cite{deng2009imagenet} dataset, the architecture of their backbones are slightly different.
\sftype{ALC} (ours) utilize plain ResNet101, \sftype{MulSpx}~\cite{hwang2023active} use ResNet101 combined with deepstem tricks~\cite{he2019bag}, and \sftype{MerSpx}~\cite{Kim_2023_ICCV} and \sftype{Spx}~\cite{cai2021revisiting} employ Xception-65~\cite{chollet2017xception}.
Figure~\ref{fig:alc-vs-al} presents the performance in terms of recovery rate relative to a fully supervised model, calculated as the ratio of our model's performance to that of the fully supervised model.

Here, we additionally report the comparison with absolute mIoU in Figure~\ref{fig:alc-vs-al-abs} over various budget levels, represented by the number of clicks, for both PASCAL and Cityscapes datasets.
The 95\% performance of each baseline's fully supervised model is illustrated with a dashed line labeled as \sftype{95\% ($\cdot$)}.
Our proposed \sftype{ALC} method consistently demonstrate the most efficient performance.

\begin{figure*}[!t]
    \centering
    \begin{subfigure}[h!]{.245\linewidth}
        \centering
        \includegraphics[scale=0.1706]{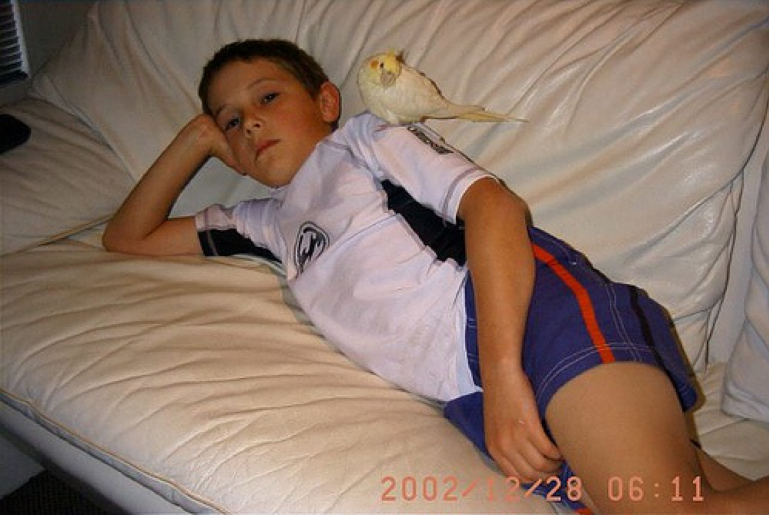}
    \end{subfigure}
    \begin{subfigure}[h!]{.245\linewidth}
        \centering
        \includegraphics[scale=0.1706]{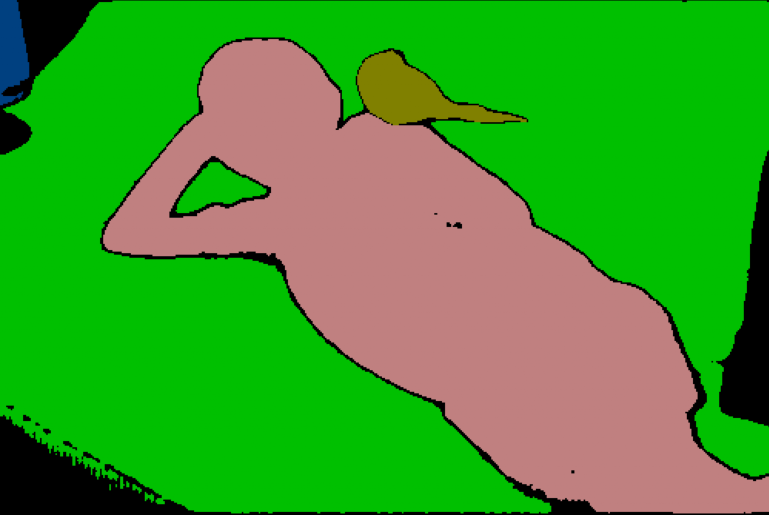}
    \end{subfigure}
    \begin{subfigure}[h!]{.245\linewidth}
        \centering
        \includegraphics[scale=0.1706]{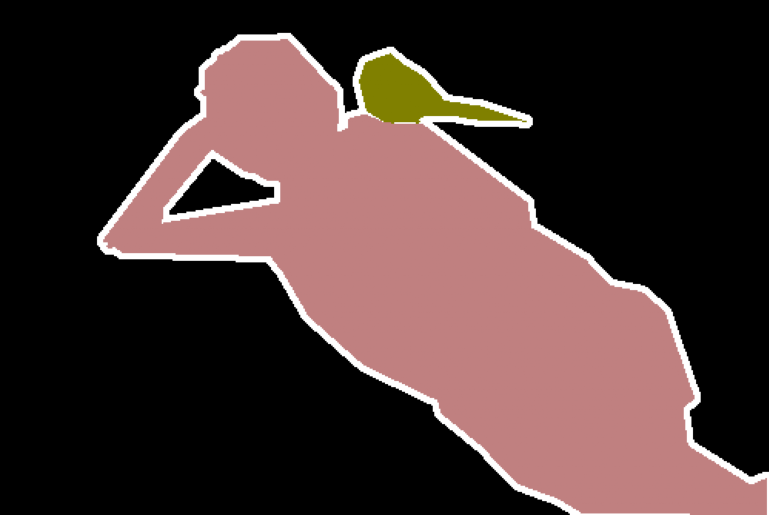}
    \end{subfigure}
    \begin{subfigure}[h!]{.245\linewidth}
        \centering
        \includegraphics[scale=0.1706]{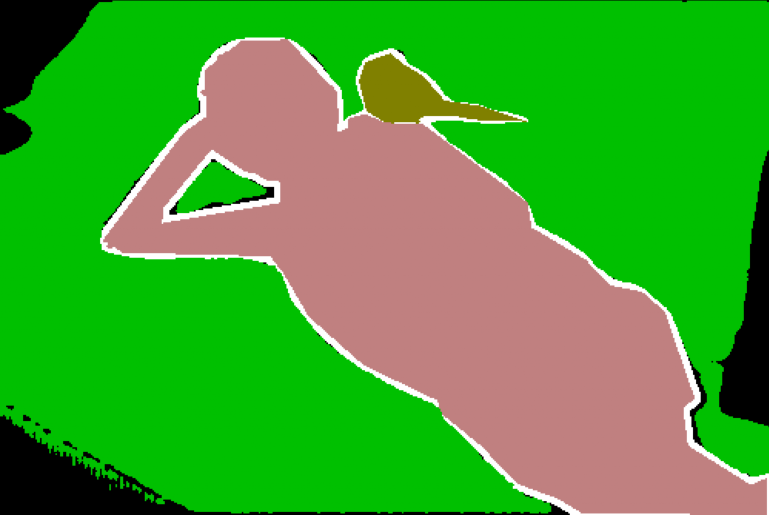}
    \end{subfigure}

    \begin{subfigure}[h!]{.245\linewidth}
        \centering
        \includegraphics[scale=0.191]{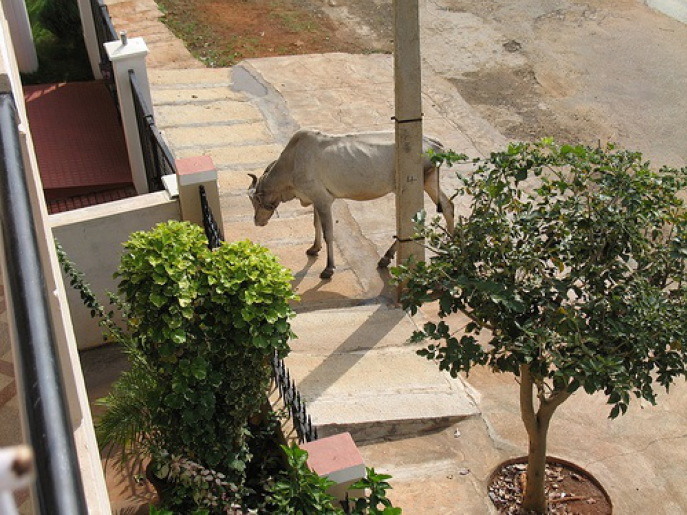}
    \end{subfigure}
    \begin{subfigure}[h!]{.245\linewidth}
        \centering
        \includegraphics[scale=0.191]{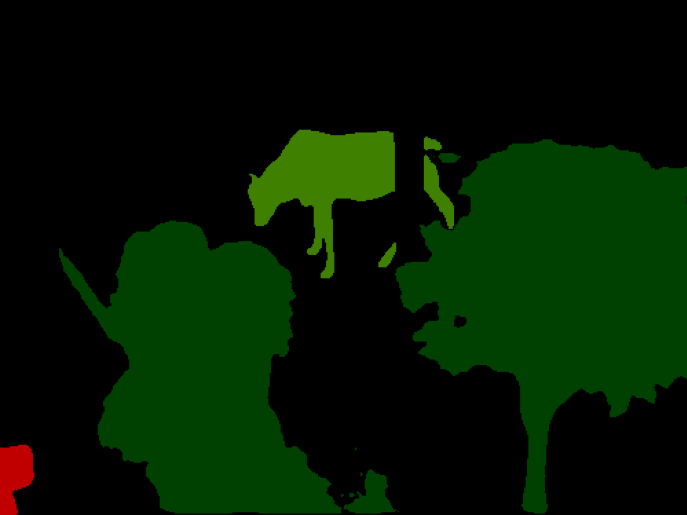}
    \end{subfigure}
    \begin{subfigure}[h!]{.245\linewidth}
        \centering
        \includegraphics[scale=0.191]{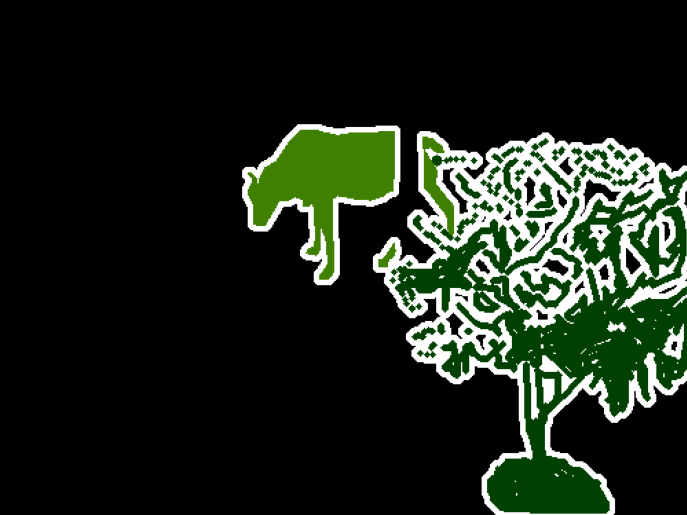}
    \end{subfigure}
    \begin{subfigure}[h!]{.245\linewidth}
        \centering
        \includegraphics[scale=0.191]{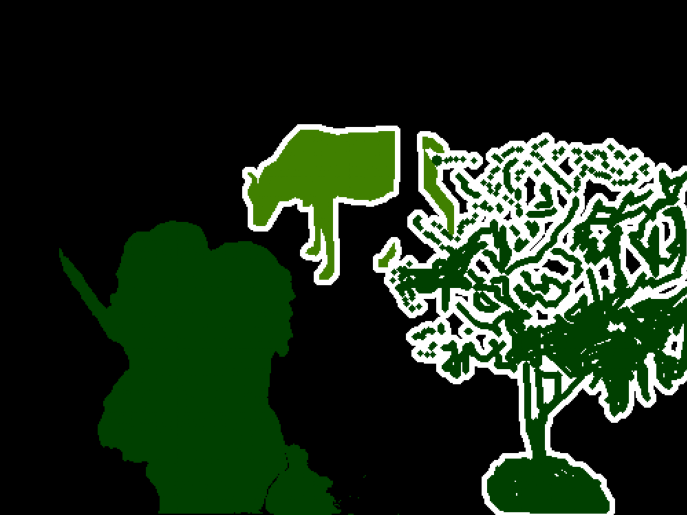}
    \end{subfigure}
    
    \begin{subfigure}[h!]{.245\linewidth}
        \centering
        \includegraphics[scale=0.191]{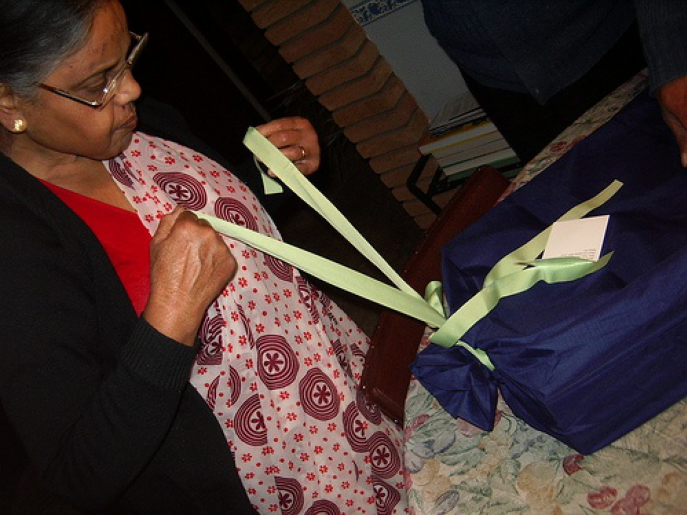}
        \caption{Unlabeled image}
    \end{subfigure}
    \begin{subfigure}[h!]{.245\linewidth}
        \centering
        \includegraphics[scale=0.191]{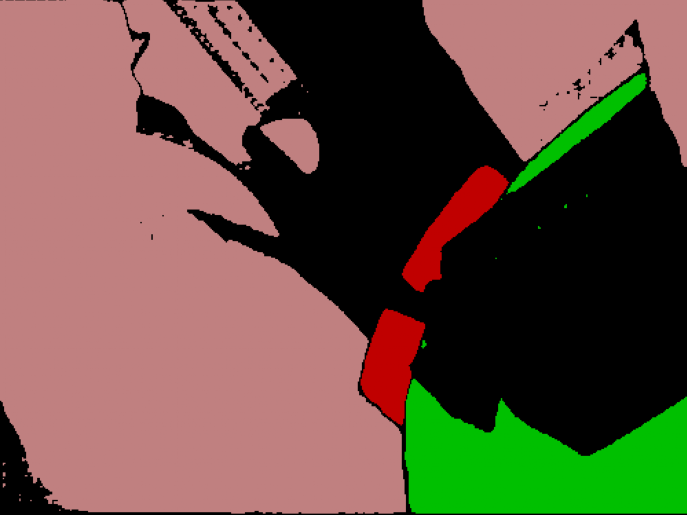}
        \caption{Grounded-SAM}
    \end{subfigure}
    \begin{subfigure}[h!]{.245\linewidth}
        \centering
        \includegraphics[scale=0.191]{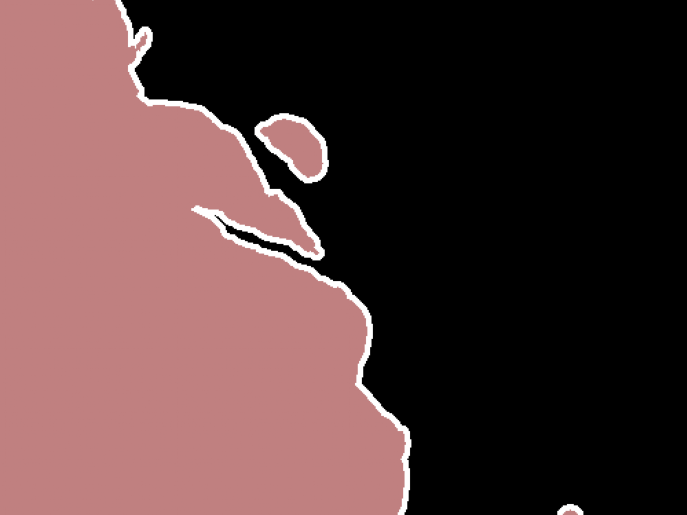}
        \caption{PASCAL}
    \end{subfigure}
    \begin{subfigure}[h!]{.245\linewidth}
        \centering
        \includegraphics[scale=0.191]{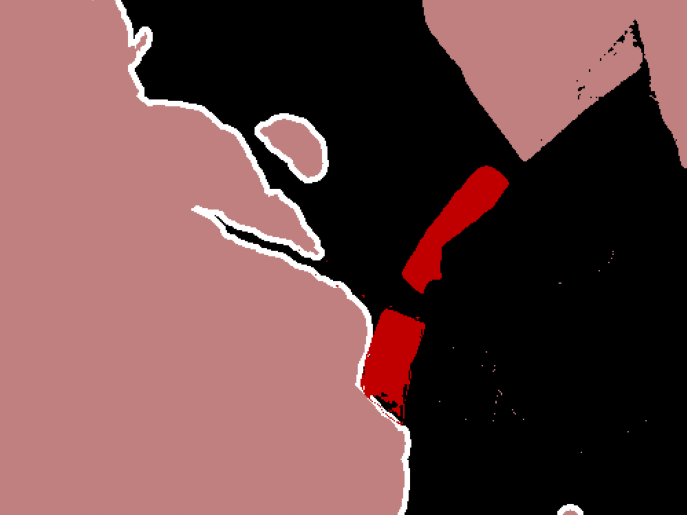}
        \caption{PASCAL+ (ours)}
    \end{subfigure}
    \caption{{\em Additional examples of noisy and corrected labels in PASCAL.} We correct PASCAL into PASCAL+ utilizing the superpixels of Grounded-SAM.}
    \label{fig:extra_qual}
\end{figure*}

\section{Ablation Studies}

\subsection{Initial Pseudo Labels}
\begin{table}[h!]
\caption{{\em Performance of initial pseudo labels from Grounded-SAM.} Noisy pseudo labels cause the data and model mIoU to worsen.}
\label{tab:grounded-threshold}
\centering
\begin{center}
\begin{small}
\setlength\tabcolsep{3pt}
\begin{tabular}{c|ccc}
\toprule
Box-thresohld & \# of objects & Data mIoU (\%) & Model mIoU (\%) \\ \midrule
0.2 & 11,257 & 55.32 & 59.04 \\ 
0.3 & 5,995 & 65.14 & 66.15 \\
0.4 & 3,890 & 66.71 & 65.30 \\
0.5 & 2,798 & 60.87 & 59.50 \\
\bottomrule
\end{tabular}
\end{small}
\end{center}
\end{table}


To evaluate the quality of pseudo labels generated by Grounded-SAM~\cite{liu2023grounding} on the PASCAL dataset, we measure the data and model mIoU while adjusting a hyperparameter.
Grounded-SAM operates with two hyperparameters: box-threshold and text-threshold.
The text-threshold aims to identify all potential classes with a potential value exceeding the threshold.
As we only focus on a specific class per a object, we employ the argmax function on the potential classes.
The box-threshold determines the confidence level in the bounding box of the identified object.
With a lower box-threshold, the foundation model can detect more objects, as demonstrated in Table~\ref{tab:grounded-threshold}.
However, this often leads to numerous incorrectly labeled objects, resulting in decreased mIoU for both data and model.
Yet, the benefit of detecting lots of objects lies in the potential for enhanced performance when correcting the pseudo labels of all detected objects, resulting in model mIoU of 72.59\%, 70.90\%, and 66.97\% for box-thresholds of 0.2, 0.3, and 0.4, respectively.

\subsection{Similarity Threshold for Label Expansion}
\begin{table}[h!]
\caption{{\em Similarity threshold.} For correction, we select 5K pixels from the initial labels and adjust the extent of label expansion.}
\label{tab:epsilon}
\centering
\begin{center}
\begin{small}
\setlength\tabcolsep{6pt}
\begin{tabular}{c|cc}
\toprule
$\epsilon$ & Data mIoU (\%) & Model mIoU (\%) \\ \midrule
$0.0$ & $83.34$ & $68.71$ \\
$0.2$ & $82.85$ & $68.48$ \\
$0.4$ & $82.11$ & $68.58$ \\
$0.6$ & $81.17$ & $68.48$ \\
$0.8$ & $80.18$ & $68.32$ \\
$1.0$ & $55.61$ & $56.05$ \\
\bottomrule
\end{tabular}
\end{small}
\end{center}
\end{table}

During the label expansion phase detailed in Section~\ref{sec:look-ahead}, a challenge can emerge when superpixels contain pixels belonging to various classes, potentially diminishing the dataset's overall quality. 
To this end, we propose expanding the clean label of a pixel $x_i$ only to similar pixels within its corresponding superpixel $s_i$ as follows:
\begin{equation}
s_i(x_i,\epsilon) := \{ x \in s_i : \text{cos} \big( f_\theta(x_i), f_\theta(x) \big) \ge \epsilon)\}\;,
\label{eq:epsilon}
\end{equation}
where the degree of expansion is determined by hyperparameter $\epsilon$.
The more incomplete the superpixel, the larger $\epsilon$ is required.
For our main experiments in Section~\ref{sec:exp}, we set $\epsilon$ as 0, indicating complete expansion, where $s_i(x_i,\epsilon) = s_i$.
Here, we investigate how the value of $\epsilon$ in~\eqref{eq:epsilon} affects results.
Since foundation models accurately generate superpixel boundaries in PASCAL, we observe that setting $\epsilon$ to 0, thereby allowing the corrected pixel label to cover the entire superpixel, yields the best performance, as demonstrated in Table~\ref{tab:epsilon}.

\begin{figure*}[!t]
    \centering
    \begin{subfigure}[h!]{.245\linewidth}
        \centering
        \includegraphics[scale=0.17]{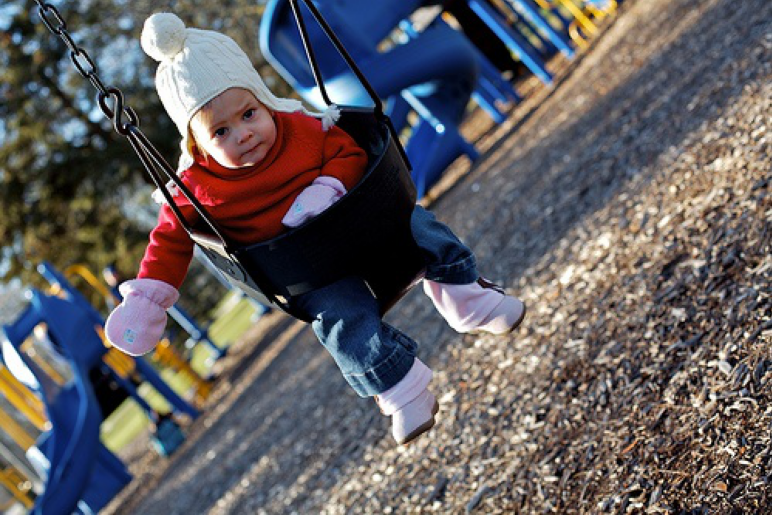}
    \end{subfigure}
    \begin{subfigure}[h!]{.245\linewidth}
        \centering
        \includegraphics[scale=0.17]{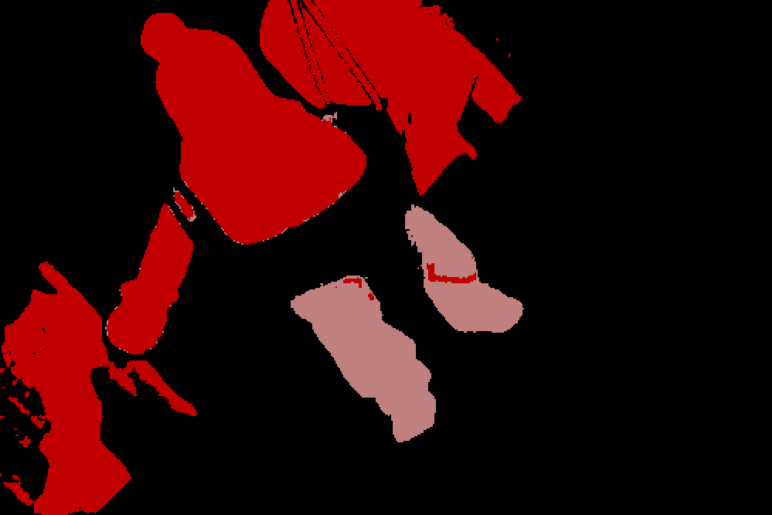}
    \end{subfigure}
    \begin{subfigure}[h!]{.245\linewidth}
        \centering
        \includegraphics[scale=0.17]{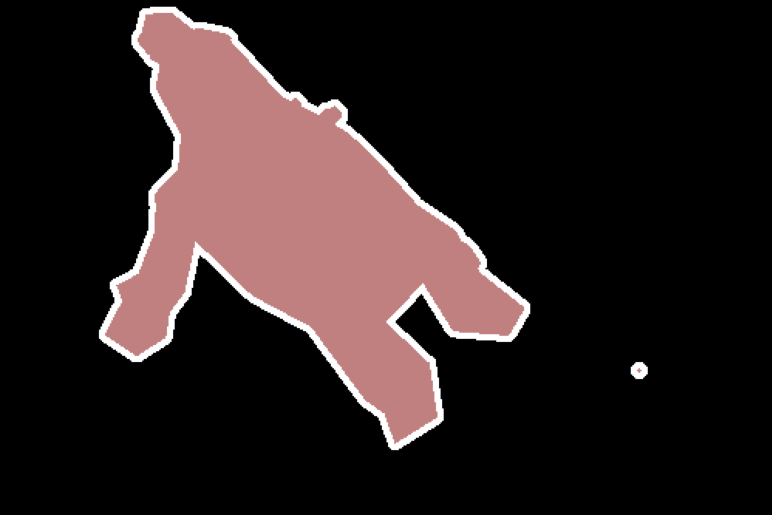}
    \end{subfigure}
    \begin{subfigure}[h!]{.245\linewidth}
        \centering
        \includegraphics[scale=0.17]{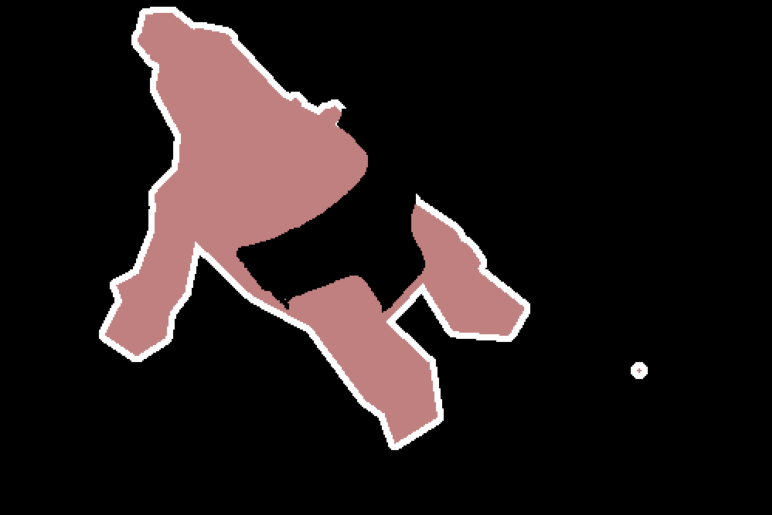}
    \end{subfigure}

    \begin{subfigure}[h!]{.245\linewidth}
        \centering
        \includegraphics[scale=0.168]{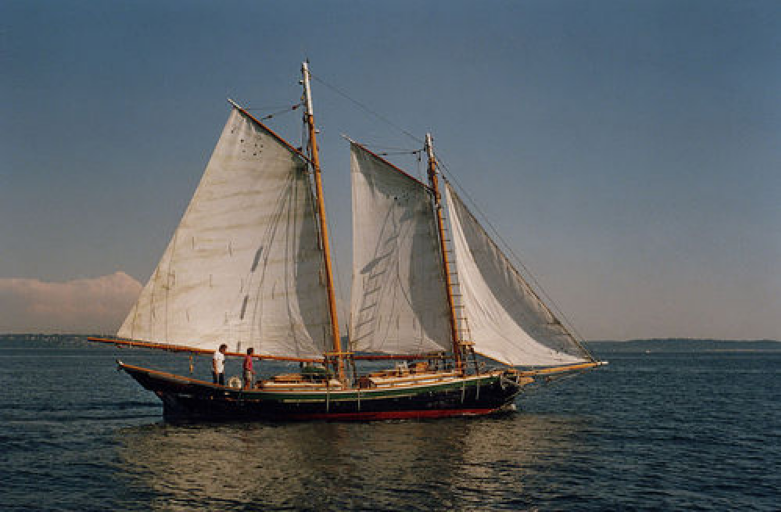}
    \end{subfigure}
    \begin{subfigure}[h!]{.245\linewidth}
        \centering
        \includegraphics[scale=0.168]{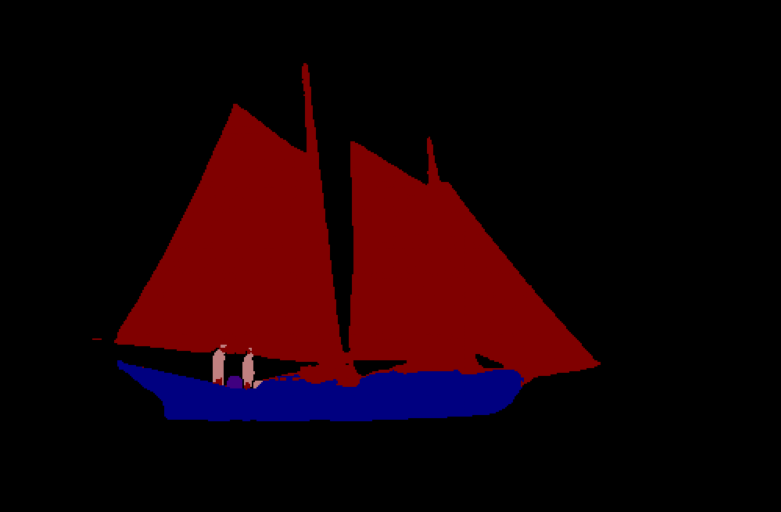}
    \end{subfigure}
    \begin{subfigure}[h!]{.245\linewidth}
        \centering
        \includegraphics[scale=0.168]{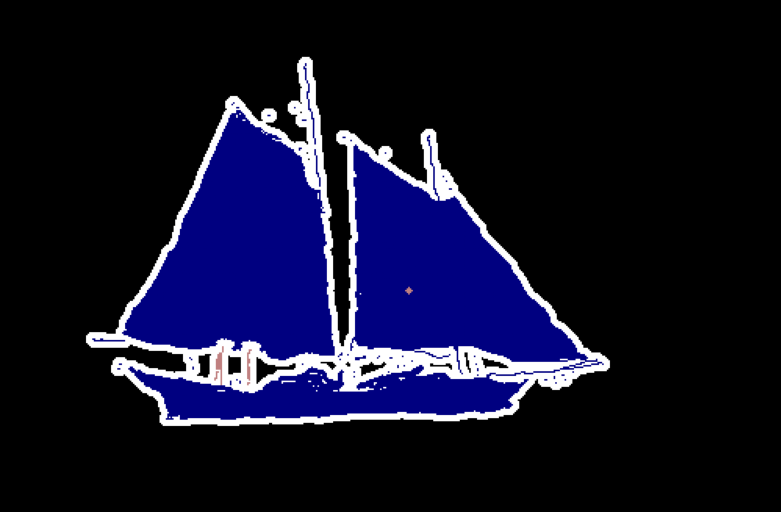}
    \end{subfigure}
    \begin{subfigure}[h!]{.245\linewidth}
        \centering
        \includegraphics[scale=0.168]{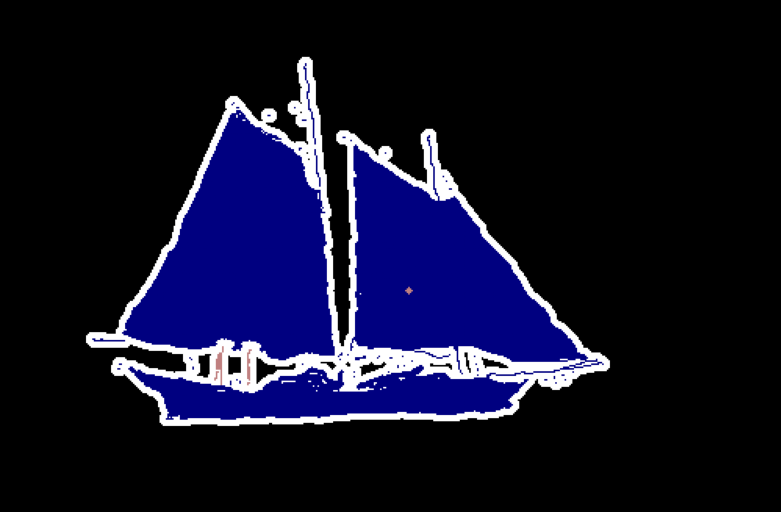}
    \end{subfigure}
    
    \begin{subfigure}[h!]{.245\linewidth}
        \centering
        \includegraphics[scale=0.191]{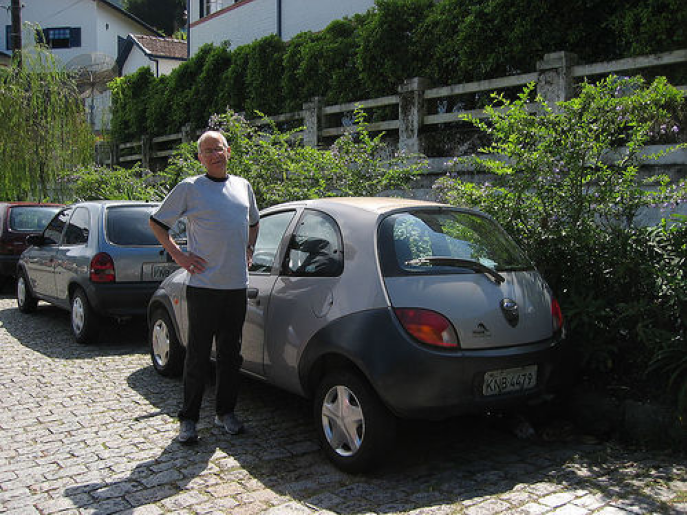}
        \caption{Unlabeled image}
    \end{subfigure}
    \begin{subfigure}[h!]{.245\linewidth}
        \centering
        \includegraphics[scale=0.191]{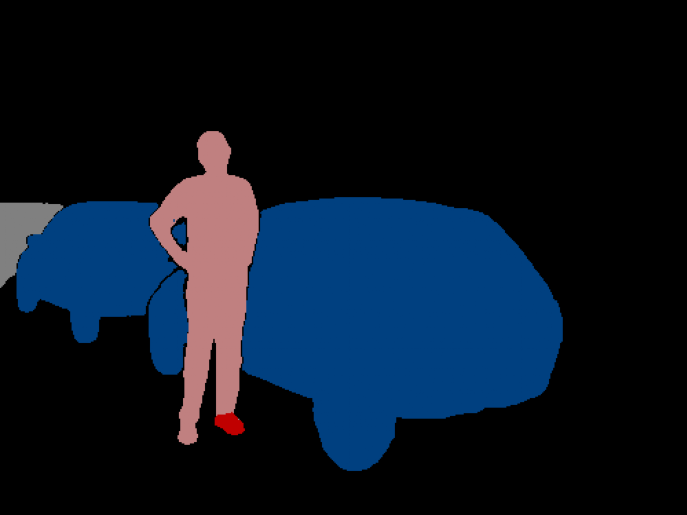}
        \caption{Grounded-SAM}
    \end{subfigure}
    \begin{subfigure}[h!]{.245\linewidth}
        \centering
        \includegraphics[scale=0.191]{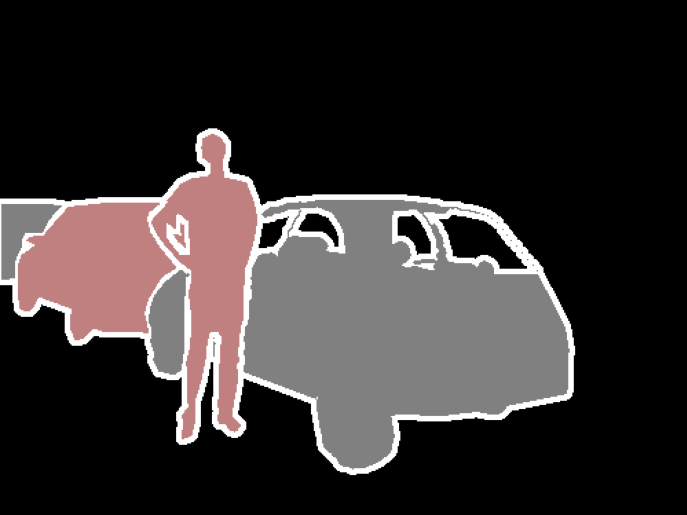}
        \caption{PASCAL}
    \end{subfigure}
    \begin{subfigure}[h!]{.245\linewidth}
        \centering
        \includegraphics[scale=0.191]{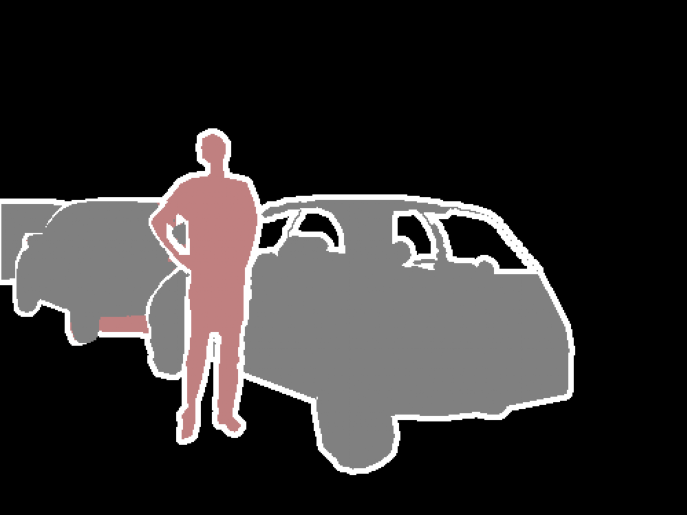}
        \caption{PASCAL+ (ours)}
    \end{subfigure}
    \caption{{\em Uncorrectable examples of noisy and corrected labels in PASCAL.} We correct PASCAL into PASCAL+ utilizing the superpixels of Grounded-SAM, however, due to the inherent limitations of superpixels, some failure cases can be observed.}
    \label{fig:error_qual}
\end{figure*}


\subsection{Comparison with Other Diversified Pixel Pool}
\begin{table}[h!]
\caption{{\em Experiments for diversified pixel pools.} With 5K budgets, we select pixels from different pixel pools and correct the initial labels to PASCAL.}
\label{tab:diversity}
\centering
\begin{center}
\begin{small}
\setlength\tabcolsep{6pt}
\begin{tabular}{c|cc}
\toprule
Methods & Data mIoU ($\%$) & Model mIoU ($\%$) \\ \midrule
PixelPick & 66.88 & 62.59 \\
ALC & 83.60 & 68.71 \\ 
\bottomrule
\end{tabular}
\end{small}
\end{center}
\end{table}

To solve the issue of picking similar pixels, as described in Section~\ref{sec:diversified-pixel-pool}, \sftype{PixelPick} employs an acquisition function to rank all pixels, subsequently uniformly selecting them from the top 5\% ranked pixels in each image~\cite{shin2021all}.
Thus, we contrast our diversified pixel pool based on superpixels with the PixelPick method.
For a fair comparison, we incorporate all other techniques, including \sftype{SIM} acqusition equipped with the concept of look-ahead and label expansion. 
As shown in Table~\ref{tab:diversity}, our \sftype{ALC} performs better than \sftype{PixelPick} in terms of both data and model mIoU.

\begin{figure*}[!t]
    \centering
    \begin{subfigure}[h!]{.33\linewidth}
        \centering
        \includegraphics[scale=0.175]{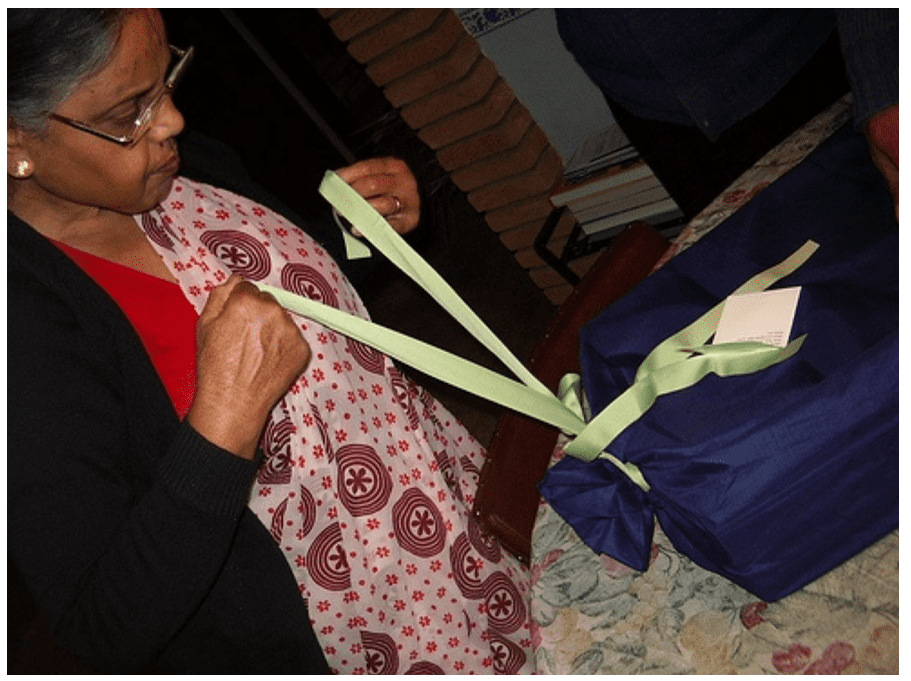}
    \end{subfigure}
    \begin{subfigure}[h!]{.33\linewidth}
        \centering
        \includegraphics[scale=0.175]{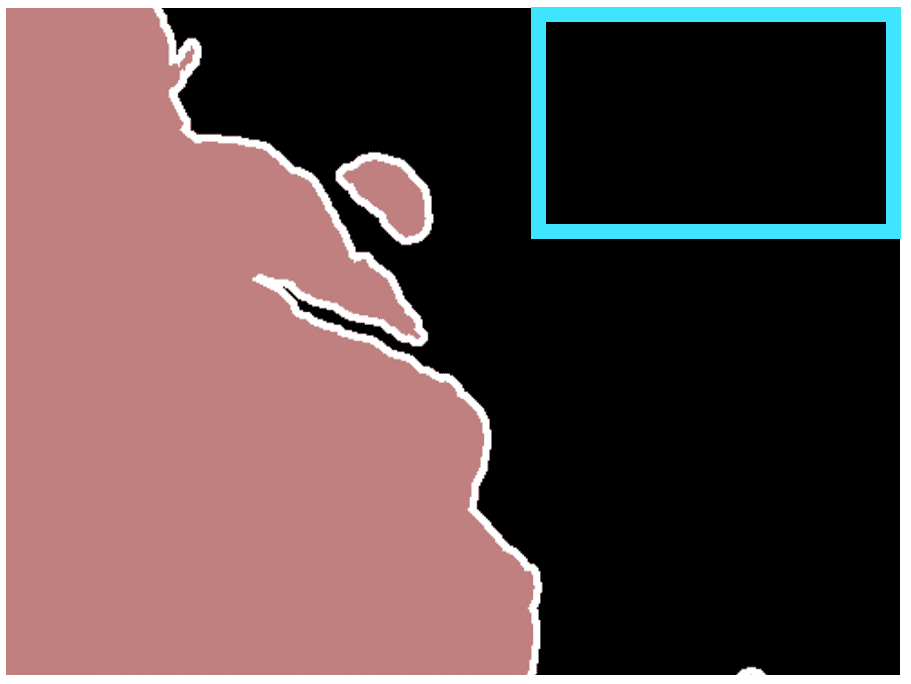}
    \end{subfigure}
    \begin{subfigure}[h!]{.33\linewidth}
        \centering
        \includegraphics[scale=0.175]{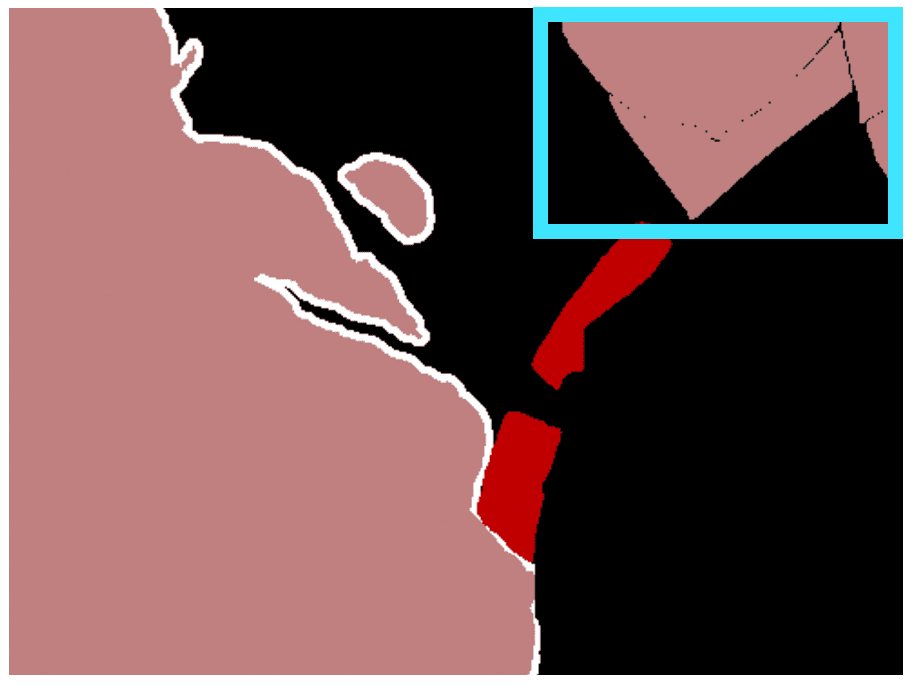}
    \end{subfigure}

    \begin{subfigure}[h!]{.33\linewidth}
        \centering
        \includegraphics[scale=0.175]{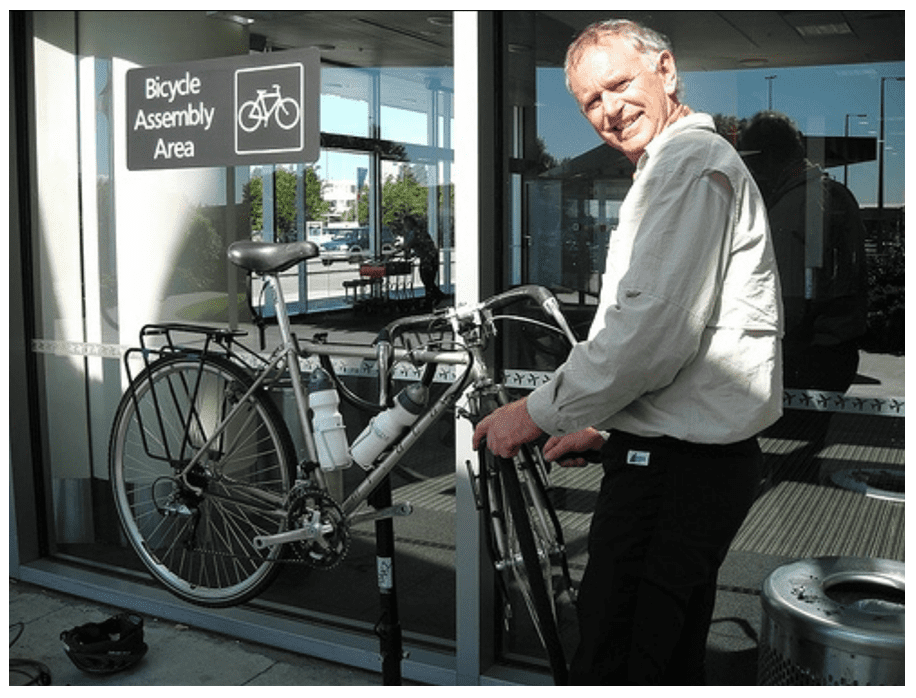}
        \caption{Image}
    \end{subfigure}
    \begin{subfigure}[h!]{.33\linewidth}
        \centering
        \includegraphics[scale=0.175]{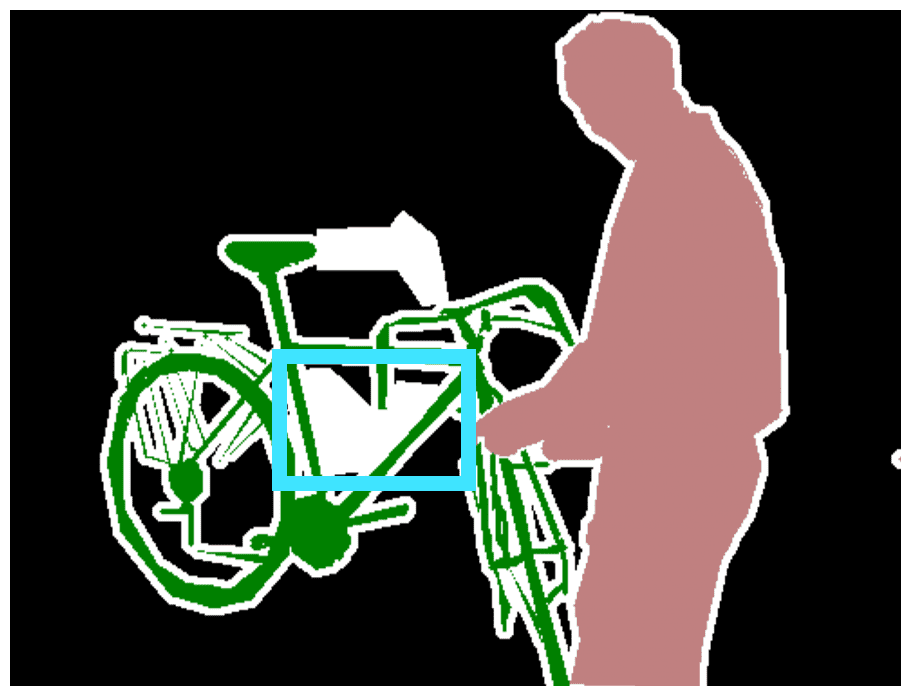}
        \caption{PASCAL}
    \end{subfigure}
    \begin{subfigure}[h!]{.33\linewidth}
        \centering
        \includegraphics[scale=0.175]{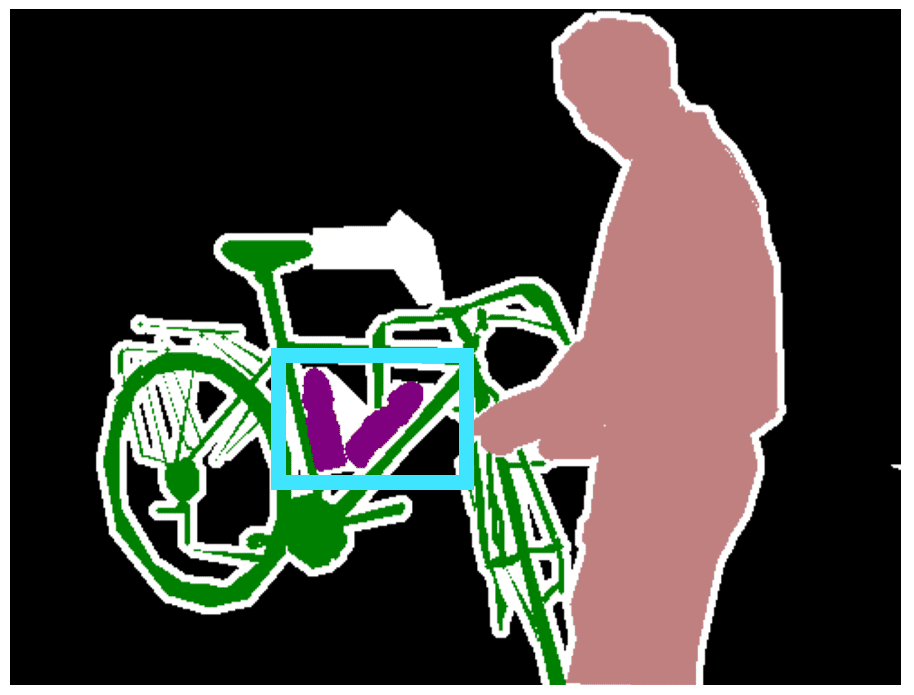}
        \caption{PASCAL+}
    \end{subfigure}

    \caption{{\em Correction that appears to cause negative IoU gains.} Here, the colors black, red, purple, green, and pink represent the background, chair, bottle, bicycle, and person classes, respectively.}
    \label{fig:negative-iou}
\end{figure*}

\subsection{Comparison with Other Acquisitions}
\begin{table}[h!]
\caption{{\em Experiments with other acquisitions.} With 5K budgets, we select pixels from different pixel pools and correct the initial labels to PASCAL.}
\label{tab:acquisition}
\centering
\begin{center}
\begin{small}
\setlength\tabcolsep{6pt}
\begin{tabular}{c|c}
\toprule
Methods & Model mIoU ($\%$) \\ \midrule
Entropy & $57.09 \pm 0.40$ \\
BvSB & $57.58 \pm 0.41$ \\ 
ClassBal & $57.51 \pm 0.67$ \\
SIM & $65.30 \pm 0.21$ \\ 
\bottomrule
\end{tabular}
\end{small}
\end{center}
\end{table}

In Table~\ref{tab:acquisition}, our SIM acquisition outperforms other various acquisitions including Entropy, Best-versus-Second-Best (BvSB), and Class-Balanced (ClassBal), employed in active learning, due to the incorporation of the look-ahead concept.
We concentrate on adjusting the acquisition function, while simultaneously applying other techniques such as diversified pixel pool and expansion techniques. We correct the labels of 3K pixels selected using various acquisition functions, and expand the labels to their corresponding superpixels.

\subsection{Class IoU on PASCSAL}
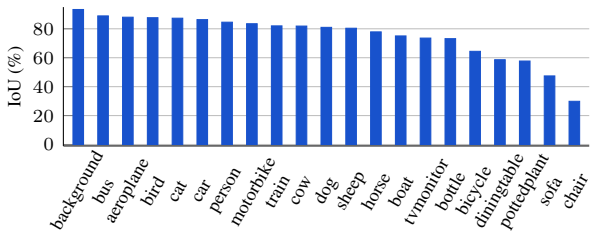
\begin{figure}[h!]
\captionsetup[subfigure]{font=footnotesize,labelfont=footnotesize,aboveskip=0.05cm,belowskip=-0.15cm}
\centering
\begin{tikzpicture}
    \begin{axis}[
        label style={font=\scriptsize},
        tick label style={font=\scriptsize},
        xticklabel style={anchor=center, yshift=-4.5mm, font=\scriptsize},
        symbolic x coords={
            \rotatebox{60}{background},
            \rotatebox{60}{bus},
            \rotatebox{60}{aeroplane},
            \rotatebox{60}{bird},
            \rotatebox{60}{cat},
            \rotatebox{60}{car},
            \rotatebox{60}{person},
            \rotatebox{60}{motorbike},
            \rotatebox{60}{train},
            \rotatebox{60}{cow},
            \rotatebox{60}{dog},
            \rotatebox{60}{sheep},
            \rotatebox{60}{horse},
            \rotatebox{60}{boat},
            \rotatebox{60}{tvmonitor},
            \rotatebox{60}{bottle},
            \rotatebox{60}{bicycle},
            \rotatebox{60}{diningtable},
            \rotatebox{60}{pottedplant},
            \rotatebox{60}{sofa},
            \rotatebox{60}{chair},
        },
        axis y line*=left,
        axis x line=bottom,
        width=0.5\textwidth,
        height=0.2\textwidth,
        major x tick style = transparent,
        ybar=3*\pgflinewidth,
        bar width=4pt,
        ymajorgrids=true,
        ylabel={IoU (\%)},
        xtick=data,
        scaled y ticks=false,
        enlarge x limits=0.03,
        axis line style={-},
        ymin=-1,
        ymax=95,
        xlabel style={yshift=0.4cm},
        ylabel style={yshift=-0.6cm},
    ]
        \addplot[style={cBlue,fill=cBlue,mark=none}] coordinates {
            (\rotatebox{60}{background}, 93.34)
            (\rotatebox{60}{bus}, 88.91)
            (\rotatebox{60}{aeroplane}, 88.01)
            (\rotatebox{60}{bird}, 87.66)
            (\rotatebox{60}{cat}, 87.32)
            (\rotatebox{60}{car}, 86.33)
            (\rotatebox{60}{person}, 84.56)
            (\rotatebox{60}{motorbike}, 83.50)
            (\rotatebox{60}{train}, 82.01)
            (\rotatebox{60}{cow}, 81.87)
            (\rotatebox{60}{dog}, 80.99)
            (\rotatebox{60}{sheep}, 80.38)
            (\rotatebox{60}{horse}, 77.86)
            (\rotatebox{60}{boat}, 75.06)
            (\rotatebox{60}{tvmonitor}, 73.57)
            (\rotatebox{60}{bottle}, 73.17)
            (\rotatebox{60}{bicycle}, 64.41)
            (\rotatebox{60}{diningtable}, 58.61)
            (\rotatebox{60}{pottedplant}, 57.70)
            (\rotatebox{60}{sofa}, 47.41)
            (\rotatebox{60}{chair}, 29.86)
        };
    \end{axis}
\end{tikzpicture}
\caption{{\em Class IoU on PASCAL.} The IoU values of diningtable, pottedplant, sofa, and, chair classes are relatively low when trained with PASCAL.}
\label{fig:pascal-iou}
\end{figure}

We provide the rationale of IoU gain in Figure~\ref{fig:stats-(a)}.
For a detail, thanks to the corrected PASCAL+, we observe that the IoU values of pottedplant, sofa, chair, and diningtable classes increase.
This is related to the number of corrections in Figure~\ref{fig:stats-(b)}, as those class are corrected lots than other classes.
However, in case of background and person classes, we cannot obtain IoU gain as those classes already attain high IoU with PASCAL as depicted in Figure~\ref{fig:pascal-iou}.

\section{Additional Results of PASCAL+}
\subsection{Qualitative Results}
\label{sec:extra_results}
Additional qualitative results of corrected labels using our proposed method are depicted in Figure~\ref{fig:extra_qual}.
These results demonstrate that our proposed correction method effectively identifies objects overlooked in the original labels.

\subsection{Uncorrectable Cases}
\label{sec:failure}
Figure~\ref{fig:error_qual} presents examples where corrections made by our proposed method are not entirely successful.
Specifically, the examples in the first and second rows of Figure~\ref{fig:error_qual} illustrate situations where annotators mistakenly assign pixel clicks to the wrong classes.
Such errors can occur under limited budgets. 
In the last row of Figure~\ref{fig:error_qual}, an area mislabeled as person class is effectively corrected to car class.
However, due to the insufficient granularity of the superpixels, small areas remain uncorrected.
This limitation can be mitigated by employing more refined superpixels or utilizing improved foundational models.

\subsection{Negative IoU Gains of PASCAL+}

Figure 6a represents negative IoU gains for certain classes such as person, bottle, and cow. Here, we provide the rationale for these negative gains. The final IoU gain is determined by the positive and negative impacts of corrections. Although corrections generally aim to reduce noisy labels, yielding positive effects, they can also have negative effects, especially on challenging objects, as shown in Figure~\ref{fig:negative-iou}.

\begin{figure*}[!t]
    \centering
    \begin{subfigure}[h!]{.45\linewidth}
        \centering
        \includegraphics[scale=0.4]{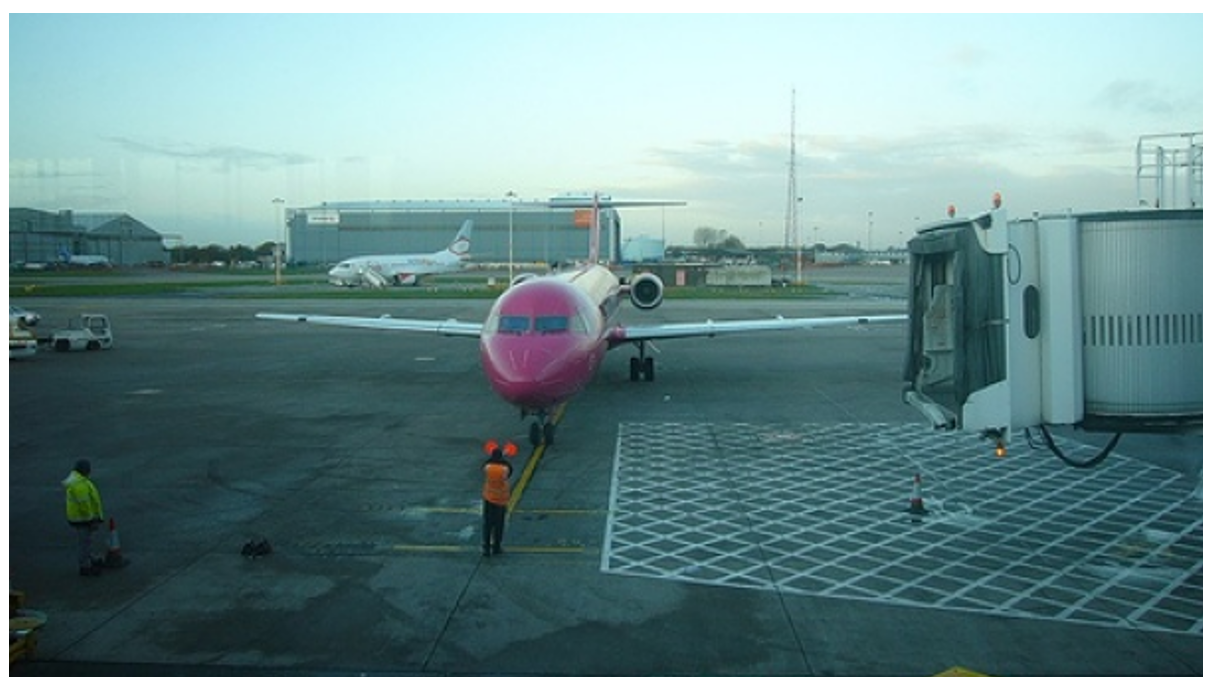}
        \caption{Unlabeled image}
    \end{subfigure}
    \begin{subfigure}[h!]{.45\linewidth}
        \centering
        \includegraphics[scale=0.4]{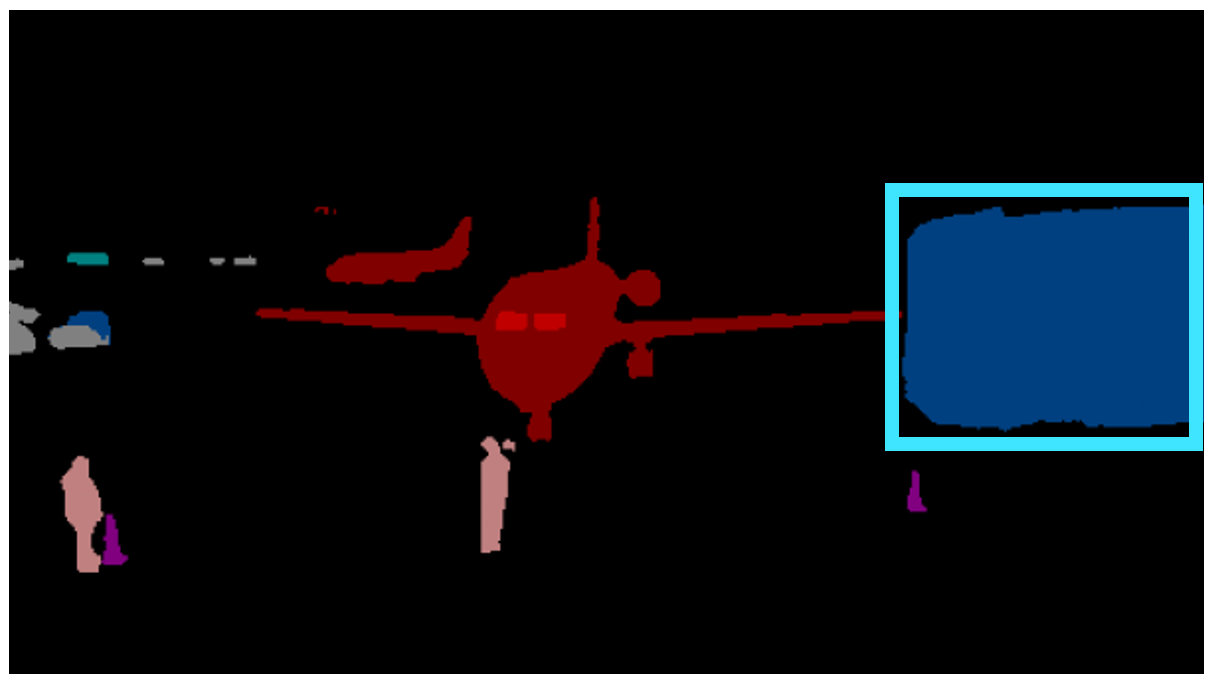}
        \caption{Round 0 (Grounded-SAM)}
    \end{subfigure}

    \begin{subfigure}[h!]{.45\linewidth}
        \centering
        \includegraphics[scale=0.4]{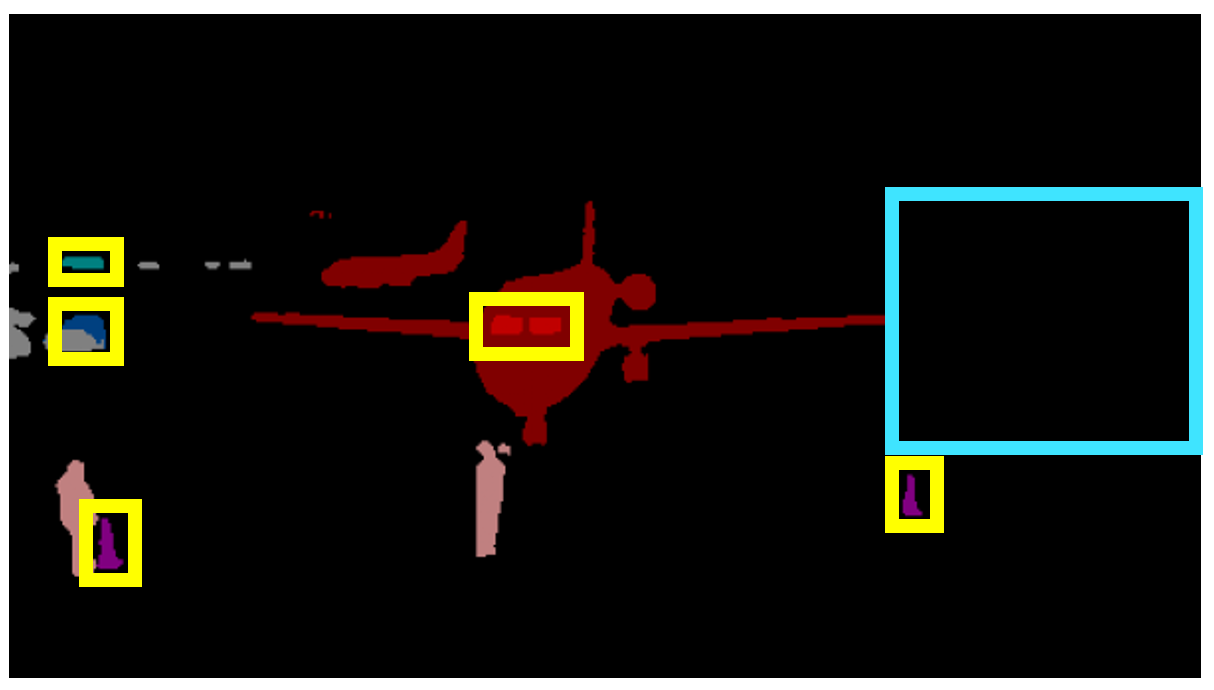}
        \caption{Round 1}
    \end{subfigure}
    \begin{subfigure}[h!]{.45\linewidth}
        \centering
        \includegraphics[scale=0.4]{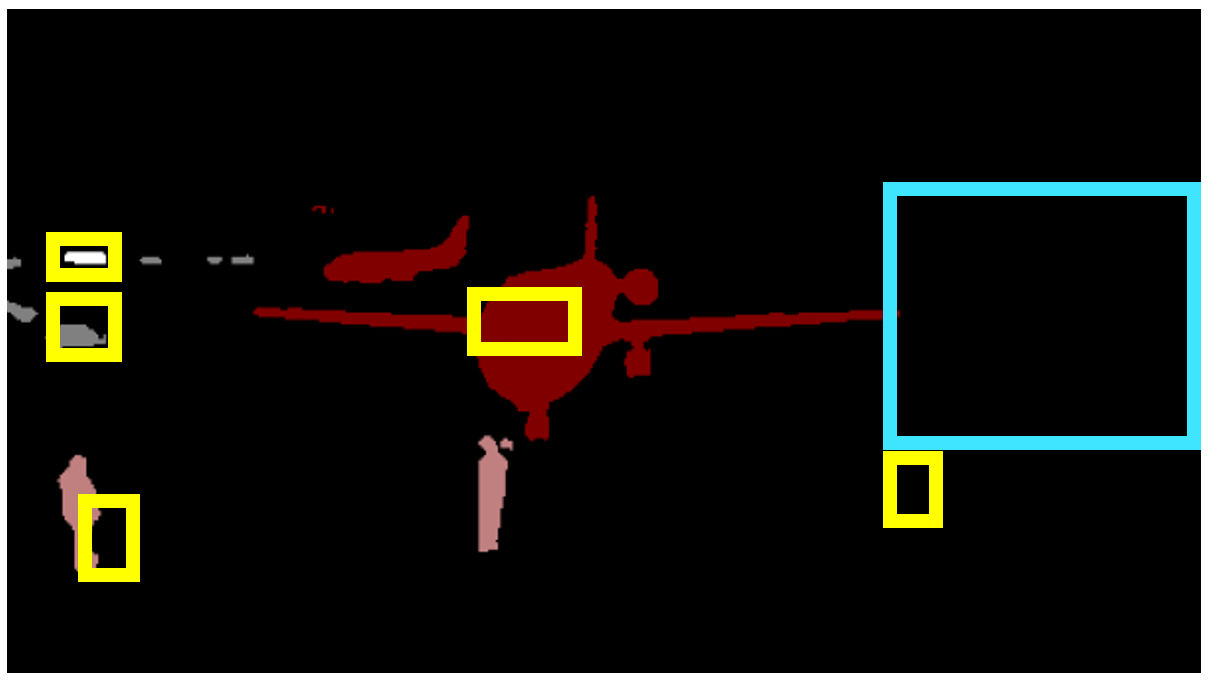}
        \caption{Round 2}
    \end{subfigure}
    \caption{{\em Segmentation changes through active label correction.} (b) The initial pseudo labels obtained from Grounded-SAM contain numerous noisy labels, exemplified by instances like tvmonior inside the cyan box. (c) In the first round, the object labeled as tvmonitor is corrected to background. Nonetheless, many noisy labels exist within the yellow boxes. (c) In the second round, we rectify all remaining noisy labels. With the help of the proposed look-ahead acquisition function, we prioritize correcting large objects before addressing small ones. Here, the colors black, blue, red, dark red, purple, and pink represent the background, tvmonitor, chair, airplane, bottle and person classes, respectively.}
\end{figure*}

\end{document}